\date{}
\renewcommand\footnotemark{}
\newtheorem{lem}{Lemma}
\newtheorem{thm}{Theorem}
\newtheorem{defn}{Definition}
\newtheorem{coro}{Corollary}
\newtheorem{rem}{Remark}
\newcommand{\E}[1]{\mathbb{E}\left[{#1}\right]}
\newcommand{\Esub}[2]{\mathbb{E}_{#1}\left[{#2}\right]}
\newcommand{\wts}{\mathbf{w}}
\newcommand{\mb}{m}
\newcommand{\iters}{J}
\newcommand{\workers}{P}
\newcommand{\lips}{L}
\newcommand{\Exp}{\textit{Exp}}
\crefname{equation}{}{}
\Crefname{equation}{}{}
\crefname{thm}{theorem}{theorems}
\Crefname{thm}{Theorem}{Theorems}
\crefname{clm}{claim}{claims}
\Crefname{clm}{Claim}{Claims}
\Crefname{coro}{Corollary}{Corollaries}
\Crefname{lem}{Lemma}{Lemmas}
\Crefname{sec}{Section}{Sections}
\crefname{app}{appendix}{appendices}
\Crefname{app}{Appendix}{Appendices}
\Crefname{part}{Part}{Parts}
\crefname{prop}{proposition}{propositions}
\Crefname{prop}{Proposition}{Propositions}
\Crefname{propty}{Property}{Properties}
\crefname{figure}{figure}{figures}
\Crefname{figure}{Figure}{Figures}
\crefname{defn}{definition}{definitions}
\Crefname{defn}{Definition}{Definitions}
\crefname{fact}{fact}{facts}
\Crefname{fact}{Fact}{Facts}
\crefname{appendix}{appendix}{appendices}
\Crefname{appendix}{Appendix}{Appendices}
\crefname{algo}{algorithm}{algorithms}
\Crefname{algo}{Algorithm}{Algorithms}
\crefname{algorithm}{algorithm}{algorithms}
\Crefname{algorithm}{Algorithm}{Algorithms}
\crefname{conj}{conjecture}{conjectures}
\Crefname{conj}{Conjecture}{Conjectures}
\crefname{obs}{observation}{observations}
\Crefname{obs}{Observation}{Observations}
\crefname{rem}{remark}{remarks}
\Crefname{rem}{Remark}{Remarks}
\begin{document}

\title{Slow and Stale Gradients Can Win the Race}

\author{Sanghamitra Dutta, Jianyu Wang and Gauri Joshi \\
\thanks{The authors are with the Department of Electrical and Computer Engineering, Carnegie Mellon University. Author Contacts: S. Dutta (sanghamd@andrew.cmu.edu), J. Wang (jianyuw1@andrew.cmu.edu), G. Joshi (gaurij@andrew.cmu.edu)}
\thanks{Some of the results have appeared in AISTATS 2018 (see \cite{dutta2018slow}). This is an extended version with additional results, in particular, an adaptive synchronicity strategy called AdaSync.}
\normalsize Carnegie Mellon University
}

\maketitle

\begin{abstract}
Distributed Stochastic Gradient Descent (SGD) when run in a synchronous manner, suffers from delays in \emph{runtime} as it waits for the slowest workers (stragglers). Asynchronous methods can alleviate stragglers, but cause gradient staleness that can adversely affect the convergence \emph{error}. In this work we present a novel theoretical characterization of the speedup offered by asynchronous methods by analyzing the trade-off between the \emph{error} in the trained model and the actual training \emph{runtime} (wallclock time). The main novelty in our work is that our runtime analysis considers random straggling delays, which helps us design and compare distributed SGD algorithms that strike a balance between straggling and staleness. We also provide a new error convergence analysis of asynchronous SGD variants without bounded or exponential delay assumptions. Finally, based on our theoretical characterization of the error-runtime trade-off, we propose a method of gradually varying synchronicity in distributed SGD and demonstrate its performance on CIFAR10 dataset.
\end{abstract}

\section{Introduction}

Stochastic gradient descent (SGD) is the backbone of most state-of-the-art machine learning algorithms. Thus, improving the stability and convergence rate of SGD algorithms is critical for making machine learning algorithms fast and efficient. Classical SGD was designed to be run on a single computing node, and its error-convergence with respect to the number of iterations has been extensively analyzed and improved in optimization and learning theory literature. Due to the massive training data-sets and deep neural network architectures used today, running SGD at a single node can be prohibitively slow. This calls for distributed implementations of SGD, where gradient computation and aggregation is parallelized across multiple worker nodes. Although parallelism boosts the amount of data processed per iteration, it exposes SGD to unpredictable node slowdown and communication delays stemming from variability in the computing infrastructure. Thus, there is a critical need to make distributed SGD fast, and yet robust to system variability.

The convergence speed of distributed SGD is a product of two factors: 1) the error in the trained model versus the number of iterations, and 2) the number of iterations completed per second. Traditional single-node SGD analysis focuses on optimizing the first factor, because the second factor is generally a constant when SGD is run on a single dedicated server. In distributed SGD, which is often run on shared cloud infrastructure, the second factor depends on several aspects such as the number of worker nodes, their gradient computation delays, and the protocol (synchronous or asynchronous) used to aggregate their gradients. Hence, in order to achieve the fastest convergence speed we need: 1) optimization techniques to maximize the error-convergence rate with respect to iterations, and 2) scheduling techniques to maximize the number of iterations completed per second. These directions are inter-dependent and need to be explored together rather than in isolation. While many works have advanced the first direction, the second is less explored from a theoretical point of view, and the juxtaposition of both is an unexplored problem. Our goal is to design SGD algorithms that easily lend themselves to distributed implementations, and are robust to fluctuations in computation and network delays as well as unpredictable node failures. This work improves the true convergence speed of distributed SGD with respect to wallclock time by jointly designing scheduling techniques to reduce per-iteration delay, and optimization algorithms to minimize error-versus-iterations.

A commonly used distributed SGD framework, which is first deployed at a large-scale in Google's DistBelief \cite{dean2012large}, is the parameter server framework, which consists of a central parameter server (PS) that is used to aggregate gradients computed by worker nodes as shown in \Cref{fig:overview} (a). In synchronous SGD, the PS waits for all workers to push gradients before it updates the model parameters. Random delays in computation (referred to as straggling) are common in today's distributed systems as pointed out in the influential work of \cite{dean2013tail}. Waiting for slow and straggling workers can diminish the speedup offered by parallelizing the training. To alleviate the problem of stragglers, SGD can be run in an asynchronous manner, where the central parameters are updated without waiting for all workers. However, workers may return \emph{stale} gradients that were evaluated at an older version of the model, and this can make the algorithm unstable. Synchronous SGD typically has better convergence error but has a higher wallclock runtime per iteration because it requires synchronization of straggling workers. On the other hand, asynchronous SGD has faster wallclock runtime per iteration but it also has higher convergence error due to the problem of gradient staleness. 

Our goal is to achieve the lower envelope of the error-runtime trade-offs achieved by synchronous and asynchronous SGD (see \Cref{fig:overview}(b)), which characterizes the best error-runtime trade-off. Towards achieving this goal, in this work we present a systematic theoretical analysis of the trade-off between error and the actual runtime (instead of iterations), modelling wallclock runtimes as random variables with a general distribution. 
Based on our analysis, we propose AdaSync, which is a method of adaptively increasing the number of nodes whose gradients are aggregated synchronously by the central PS. Our theoretical results are also substantiated with experiments on %MNIST~\cite{lecun1998mnist} and
CIFAR10~\cite{krizhevsky2009learning} dataset.

\begin{figure}[t]
\centering
 \begin{subfigure}{.4\textwidth}
    \centering
    \includegraphics[width=0.85\textwidth]{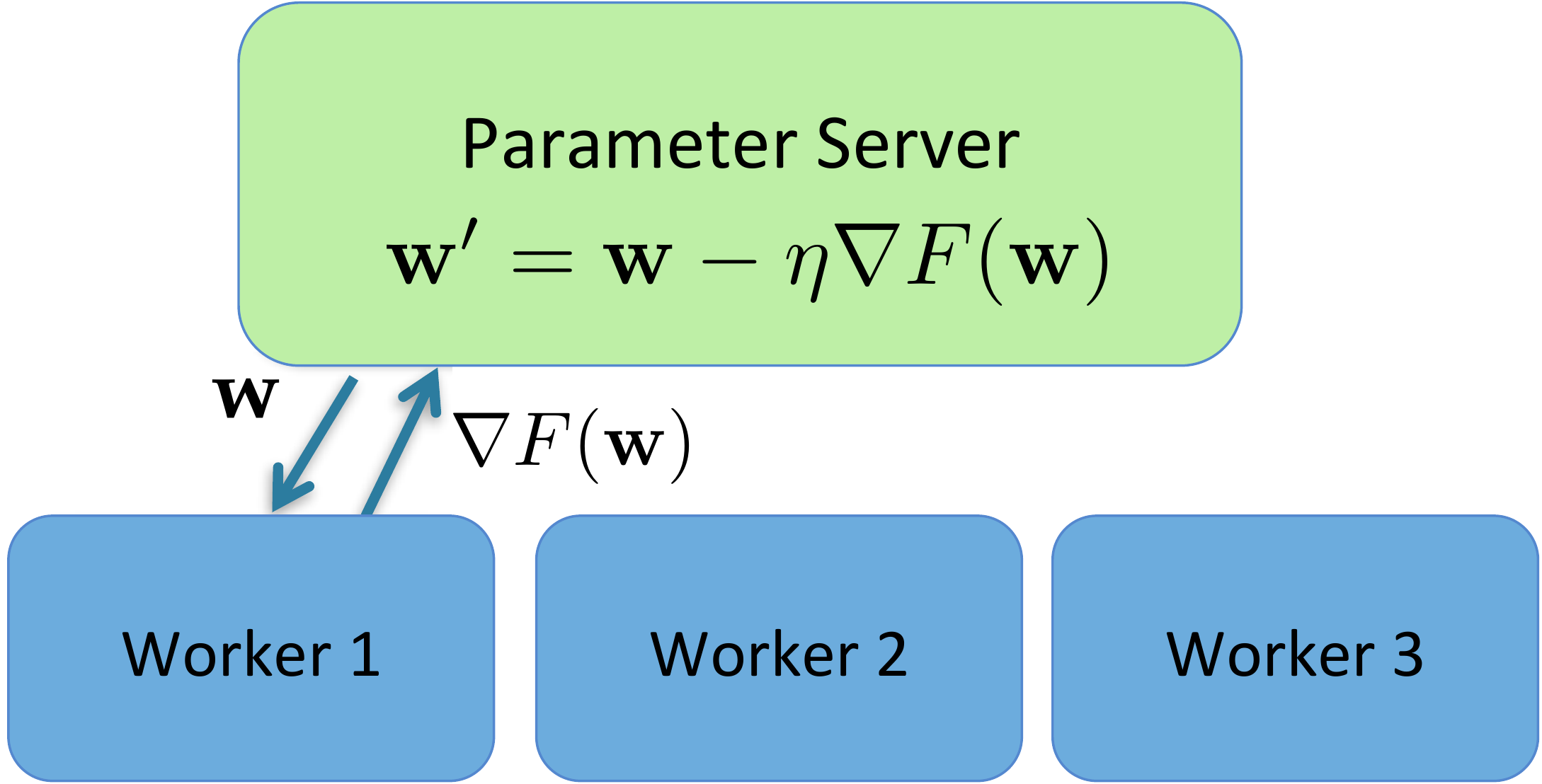}
    \label{fig:param_server}
 \end{subfigure}%
 ~
  \begin{subfigure}{.6\textwidth}
    \centering
    \includegraphics[width=0.8\textwidth]{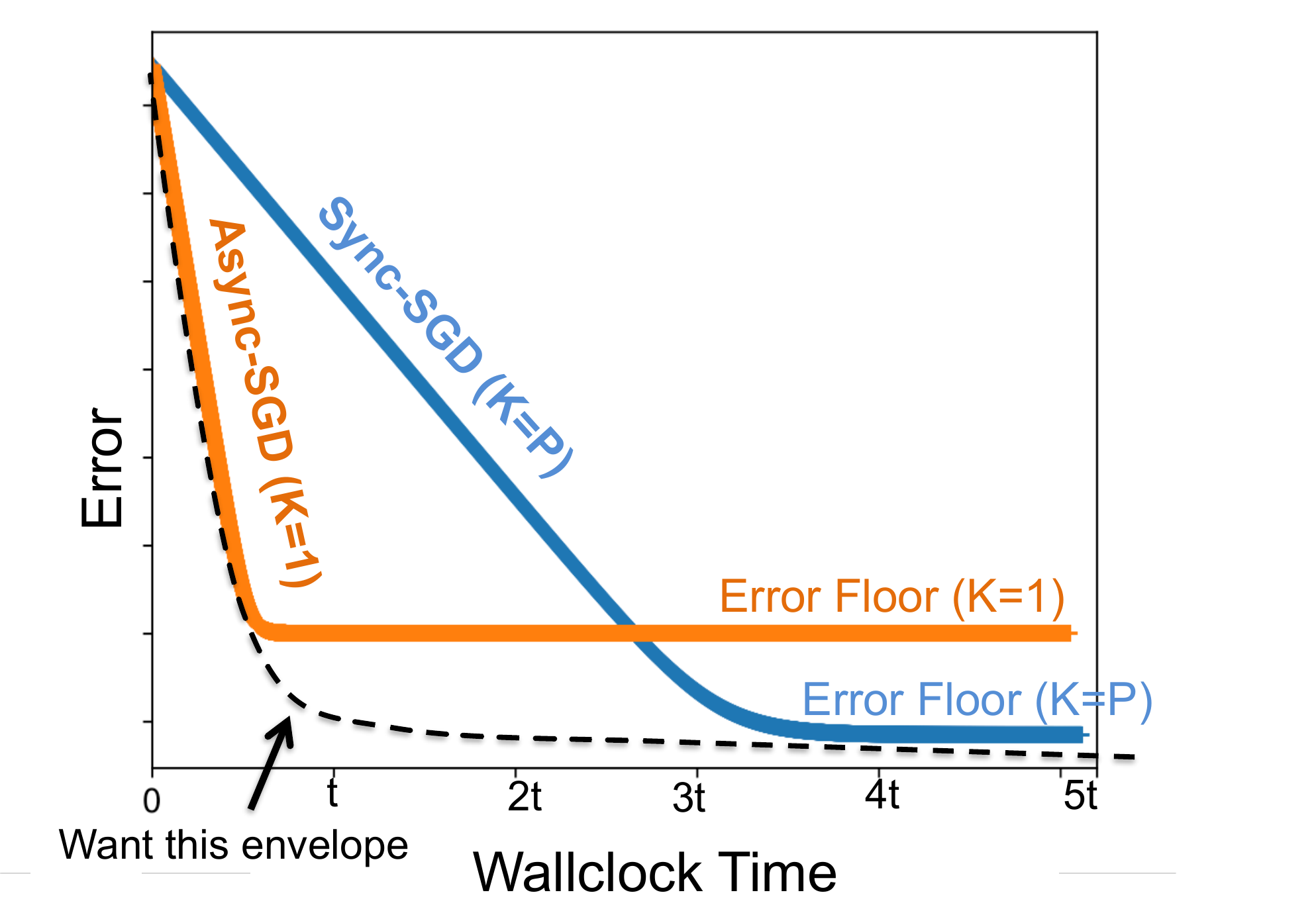}
%    \label{fig:param_server}
 \end{subfigure}%
\caption{(a) The parameter server framework (b) Synchronous SGD has lower error floor but higher runtime, while, asynchronous SGD converges faster but has a higher error floor. We want to achieve the the lower envelope between the two curves which characterizes the best error-runtime trade-off. \label{fig:overview}}
\end{figure}

\subsection{Related Works}

\textbf{Single Node SGD:}
Analysis of gradient descent dates back to classical works \cite{boyd2004convex} in the optimization community. The problem of interest is the minimization of empirical risk of the form:
\begin{equation}
\min_{\wts} \left\{ F(\wts)\overset{\text{def}}{=} \frac{1}{N}\sum_{n=1}^{N} f(\wts, \xi_n) \right\}.
\label{eq:minimization}
\end{equation}
Here, $\xi_n$ denotes the $n-$th data point and its label where $n=1,2,\dots,N$, and $f(\wts, \xi_n)$ denotes the composite loss function. Gradient descent is a way to iteratively minimize this objective function by updating the parameter $\wts$ in the opposite direction of the gradient of $F(\wts)$ at every iteration, as given by: $$\wts_{j+1}=\wts_{j}-\eta \nabla F(\wts_{j}) = \wts_{j} - \frac{\eta}{N} \sum_{n=1}^N \nabla f(\wts_{j}, \xi_n).$$
The computation of $\sum_{n=1}^N \nabla f(\wts_{j}, \xi_n)$ over the entire dataset is expensive. Thus, stochastic gradient descent \cite{robbins1951stochastic} with mini-batching is generally used in practice, where the gradient is evaluated over small, randomly chosen subsets of the data. Smaller mini-batches result in higher variance of the gradients, which affects convergence and error floor \cite{dekel2012optimal, li2014efficient, bottou2016optimization}. Algorithms such as AdaGrad \cite{duchi2011adaptive} and Adam \cite{kingma2015adam} gradually reduce learning rate to achieve a lower error floor. Another class of algorithms includes stochastic variation reduction techniques that include SVRG \cite{johnson2013accelerating}, SAGA \cite{roux2012stochastic} and their variants listed out in \cite{nguyen2017sarah}. For a detailed survey of different SGD variants, refer to \cite{ruder2016overview}.

\noindent
\textbf{Synchronous SGD and Stragglers:} 
To process large datasets, SGD is parallelized across multiple workers with a central PS. Each worker processes one mini-batch, and the PS aggregates all the gradients. The convergence of synchronous SGD is same as mini-batch SGD, with a $P$-fold larger mini-batch, where $P$ is the number of workers. However, the time per iteration grows with the number of workers, because some straggling workers that slow down randomly \cite{dean2013tail}. Thus, it is important to juxtapose the error reduction per iteration with the runtime per iteration to understand the true convergence speed of distributed SGD. 

To deal with stragglers and speed up machine learning, system designers have proposed several straggler mitigation techniques such as \cite{harlap2016addressing} that try to detect and avoid stragglers. An alternate direction of work is to use redundancy techniques, e.g., replication or erasure codes, as proposed in \cite{joshi2014delay,wang2015using,joshi2015queues, joshi2017efficient, lee2017speeding, tandon2017gradient,dutta2016short,halbawi2017improving,yang2017coded,yang2016fault,karakus2017encoded,karakus2017straggler,charles2017approximate,li2017terasort,fahim2017optimal,ye2018communication,li2018fundamental,NewsletterPaper,DNNPaperISIT,mallick2018rateless,dutta2017coded,sheth2018application} to deal with the stragglers, as also discussed in \Cref{rem:redundancy_techniques}. See also \cite{ozfatura2019speeding,al2019anytime,maity2019robust,reisizadeh2019robust,amiri2019computation} for other interesting related works in this direction.

%\vspace{0.2cm}
\noindent
\textbf{Asynchronous SGD and Staleness:}
A complementary approach to deal with the issue of straggling is to use asynchronous SGD. In asynchronous SGD, any worker can evaluate the gradient and update the central PS without waiting for the other workers. Asynchronous variants of existing SGD algorithms have also been proposed and implemented in systems \cite{dean2012large, gupta2016model,cipar2013solving, cui2014exploiting,ho2013more}.
In general, analyzing the convergence of asynchronous SGD with the number of iterations is difficult in itself because of the randomness of gradient staleness. There are only a few pioneering works such as \cite{tsitsiklis1986distributed,lian2015asynchronous,mitliagkas2016asynchrony,recht2011hogwild,agarwal2011distributed, mania2017perturbed,chaturapruek2015asynchronous,zhang2016staleness,peng2016arock, hannah2017more, hannah2016unbounded,sun2017asynchronous,leblond2017asaga} in this direction. In \cite{tsitsiklis1986distributed}, a fully decentralized analysis was proposed that considers no central PS. In \cite{recht2011hogwild}, a new asynchronous algorithm called Hogwild was proposed and analyzed under bounded gradient and bounded delay assumptions. This direction of research has been followed upon by several interesting works such as \cite{lian2015asynchronous} which proposed novel theoretical analysis under bounded delay assumption for other asynchronous SGD variants. In \cite{peng2016arock, hannah2017more, hannah2016unbounded,sun2017asynchronous}, the framework of ARock was proposed for parallel co-ordinate descent and analyzed using Lyapunov functions, relaxing several existing assumptions such as bounded delay assumption and the independence of the delays and the index of the blocks being updated. In algorithms such as Hogwild, ARock etc. every worker only updates a part of the central parameter vector $\wts$ at every iteration and are thus essentially different in spirit from conventional asynchronous SGD settings \cite{lian2015asynchronous,agarwal2011distributed} where every worker updates the entire $\bm{\wts}$. In an alternate direction of work \cite{mania2017perturbed}, asynchrony is modelled as a perturbation.

In this work, we present a new and simpler analysis of asynchronous SGD with number of iterations that relaxes some of the assumptions in previous literature, and helps us to characterize the error-runtime trade-off as well as easily derive adaptive update rule for gradually increasing synchrony.
\subsection{Main Contributions}
Existing machine learning algorithms mostly try to optimize the trade-off of error with the number of iterations, epochs or ``work complexity'' \cite{bottou2016optimization}, while assuming the time spent per iteration to be a constant. However, due to straggling and synchronization bottle-necks in the system, the same gradient computation task can often take different time to complete across different workers or iterations~\cite{dean2013tail}. This work departs from the classic optimization theory view of analyzing error convergence with respect to the number of iterations and takes the novel approach of minimizing the error with respect to the wallclock time. By taking a joint runtime and error optimization approach, we provide the first comprehensive runtime-per-iteration comparison of SGD variants and design adaptive synchronous SGD algorithms that can achieve a super-linear runtime speed-up over naive synchronous SGD, while still preserving a low error floor. The main contributions of this paper are summarized below.
\begin{itemize}
\item \textbf{Straggler-Resilient Variants of Synchronous and Asynchronous SGD.} In order to a strike a balance between the two extremes: synchronous and asynchronous SGD, we propose partially synchronous SGD variants such as $K$-sync, $K$-batch-sync, $K$-async and $K$-batch-async SGD, where $K$ is the number of workers (out of the total of $P$ workers) that the parameter waits for when aggregating gradients. Although some of these distributed SGD variants have been proposed previously, to the best of our knowledge, this is the first work to provide a unified error convergence and runtime analysis of these variants.

\item \textbf{Runtime Analysis of the Distributed SGD Variants.} We provide the first systematic analysis of the expected runtime per iteration of synchronous and asynchronous SGD and their variants. We do so by modelling the runtimes at each worker as random variables with an arbitrary general distribution. For commonly used delay distributions such as exponential, asynchronous SGD is $O(P\log P)$ times faster than synchronous SGD where $P$ is the total number of workers. 

\item \textbf{More General Error Analysis of Asynchronous SGD Variants.} We propose a new error convergence analysis for asynchronous SGD and its variants for \emph{strongly convex} objectives that can also be extended to provide relaxed guarantees for \emph{non-convex} formulations. In this analysis we relax the bounded delay assumption in \cite{lian2015asynchronous} and the bounded gradient assumption in \cite{recht2011hogwild}. We also remove the assumption of exponential computation time and the staleness process being independent of the parameter values \cite{mitliagkas2016asynchrony} as we will elaborate in \Cref{sec:main_async_fixed}. Interestingly, our analysis also brings out the regimes where asynchronous SGD can be better or worse than synchronous SGD in terms of speed of convergence. 

\item \textbf{Insights from the Error-versus-wallclock Time Trade-off.} By combining our runtime and error analyses described above, we can theoretically characterize the error-versus-wallclock time trade-off for different SGD variants. \Cref{fig:error runtime tradeoff} illustrates the error at convergence (or error floor) versus the time to reach convergence of different SGD variants. Observe how the $K$-batch-async and $K$-async strategies can span different points on the trade-off as $K$ varies. By choosing the right value of $K$ we can achieve a desired error at convergence in minimum time. The theoretical results presented in this paper are corroborated by rigorous experiments on training deep neural networks for classification of the CIFAR10 dataset. 

\begin{figure}[t]
 \centering
\includegraphics[height=5cm]{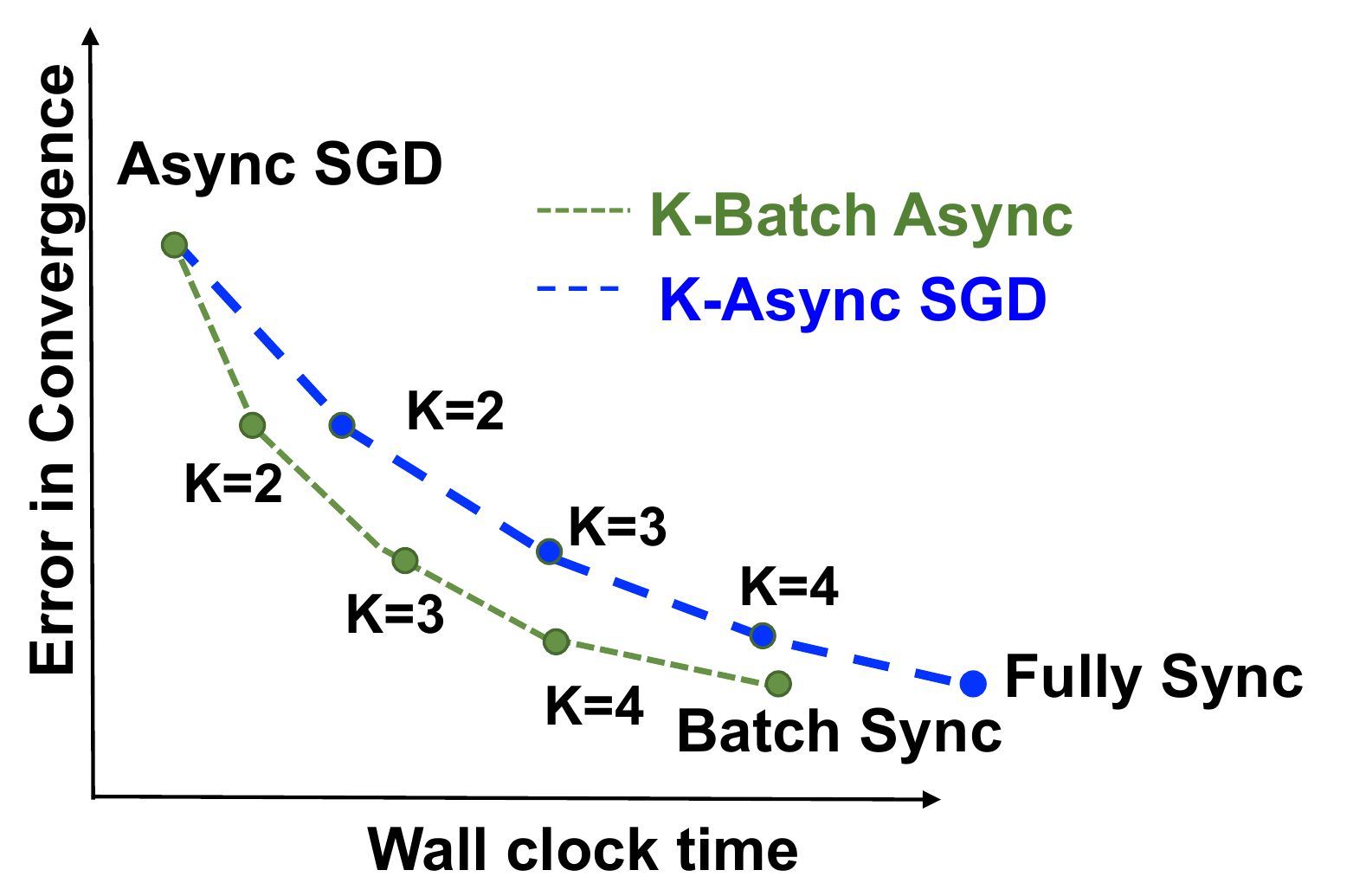}
\caption{Distributed SGD variants span the error-runtime trade-off between fully Sync-SGD and fully Async-SGD. Here $K$ is the number of workers or mini-batches the PS waits for before updating the model parameters, as we elaborate in \Cref{sec:system model}.}
\label{fig:error runtime tradeoff}
\end{figure}

\item \textbf{AdaSync strategy to Adapt Synchronocity during Training.} Instead of fixing $K$, we can achieve a win-win in the error-runtime trade-off by adapting $K$ so as to gradually increasing the synchrony of the different SGD variants. We propose AdaSync, a method that uses the theoretical characterization of the error-runtime trade-off, to decide how to adapt $K$, as illustrated in \Cref{fig:adasync1}. This method is inspired from \cite{wang2018adaptive,wang2018cooperative} which adapts the communication frequency for a different class of SGD methods known as periodic averaging SGD. Interestingly, similar to \cite{wang2018adaptive}, our proposed method does not require knowledge of the algorithm parameters such as Lipschitz constant, variance of the stochastic gradient etc.\ as one would otherwise require if they choose to simply minimize the error-runtime trade-off with respect to parameter $K$. Experimental results on CIFAR 10 classification (see \Cref{fig:cifar_async_var_1}) show that AdaSync not only helps achieve the same training loss much faster but also gives smaller test error than fixed-$K$ strategies.

\end{itemize}

\begin{figure}
\begin{subfigure}{.45\textwidth}
 \centering
\includegraphics[height=5.3cm]{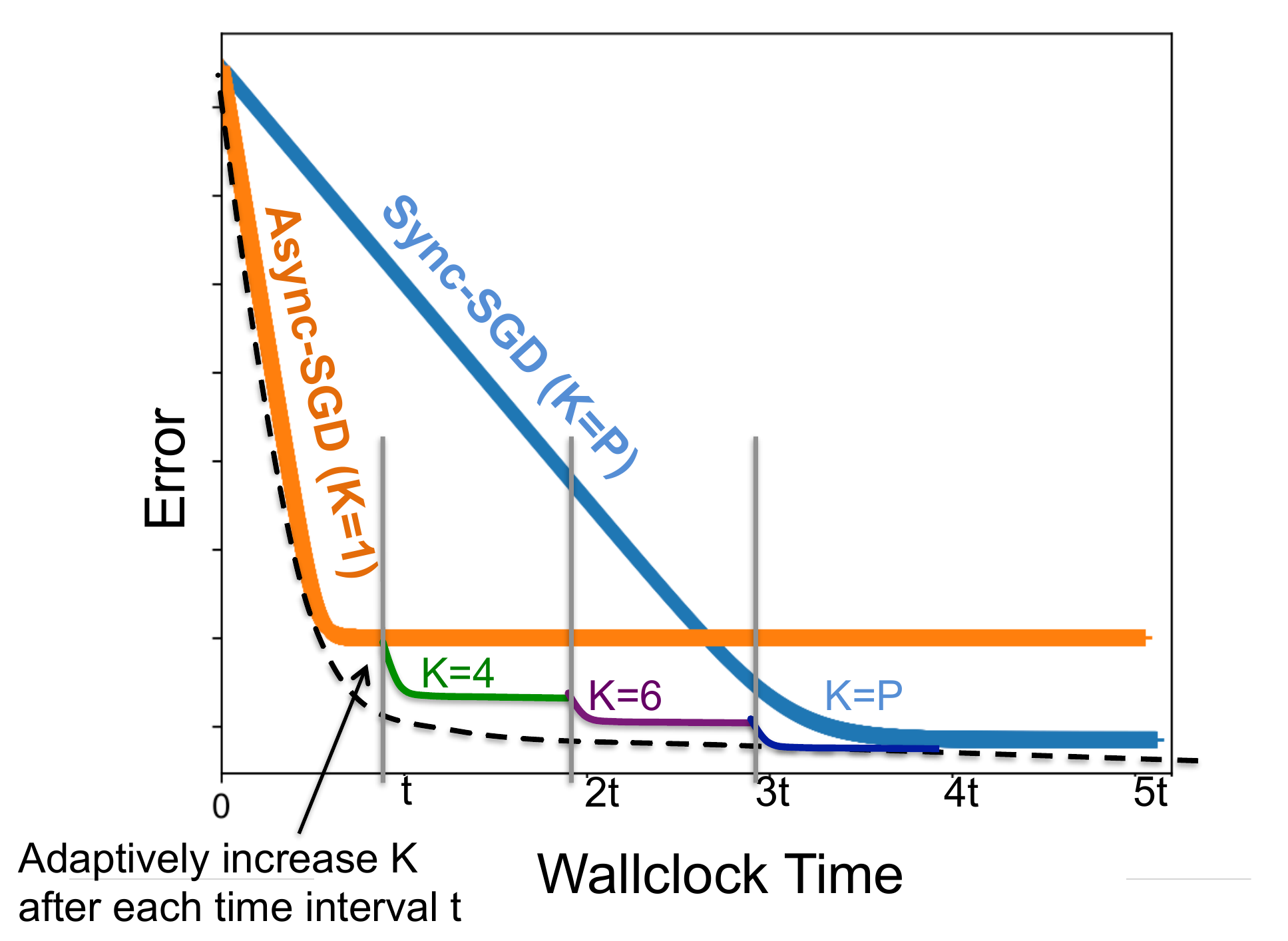}
\caption{Idea behind the AdaSync strategy.}
\end{subfigure}
~
 \begin{subfigure}{.5\textwidth}
 \centering
 \includegraphics[height=4.8cm]{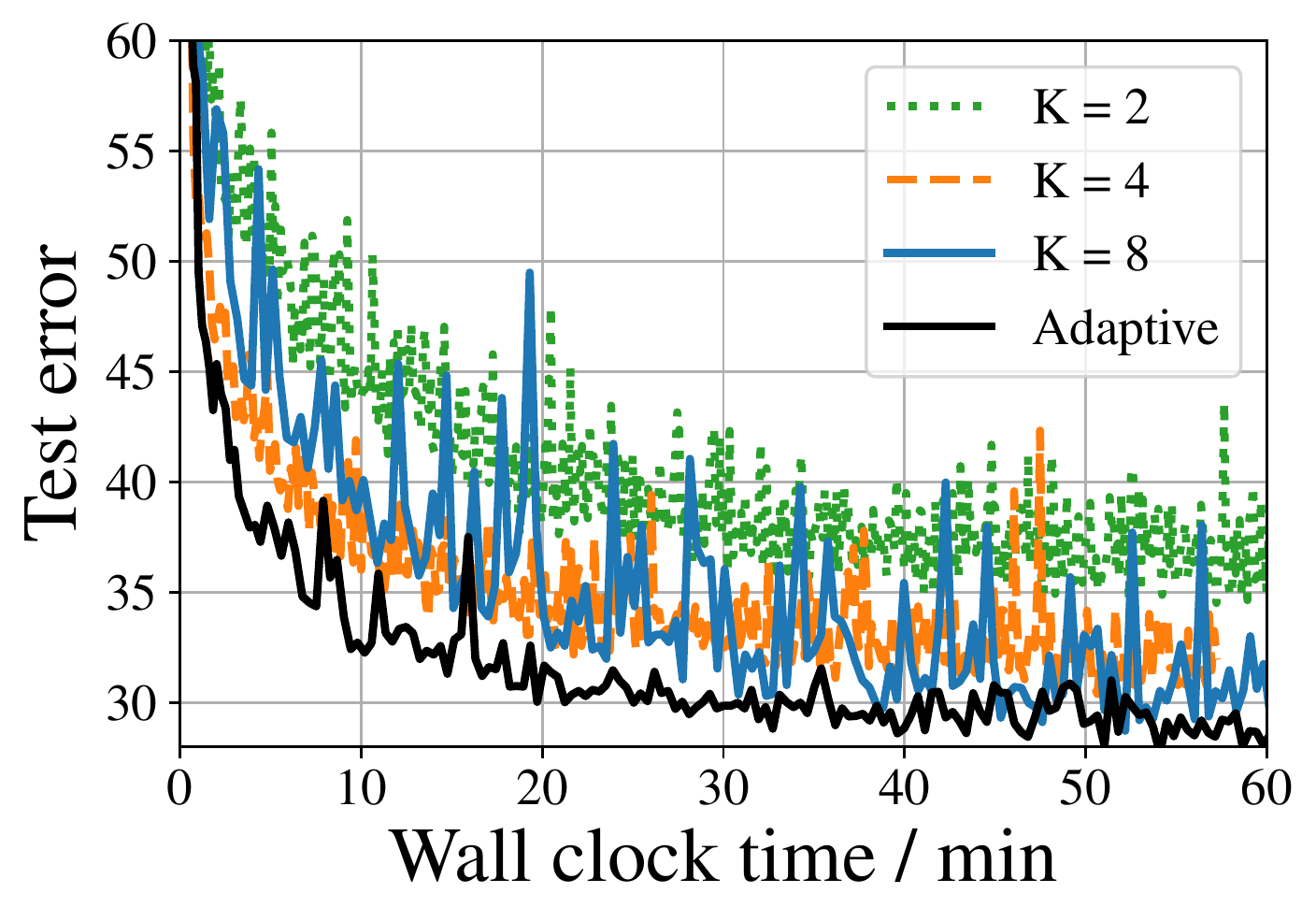}
 \caption{Adaptive $K$-async for CIFAR 10 classification.}
\label{fig:cifar_async_var_1}
\end{subfigure}%
    
\caption{We propose AdaSync, a method of adaptively increasing $K$ which is a measure of the synchronicity of the algorithm. AdaSync aggregates gradients from any $K$ out the $P$ total nodes. It helps achieve the best error-runtime trade-off by gradually increasing $K$. 
}
\label{fig:adasync1}
\end{figure}

The rest of the paper is organized as follows.  \Cref{sec:system model} describes our problem formulation introducing the system model and assumptions. \Cref{sec:runtime} provides the theoretical results on the analysis of true wallclock runtime per iteration for the different SGD variants and provides insights on quantifying the speedups that one variant provides over another. In \Cref{sec:main_async_fixed}, we discuss our analysis of error convergence with number of iterations where we also include our new convergence analysis for asynchronous and $K$-async SGD. Proofs and detailed discussions are presented in the Appendix. In \Cref{sec:error_runtime}, we combine the runtime analysis with the error analysis to derive novel error-runtime trade-offs and demonstrate how our analysis could inform predicting the trend of the trade-off for distributed systems with different runtime distributions. Finally, in \Cref{sec:vary_synchronicity}, we introduce our proposed method AdaSync that gradually varies synchronicity (parameter $K$) to achieve the desirable error-runtime trade-off, followed by experimental results in \Cref{subsec:adasync_experiments}. We conclude with a brief discussion in \Cref{sec:conclusion}.

\section{Problem Formulation}
\label{sec:system model}
Our objective is to minimize the risk function of the parameter vector $\wts$ as mentioned in \cref{eq:minimization} given $N$ training samples. Let $S$ denote the total set of $N$ training samples, \textit{i.e.}, a collection of some data points with their corresponding labels or values. We use the notation $\xi$ to denote a random seed $ \in S$ which consists of either a single data and its label or a single mini-batch ($\mb$ samples) of data and their labels. 
\subsection{System Model}
We assume that there is a central parameter server (PS) with $P$ parallel workers as shown in \Cref{fig:param_server}. The workers fetch the current parameter vector $\wts_j$ from the PS as and when instructed in the algorithm. Then they compute gradients using one mini-batch and push their gradients back to the PS as and when instructed in the algorithm. At each iteration, the PS aggregates the gradients computed by the workers and updates the parameter $\wts$. Based on how these gradients are fetched and aggregated, we have different variants of synchronous or asynchronous SGD.

\subsection{Variants of SGD}
We now describe the SGD variants considered in this paper. We note that some of these variants have been proposed earlier under alternate names in different papers, as we will refer to during our descriptions. In this work, we give a unified runtime and error analysis to compare them with each other in terms of their true error-runtime trade-off, a problem that has not been considered in prior works. Please refer to \Cref{fig:ksync} and \Cref{fig:kasync} for a pictorial illustration of the SGD variants.

\textbf{$K$-sync SGD:} This is a generalized form of synchronous SGD, also suggested in \cite{gupta2016model,chen2016revisiting} to offer some resilience to straggling as the PS does not wait for all the workers to finish. The PS only waits for the first $K$ out of $P$ workers to push their gradients. Once it receives $K$ gradients, it updates $ \wts_{j}$ and cancels the remaining workers. The updated parameter vector $\wts_{j+1}$ is sent to all $P$ workers for the next iteration. The update rule is given by:
\begin{equation}
\wts_{j+1} = \wts_j - \frac{\eta}{K}\sum_{l=1}^K g(\wts_{j}, \xi_{l,j}).
\label{eq:ksync}
\end{equation}
Here $l=1,2,\ldots,K$ denotes the index of the $K$ workers that finish first, $\xi_{l,j}$ denotes the mini-batch of $m$ samples used by the $l$-th worker at the $j$-th iteration and  $g(\wts_{j}, \xi_{l,j})= \frac{1}{m} \sum_{\xi \in \xi_{l,j} }  \nabla f(\wts_{j}, \xi)$ denotes the average gradient of the loss function evaluated over the mini-batch  $\xi_{l,j}$ of size $m$. For $K=P$, the algorithm is exactly equivalent to a fully synchronous SGD with $P$ workers. 

\textbf{$K$-batch-sync SGD:} In $K$-batch-sync, all the $P$ workers start computing gradients with the same $\wts_j$. Whenever any worker finishes, it pushes its update to the PS and evaluates the gradient on the next mini-batch at the same $\wts_j$. The PS updates using the first $K$ mini-batches that finish and cancels the remaining workers. Theoretically, the update rule is still the same as \cref{eq:ksync} but here $l$ now denotes the index of the mini-batch (out of the $K$ mini-batches that finished first) instead of the worker. However $K$-batch-sync will offer advantages over $K$-sync in runtime per iteration as no worker is idle. 
 \begin{figure}[t]
%\vspace{.3in}
\centerline{\includegraphics[height=3cm]{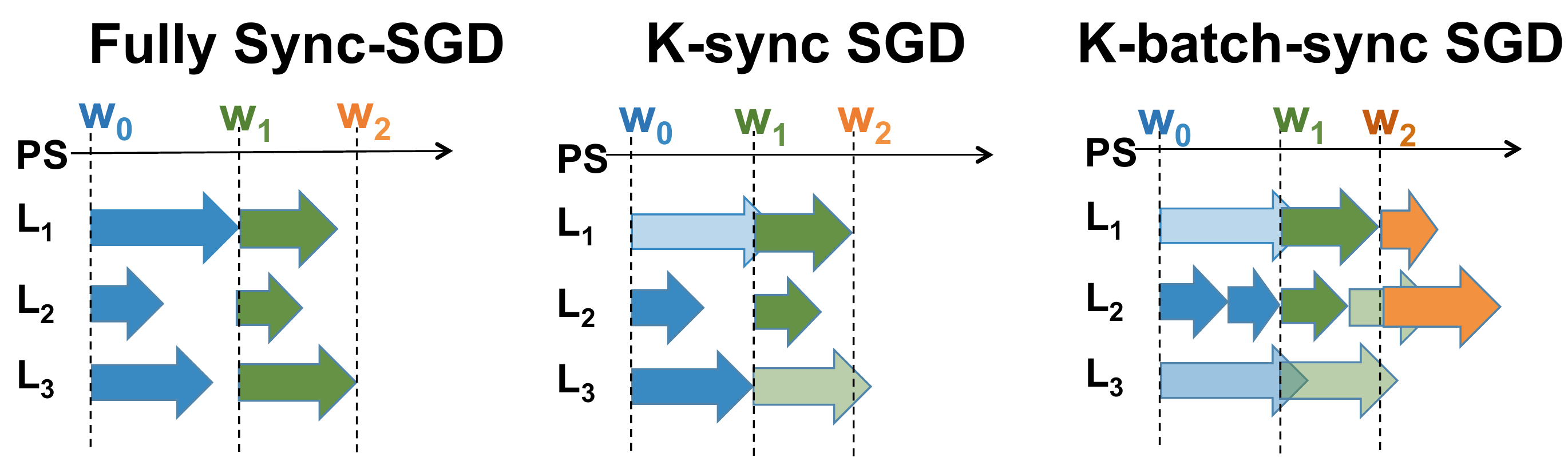}}
%\vspace{.3in}
\caption{For $K=2$ and $P=3$, we illustrate the $K$-sync and $K$-batch-sync SGD in comparison with fully synchronous SGD. Lightly shaded arrows indicate straggling gradient computations that are cancelled.}
\label{fig:ksync}
\end{figure}
\begin{figure}[t]
\centerline{\includegraphics[height=3cm]{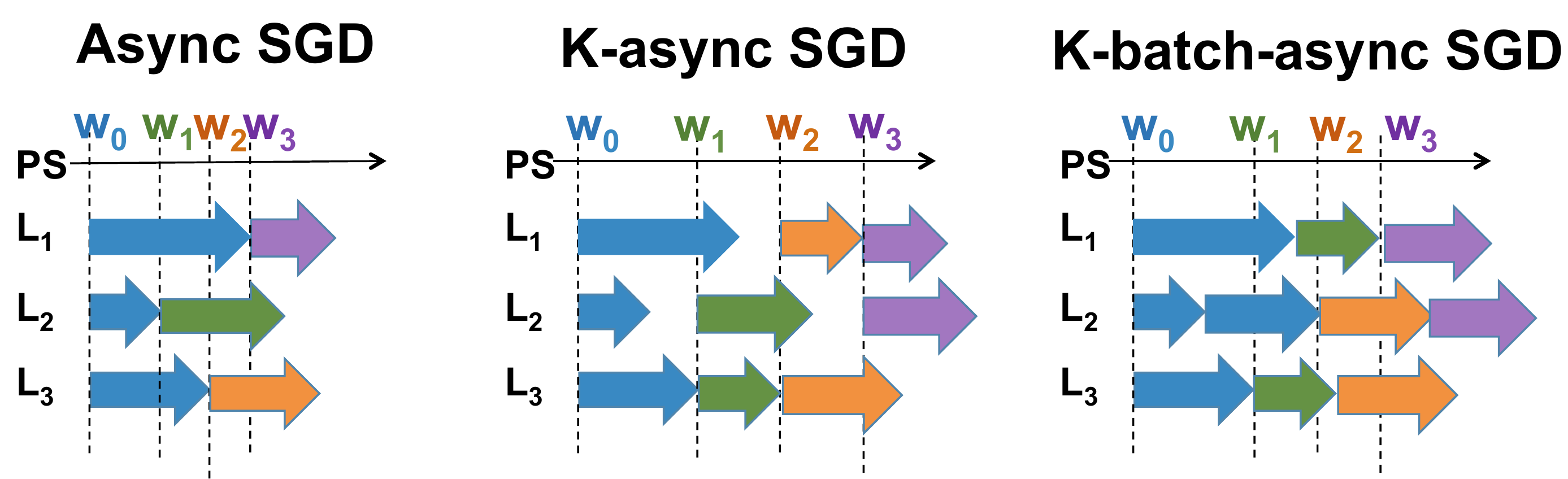}}
\caption{For $K=2$ and $P=3$, we illustrate the $K$-async and $K$-batch-async algorithms in comparison with fully asynchronous SGD.}
\label{fig:kasync}
\end{figure}

\textbf{$K$-async SGD:}
This is a generalized version of asynchronous SGD, also suggested in \cite{gupta2016model}. In $K$-async SGD, all the $P$ workers compute their respective gradients on a single mini-batch. The PS waits for the first $K$ out of $P$ that finish first, but it does not cancel the remaining workers. As a result, for every update the gradients returned by each worker might be computed at a stale or older value of the parameter $\wts$. The update rule is thus given by:
\begin{equation}
\wts_{j+1} = \wts_j - \frac{\eta}{K} \sum_{l=1}^K g(\wts_{\tau(l,j)}, \xi_{l,j}).
\label{eq:kasync}
\end{equation}
Here $l=1,2,\ldots,K$ denotes the index of the $K$ workers that contribute to the update at the corresponding iteration,  $\xi_{l,j}$ is one mini-batch of $m$ samples used by the $l$-th worker at the $j$-th iteration and $\tau(l,j)$ denotes the iteration index when the $l$-th worker last read from the central PS where $\tau(l,j) \leq j $.  Also, $g(\wts_{\tau(l,j)}, \xi_{l,j})= \frac{1}{m} \sum_{\xi \in \xi_{l,j} }  \nabla f(\wts_{\tau(l,j)}, \xi_{l,j})$ is the average gradient of the loss function evaluated over the mini-batch $\xi_{l,j}$ based on the stale value of the parameter $\wts_{\tau(l,j)}$.  For $K=1$, the algorithm is exactly equivalent to fully asynchronous SGD, and the update rule can be simplified as:
\begin{equation}
\wts_{j+1} = \wts_j - \eta g(\wts_{\tau(j)}, \xi_{j}).
\end{equation}
Here $\xi_{j}$ denotes the set of samples used by the worker that updates at the $j$-th iteration such that $|\xi_{j}| = m$ and $\tau(l,j)$ denotes the iteration index when that particular worker last read from the central PS. Note that $\tau(j) \leq j $.  

\textbf{$K$-batch-async SGD:} Observe in \Cref{fig:kasync} that $K$-async also suffers from some workers being idle while others are still working on their gradients until any $K$ finish. In $K$-batch-async (proposed in \cite{lian2015asynchronous}), the PS waits for $K$ mini-batches before updating itself but irrespective of which worker they come from.  So wherever any worker finishes, it pushes its gradient to the PS, fetches current parameter at PS and starts computing gradient on the next mini-batch based on the current value of the PS. Surprisingly, the update rule is again similar to \cref{eq:kasync} theoretically except that now $l$ denotes the indices of the $K$ mini-batches that finish first instead of the workers and $\wts_{\tau(l,j)}$ denotes the version of the parameter when the worker computing the $l-$th mini-batch last read from the PS.  While the error convergence of $K$-batch-async is similar to $K$-async, it reduces the runtime per iteration as no worker is idle.

\begin{rem} 
\label{rem:redundancy_techniques}
Recent works such as \cite{tandon2017gradient} propose erasure coding techniques to overcome straggling workers. Instead, the SGD variants considered in this paper such as $K$-sync and $K$-batch-sync SGD exploit the inherent redundancy in the data itself, and ignore the gradients returned by straggling workers. If the data is well-shuffled such that it can be assumed to be i.i.d.\ across workers, then for the same effective batch-size, ignoring straggling gradients will give equivalent error scaling as coded strategies, and at a lower computing cost. However, coding strategies may be useful in the non i.i.d.\ case, when the gradients supplied by each worker provide diverse information that is important to capture in the trained model.
\end{rem}
\subsection{Performance Metrics and Goal}
There are two metrics of interest: \textcolor{black}{Expected Runtime and Error.}
\begin{defn}[Expected Runtime per iteration]
The expected runtime per iteration is the expected time (average time) taken to perform each iteration, \textit{i.e.}, the expected time between two consecutive updates of the parameter $\wts$ at the central PS.
\end{defn}

\begin{defn}[Expected Error after $J$ iterations]
The expected error after $J$ iterations is defined as $\E{F(\wts_J) - F^*}$, \textit{i.e.}, the expected gap of the risk function from its optimal value.
\end{defn}

\noindent \textbf{Goal:} Our goal is to determine the trade-off between the expected error (measures the accuracy of the algorithm) and the expected runtime \textbf{after a total of $J$ iterations} for the different SGD variants. Based on our characterization, we would like to derive a method of gradually varying synchronicity (parameter $K$) in the SGD variants to achieve a desirable error-runtime trade-off.

In \Cref{sample-table}, we provide a list of the notations used in this paper for referencing, that we again revisit in the respective sections where they appear.

\begin{table}
\begin{center}
\begin{tabular}{llll}
CONSTANTS &  & RANDOM VARIABLES\\
\hline 
Mini-batch Size         & $m$ & Runtime of a worker for one mini-batch & $X_i$\\
Total Iterations            & $J$  & Runtime per iteration & $T$\\
Number of workers (Processors) & $P$ \\
Number of workers to wait for & $K$ \\
Learning rate & $\eta$ \\
Lipschitz Constant & $L$ \\
Strong-convexity parameter & $c$\\
%\\
%\hline \\
%RANDOM VARIABLES\\
%\hline
%Runtime of a worker for one mini-batch & $X_i$ \\
%Runtime per iteration & $T$ \\
\hline
\end{tabular}
\end{center}
\caption{LIST OF NOTATIONS}
 \label{sample-table}
\end{table}

\section{Runtime Analysis: Insights on Quantifying Speedup}
\label{sec:runtime}
Our runtime analysis provides useful insights in quantifying the speedup offered by different SGD variants. We first state our key modeling assumptions in \Cref{subsec:runtime_assumption}, followed by our main theoretical results on quantifying speedup in \Cref{subsec:runtime_main_results}. Next, we include our detailed runtime analysis for the four SGD variants considered in this paper in \Cref{subsec:runtime_lemmas}, some of which are useful in the proofs of the main results on speedups. For a summary of the expected runtime of the different SGD variants, we refer to \Cref{table:runtime_variants}.

\begin{table}[!htbp]
\begin{center}
\caption{Expected Runtime for the different variants of SGD}
\begin{tabular}{llll}
SGD Variant &  Expected Runtime per iteration $\E{T}$ \\
\hline 
$K$-sync & $\E{T}=\E{X_{K:P}}$ for all distributions\\
$K$-batch-sync & $\E{T}\leq K\E{X_{1:P}}$ for new-longer-than-used distributions\\
$K$-async & $\E{T}\leq \E{X_{K:P}}$ for new-longer-than-used distributions\\
$K$-batch-async  & $\E{T}=\frac{K}{P}\E{X}$ for all distributions \\
\hline
\label{table:runtime_variants}
\end{tabular}
\end{center}
\end{table}

\subsection{Modeling Assumptions:}
\label{subsec:runtime_assumption} The time taken by a worker to compute gradient of one mini-batch is denoted by random variable $X_i$ for $i=1,2,\dots,P$. We assume that these $X_i$'s are i.i.d.\ across mini-batches and workers.

\subsection{Main Results on Quantifying Speedups}
\label{subsec:runtime_main_results}
 \begin{figure}[t]
\centerline{\includegraphics[height=4.5cm]{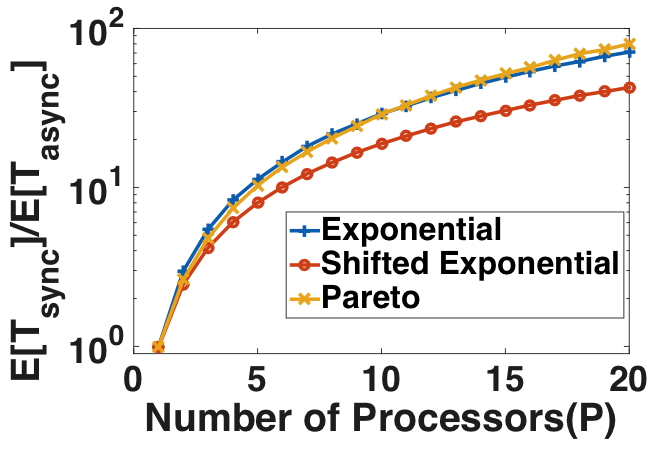}}
\caption{Simulation demonstrating the speedup using asynchronous over synchronous SGD for three different types of distributions, namely, $ \Exp(1)$, $1+ \Exp(1)$ and $Pareto(2,1)$. The speedup is shown in log-scale, \textit{i.e.}, $\log{\frac{\E{T_{Sync}}}{\E{T_{Async}}}}$ is plotted with total number of nodes $P$.}
\label{fig:syncAsync}
\end{figure}

 Our first result (\Cref{thm:runtime1}) analytically characterizes the speedup offered by asynchronous SGD for \textit{any general distribution on the wallclock time of each worker}. 

\begin{thm} Let the wallclock time of each worker to process a single mini-batch be i.i.d.\ random variables $X_1, X_2,\dots,X_P$. Then the ratio of the expected runtimes per iteration for synchronous and asynchronous SGD is
$$ \frac{\E{T_{Sync}}}{\E{T_{Async}}}=P \frac{\E{X_{P:P}}}{\E{X}}$$
where $X_{(P:\workers)}$ is the $P^{th}$ order statistic of $P$ i.i.d.\ random variables $X_1, X_2, \dots , X_{\workers}$. 
\label{thm:runtime1}
\end{thm}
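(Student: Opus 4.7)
The plan is to compute $\E{T_{Sync}}$ and $\E{T_{Async}}$ separately and then take their ratio. For the numerator, the fully synchronous case corresponds to $K$-sync with $K = P$: the PS waits on every iteration for the slowest of the $P$ workers, so $T_{Sync} = \max(X_1, \ldots, X_P) = X_{P:P}$, and hence $\E{T_{Sync}} = \E{X_{P:P}}$ directly from the definition of $K$-sync given in the preceding subsection.

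For the denominator, I would use a renewal-theoretic argument. In fully asynchronous SGD (the $K=1$ case, where $K$-async and $K$-batch-async coincide because no worker is ever idle: the worker that finishes first restarts immediately with the updated parameter, while the other $P-1$ workers remain in the middle of computing their current mini-batches) each of the $P$ workers independently generates a renewal process with i.i.d.\ inter-arrival times distributed as $X$ and mean $\E{X}$. The PS records one iteration whenever any worker completes a mini-batch, so the sequence of PS update times is exactly the superposition of these $P$ independent renewal processes. By the elementary renewal theorem, the long-run completion rate across the superposition is $P/\E{X}$, and hence in steady state
\begin{equation*}
\E{T_{Async}} = \frac{\E{X}}{P}.
\end{equation*}

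Dividing gives $\E{T_{Sync}}/\E{T_{Async}} = P \cdot \E{X_{P:P}}/\E{X}$, which is the claim. The main subtlety — and the only place where a naive approach goes wrong — is the asynchronous throughput: one might first guess that the time between consecutive updates equals the minimum of $P$ i.i.d.\ durations, i.e.\ $\E{X_{1:P}}$, but this ignores that immediately after an update the $P-1$ workers still in progress carry \emph{residual} lifetimes that are not distributed as $X$. In fact $\E{X_{1:P}}$ is only an upper bound on $\E{T_{Async}}$, and only for new-longer-than-used distributions, as noted in the summary table. The renewal-superposition viewpoint sidesteps residual-lifetime computations altogether and delivers the exact value $\E{X}/P$ for an arbitrary distribution of $X$, which is what makes the final ratio distribution-free.
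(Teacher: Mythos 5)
Your proposal is correct and follows essentially the same route as the paper: the paper also identifies $\E{T_{Sync}}=\E{X_{P:P}}$ via $K$-sync with $K=P$ and obtains $\E{T_{Async}}=\E{X}/P$ by invoking its $K$-batch-async runtime lemma with $K=1$, whose proof is exactly your superposition-of-renewal-processes argument via the elementary renewal theorem. Your added remark on why the naive $\E{X_{1:P}}$ guess fails (residual lifetimes) is a correct and useful observation consistent with the paper's summary table, but does not change the substance of the argument.
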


\begin{proof}[\textbf{Proof of \Cref{thm:runtime1}}] Note that fully synchronous SGD is actually $K$-sync SGD with $K=P$, \textit{i.e.}, waiting for all the $P$ workers to finish. On the other hand, fully asynchronous SGD is actually $K$-batch-async with $K=1$. By taking the ratio of the expected runtimes per iteration for $K$-sync SGD (see \Cref{lem:runtime ksync} in \Cref{subsec:runtime_lemmas}) with $K=P$ and $K$-batch-async (see \Cref{lem:runtime kbatch} in \Cref{subsec:runtime_lemmas}) with $K=1$, we get the result in \Cref{thm:runtime1}. 
\end{proof}
In the following corollary, we highlight this speedup for the special case of exponential computation time.
\begin{coro}
\label{coro:syncAsync}
Let the wallclock time of each worker to process a single mini-batch be i.i.d.\ exponential random variables $X_1, X_2,\dots,X_P \sim \Exp({\mu})$. Then the ratio of the expected runtimes per iteration for synchronous and asynchronous SGD is $\Theta(P \log{P})$.
\end{coro}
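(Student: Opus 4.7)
The plan is to apply \Cref{thm:runtime1} directly and plug in the well-known expressions for the mean and the expected maximum of $P$ i.i.d.\ exponential random variables. By \Cref{thm:runtime1},
\[
\frac{\E{T_{Sync}}}{\E{T_{Async}}} = P\,\frac{\E{X_{P:P}}}{\E{X}},
\]
so the corollary reduces to estimating $\E{X_{P:P}}/\E{X}$ for $X_i \sim \Exp(\mu)$.

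First I would compute $\E{X} = 1/\mu$, which is immediate. Next, I would invoke the standard order-statistics identity for i.i.d.\ exponentials: by the memoryless property, the gaps $X_{(k:P)} - X_{(k-1:P)}$ are independent exponentials with rate $(P-k+1)\mu$, which yields
\[
\E{X_{P:P}} = \frac{1}{\mu}\sum_{k=1}^{P} \frac{1}{k} = \frac{H_P}{\mu},
\]
where $H_P$ is the $P$-th harmonic number. Substituting into the ratio cancels the $\mu$'s and gives
\[
\frac{\E{T_{Sync}}}{\E{T_{Async}}} = P\,H_P.
\]

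Finally, using the classical asymptotic $H_P = \ln P + \gamma + o(1)$ (equivalently $H_P = \Theta(\log P)$), I conclude that the ratio is $\Theta(P \log P)$, as claimed. There is no real obstacle here: the corollary is essentially a one-line specialization of \Cref{thm:runtime1}, and the only ingredient beyond that theorem is the exponential order-statistic computation, which is a textbook fact. The main thing to be careful about is citing the memoryless-property derivation (or just stating the harmonic-sum formula) cleanly so the $\Theta(\cdot)$ bound follows transparently.
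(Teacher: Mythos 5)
Your proposal is correct and follows essentially the same route as the paper: it invokes \Cref{thm:runtime1} and substitutes $\E{X_{P:P}} = H_P/\mu = \Theta(\log P)/\mu$ together with $\E{X} = 1/\mu$. The only difference is cosmetic --- you additionally spell out the memoryless spacings derivation of the harmonic-sum identity, which the paper simply cites.
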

Thus, the speedup scales with $P$ and can diverge to infinity for large $P$. We illustrate the speedup for different distributions in \Cref{fig:syncAsync}. It might be noted that a similar speedup as \Cref{coro:syncAsync} has also been obtained in a recent work \cite{hannah2017more} under exponential assumptions.

\begin{proof}[\textbf{Proof of \Cref{coro:syncAsync}}] The expectation of the maximum of $P$ i.i.d.\ $X_i \sim \Exp(\mu)$ is $\E{X_{P:P}}=\sum_{i=1}^P \frac{1}{i\mu} \approx \frac{\log{P}}{\mu}$ \cite{sheldon2002first}. This can be substituted in \Cref{thm:runtime1} to get \Cref{coro:syncAsync}.
\end{proof}

The next result illustrates the advantages offered by $K$-batch-sync and $K$-batch-async over their corresponding counterparts $K$-sync and $K$-async respectively.
\begin{thm}
\label{thm:runtime2}
Let the wallclock time of each worker to process a single mini-batch be i.i.d.\  exponential random variables $X_1, X_2,\dots,X_P \sim \Exp({\mu})$. Then the ratio of the expected runtimes per iteration for $K$-async (or sync) SGD and $K$-batch-async (or sync) SGD is 
$$ \frac{\E{T_{K-async}}}{\E{T_{K-batch-async}}} =\frac{P\E{X_{K:P} } }{K\E{X} }  \approx \frac{P \log{\frac{P}{P-K} } }{K}  $$
where $X_{K:P}$ is the $K^{th}$ order statistic of i.i.d.\ random variables $X_1,X_2,\dots,X_P$.
\end{thm}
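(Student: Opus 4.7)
The plan is to combine the runtime characterizations of the two variants already established in Section~\ref{subsec:runtime_lemmas} and then specialize to exponential delays. First, I would invoke the runtime lemma for $K$-async SGD, which bounds $\E{T_{K\text{-}async}} \leq \E{X_{K:P}}$ for new-longer-than-used service distributions. Because $\Exp(\mu)$ is memoryless — hence both new-longer-than-used and new-shorter-than-used — this inequality is tight: at the start of each iteration, the $P-K$ workers that did not finish in the previous round have residual computation times that are again $\Exp(\mu)$, so each iteration looks like a fresh race among $P$ i.i.d.\ exponentials, with completion time exactly $X_{K:P}$. I would state this explicitly as a short lemma-style observation before applying it.

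Next, I would invoke the runtime lemma for $K$-batch-async SGD, which gives the exact identity $\E{T_{K\text{-}batch\text{-}async}} = \frac{K}{P}\E{X}$ for \emph{any} distribution of $X$ (this follows from a simple renewal/throughput argument: $P$ workers continuously process mini-batches at average rate $P/\E{X}$, and each iteration consumes $K$ mini-batches). Dividing the two expressions immediately yields the first equality in the theorem:
$$\frac{\E{T_{K\text{-}async}}}{\E{T_{K\text{-}batch\text{-}async}}} = \frac{\E{X_{K:P}}}{(K/P)\E{X}} = \frac{P\,\E{X_{K:P}}}{K\,\E{X}}.$$

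Finally, I would plug in the exponential moments. The $K$-th order statistic of $P$ i.i.d.\ $\Exp(\mu)$ random variables has mean $\E{X_{K:P}} = \frac{1}{\mu}\sum_{i=P-K+1}^{P}\frac{1}{i} = \frac{1}{\mu}(H_P - H_{P-K})$, which follows from the standard representation of exponential order statistics as a sum of independent $\Exp(i\mu)$ spacings. Using $H_n = \log n + \gamma + O(1/n)$, this difference equals $\log\frac{P}{P-K}$ up to lower-order terms. Since $\E{X} = 1/\mu$, the $1/\mu$ factors cancel, and the ratio reduces to the claimed $P\log\frac{P}{P-K}/K$.

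The only step that requires genuine care is the first one: tightening the $K$-async runtime bound from $\leq \E{X_{K:P}}$ to equality under exponential delays. The rest is direct substitution into the two runtime lemmas and a harmonic-sum approximation, so I expect no obstacles beyond bookkeeping of the memoryless argument.
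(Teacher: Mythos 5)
Your proposal is correct and follows essentially the same route as the paper: tighten the $K$-async bound $\E{T}\leq \E{X_{K:P}}$ to equality via the memoryless property (exactly the argument the paper gives in its discussion of the exponential special case), combine with the renewal-theoretic identity $\E{T_{K\text{-}batch\text{-}async}}=K\E{X}/P$, and evaluate $\E{X_{K:P}}=\frac{1}{\mu}(H_P-H_{P-K})\approx\frac{1}{\mu}\log\frac{P}{P-K}$. No gaps.
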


\begin{proof}[\textbf{Proof of \Cref{thm:runtime2}}] For the exponential $X_i$, equality holds in \eqref{eqn:T_k_async} in \Cref{lem:runtime kasync}, as we justify in \Cref{subsec:runtime_K_async_exp}.  The expectation can be derived as $\E{X_{K:P}}=\sum_{i=P-K+1}^P \frac{1}{i\mu} \approx \frac{\log{(P/P-K)}}{\mu}$. For exponential $X_i$, the expected runtime per iteration for $K$-batch-async is given by $\E{T}  =  \frac{K\E{X}}{P} =  \frac{K}{\mu P }$ from \Cref{lem:runtime kbatch}.
\end{proof}

\Cref{thm:runtime2} shows that as $\frac{K}{P}$ increases, the speedup using $K$-batch-async increases and can be upto $\log{P}$ times higher. For non-exponential distributions, we simulate the behaviour of expected runtime in \Cref{fig:runtime2} for $K$-sync, $K$-async and $K$-batch-async respectively for Pareto and Shifted Exponential.

\begin{figure}[t]
\centering
\includegraphics[width=13cm]{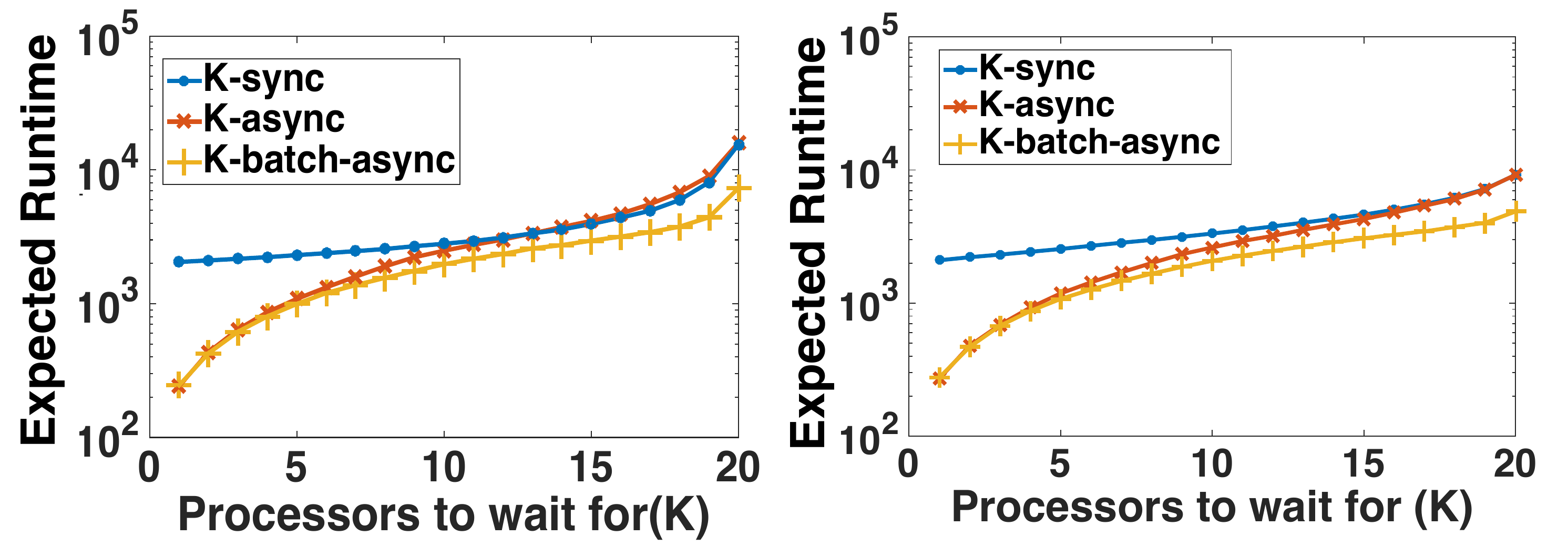}
\caption{Simulation of expected runtime for $2000$ iterations for $K$-sync, $K$-async and $K$-batch-async SGD: (Left) Pareto distribution $Pareto(2,1)$ and (Right) Shifted exponential distribution $1+ \Exp(1)$.}
\label{fig:runtime2}
\end{figure}

\subsection{Runtime Analysis for four different SGD variants}
\label{subsec:runtime_lemmas}

Here, we rigorously analyze the theoretical wallclock runtime of the four different SGD variants. These results have also been used in providing insights on the speedup offered by different asynchronous and batch variants in \Cref{thm:runtime1} and \Cref{thm:runtime2}.

\subsubsection{Runtime of K-sync SGD}
\begin{lem}[Runtime of $K$-sync SGD]
\label{lem:runtime ksync}
The expected runtime per iteration for $K$-sync SGD is,
\begin{align}
\E{T} &=  \E{X_{K:\workers}} 
\end{align}
where $X_{K:\workers}$ is the $K^{th}$ order statistic of $P$ i.i.d.\ random variables $X_1, X_2, \dots , X_{\workers}$.
\end{lem}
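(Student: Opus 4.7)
\textbf{Proof Plan for \Cref{lem:runtime ksync}.}

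The plan is to directly identify the per-iteration runtime of $K$-sync SGD with an order statistic of the worker computation times and then take expectations. Recall that in $K$-sync SGD, at the beginning of each iteration all $P$ workers simultaneously fetch the current parameter $\wts_j$ from the PS and begin computing a gradient on their assigned mini-batch. The PS declares the iteration complete as soon as the $K$-th worker returns its gradient, at which point it cancels the remaining $P-K$ workers and pushes the updated parameter.

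First I would let $X_1, X_2, \ldots, X_P$ denote the computation times of the $P$ workers in a given iteration. Since all workers begin simultaneously at the start of the iteration (the previous iteration's stragglers have been cancelled, so each fresh iteration begins on a clean slate), the wallclock time at which worker $i$ finishes, measured from the start of the iteration, is exactly $X_i$. The runtime $T$ of the iteration is the time at which the $K$-th of these $P$ workers completes, which is precisely the $K$-th smallest of $X_1, \ldots, X_P$, i.e., the $K$-th order statistic:
\begin{equation}
T = X_{K:P}.
\end{equation}
Taking expectations on both sides yields $\E{T} = \E{X_{K:P}}$.

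The only subtle points in the argument, which I would briefly address for clarity, are: (i) the i.i.d.\ assumption on $X_i$ across workers and iterations (stated in \Cref{subsec:runtime_assumption}) ensures that the order statistic is well-defined and that the per-iteration expected runtime does not depend on the iteration index; and (ii) cancelling straggling workers at the end of the iteration matters because it ensures that every worker starts the next iteration afresh, so no residual computation from a previous iteration biases the current iteration's runtime. No obstacle is expected here: once the identification $T = X_{K:P}$ is justified, the statement is immediate.
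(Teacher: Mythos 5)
Your proposal is correct and follows essentially the same argument as the paper: since cancelled stragglers mean all $P$ workers start each iteration afresh, the iteration time is exactly the $K$-th order statistic $X_{K:P}$, and taking expectations gives the result. Your added remarks on why cancellation guarantees a clean slate are a reasonable elaboration of the same idea, not a different route.
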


\begin{proof}[Proof of \Cref{lem:runtime ksync}]
When all the workers start together, and we wait for the first $K$ out of $P$ i.i.d.\ random variables to finish, the expected computation time for that iteration is $\E{X_{K:P}}$, where $X_{K:P}$ denotes the $K$-th statistic of $P$ i.i.d.\ random variables $X_1,X_2,\dots,X_P$. 
\end{proof}
Thus, for a total of $J$ iterations, the expected runtime is given by $J\E{X_{K:P}}$.

\begin{rem} For  $X_i \sim \Exp(\mu)$, the expected runtime per iteration is given by, $$\E{T} =\frac{1}{\mu} \sum_{i=P-K+1}^P \frac{1}{i} \approx \frac{1}{\mu} \left( \frac{ \log{\frac{P}{P-K}}}{\mu} \right)
$$ where the last step uses an approximation from \cite{sheldon2002first}. For justification, the reader is referred to \Cref{subsec:runtime_K_statistic}.
\end{rem}

\subsubsection{Runtime of K-batch-sync SGD}
The expected runtime of $K$-batch-sync SGD is not analytically tractable in general, but we obtain an upper bound on it for a class of distributions called the ``new-longer-than-used'' distributions, as defined below.

\begin{defn}[New-longer-than-used]
\label{defn:new-longer-than-used}
A random variable is said to have a new-longer-than-used distribution if the following holds for all $t, u \geq 0$:
\begin{equation}
\Pr(U>u+t| U>t) \leq \Pr(U>u).
\label{eq:defn_new_longer}
\end{equation}
\end{defn}
Most of the continuous distributions we encounter like normal, shifted-exponential, gamma, beta are new-longer-than-used. Alternately, the hyper-exponential distribution is new-shorter-than-used and it satisfies 
\begin{equation}
\Pr(U>u+t | U>t) \geq \Pr(U>u) \ \ \forall t, u \geq 0.
\end{equation}
For the exponential distribution, the inequality \Cref{eq:defn_new_longer} holds with equality due to the memoryless property, and thus it can be thought of as both new-longer-than-used and new-shorter-than-used.

\begin{lem}[Runtime of $K$-batch-sync SGD]
\label{lem:runtime kbatch sync}
Suppose that each $X_i$ has a new-longer-than-used distribution. Then, the expected runtime per iteration for $K$-batch-sync is upper-bounded as
\begin{align}
\E{T} & \leq  K \E{X_{1:\workers}} \label{eqn:T_k_batch_sync}
\end{align}
where $X_{1:\workers}$ is the minimum of $P$ i.i.d.\ random variables $X_1, X_2, \dots , X_{\workers}$. 
\end{lem}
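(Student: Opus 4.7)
The plan is to decompose the runtime of one iteration into $K$ successive inter-completion times and to bound each of them, in the usual stochastic order, by $X_{1:P}$. Write $T = T_1 + T_2 + \cdots + T_K$, where $T_i$ is the waiting time between the $(i-1)$-st and $i$-th mini-batch completion at the PS (with $T_1$ the time until the first completion). Since all $P$ workers begin the iteration fresh at the same parameter $\wts_j$, the first inter-completion time is $T_1 = X_{1:P}$ exactly. The main task is to show that $T_i \preceq X_{1:P}$ in the stochastic order for every $i \geq 2$, after which taking expectations and summing yields $\E{T} \leq K \E{X_{1:P}}$.

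At the instant of the $(i-1)$-st completion, exactly one worker (the one that just completed) starts a brand new mini-batch and so has a fresh completion time distributed as $X$. Each of the other $P-1$ workers has been running for some elapsed time $t_j$; conditional on not having finished yet, its residual lifetime is $X_j - t_j$ given $X_j > t_j$. The new-longer-than-used hypothesis \eqref{eq:defn_new_longer} says precisely that $\Pr(X_j - t_j > u \mid X_j > t_j) \leq \Pr(X > u)$ for all $u \geq 0$, so every residual lifetime is stochastically dominated by a fresh copy of $X$.

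Conditioning on the full history of the system up to the $(i-1)$-st completion (in particular on the elapsed times $t_j$), the fresh completion time and the $P-1$ residual lifetimes are mutually independent. Therefore
\[
\Pr(T_i > u \mid \text{history}) = \Pr(X > u)\prod_{j} \Pr(X_j - t_j > u \mid X_j > t_j) \leq [\Pr(X > u)]^{P} = \Pr(X_{1:P} > u).
\]
Removing the conditioning preserves the inequality pointwise in $u$, so $T_i \preceq X_{1:P}$ in the stochastic order and hence $\E{T_i} \leq \E{X_{1:P}}$. Summing over $i = 1, \ldots, K$ then gives the claimed bound $\E{T} \leq K \E{X_{1:P}}$.

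The main obstacle I anticipate is bookkeeping around the elapsed-time state $\{t_j\}$ at each completion instant, especially when a worker is on its second or later mini-batch within the iteration. In either case $t_j$ is a well-defined random variable measurable with respect to the history, and because the fresh $X$ the worker drew at its previous restart is independent of the current history, the conditional independence used in the product above continues to hold. This is routine but is the step where the derivation has to be set up carefully; once it is in place, the new-longer-than-used property does all the analytic work.
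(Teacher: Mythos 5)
Your proof is correct and follows essentially the same route as the paper's: decompose the iteration into $K$ successive inter-completion times, use the new-longer-than-used property to stochastically dominate each busy worker's residual computation time by a fresh copy of $X$, and conclude that each inter-completion time has expectation at most $\E{X_{1:P}}$. The only cosmetic difference is in the final step --- you bound the conditional survival function of the minimum directly by multiplying the $P$ independent survival factors, each at most $\Pr(X>u)$, whereas the paper replaces the residual variables by fresh copies one coordinate at a time using monotonicity of the minimum in each argument; your version is slightly more direct for the minimum, and the conclusion is identical.
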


The proof is provided in \Cref{subsec:runtime_K_batch_sync}. For the special case of $X_i\sim \Exp(\mu)$, the runtime per iteration is distributed as $Erlang(K, P\mu)$ (see \Cref{subsec:runtime_K_batch_sync}). Thus, for $K$-batch-sync SGD, the expected time per iteration is given by, $$\E{T}=  \frac{K}{P\mu},$$
which is precisely what we obtain when \Cref{eqn:T_k_batch_sync} holds with equality.

\subsubsection{Runtime of K-Async SGD}
The expected runtime per iteration of $K$-async SGD is also not analytically tractable for non-exponential $X_i$, but we again obtain an upper bound on it for ``new-longer-than-used'' distributions.

\begin{lem}[Runtime of $K$-async SGD]
\label{lem:runtime kasync}
Suppose that each $X_i$ has a new-longer-than-used distribution. Then, the expected \textcolor{black}{runtime per iteration} for $K$-async is upper-bounded as
\begin{align}
\E{T} & \leq   \E{X_{K:\workers}} \label{eqn:T_k_async}
\end{align}
where $X_{K:\workers}$ is the $K^{th}$ order statistic of $P$ i.i.d.\ random variables $X_1, X_2, \dots , X_{\workers}$. 
\end{lem}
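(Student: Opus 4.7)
The plan is to exploit the fact that, in $K$-async SGD, the workers that did \emph{not} finish in the previous iteration keep computing on their stale parameter; at the start of the next iteration they therefore carry a ``head start.'' Under the new-longer-than-used assumption, such a head start can only shorten their remaining computation time (in the stochastic-order sense), so waiting for $K$ out of $P$ finishes is at most as slow as waiting for $K$ out of $P$ fresh i.i.d.\ runtimes.

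Concretely, consider an arbitrary iteration and let $A_i \geq 0$ denote the time worker $i$ has already been computing on its current mini-batch at the moment that iteration begins. At least $K$ of the $A_i$'s are $0$ (the workers that just pushed their gradients and fetched the updated $\wts$), while at most $P-K$ of them are strictly positive. Let $R_i$ be the remaining computation time of worker $i$; conditional on the vector $(A_1,\dots,A_P)$, the $R_i$'s are independent with $R_i \stackrel{d}{=} X_i - A_i \mid X_i > A_i$. The runtime of the iteration is then $T = R_{(K:P)}$, the $K$-th order statistic of $R_1,\dots,R_P$.

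Next I would apply \Cref{defn:new-longer-than-used}: for every $u\geq 0$ and every realization of $A_i$,
\[
\Pr(R_i > u \mid A_i) \;=\; \Pr(X_i > A_i + u \mid X_i > A_i) \;\leq\; \Pr(X_i > u),
\]
so $R_i$ is stochastically dominated by a fresh copy $X_i'\sim X_i$. Using the inverse-CDF coupling (applied coordinate-wise and conditionally on $(A_1,\dots,A_P)$) I can construct independent $X_1',\dots,X_P'$ with $X_i'\stackrel{d}{=}X_i$ and $R_i \leq X_i'$ almost surely. Since the $K$-th order statistic is a coordinate-wise monotone function, $R_{(K:P)} \leq X_{(K:P)}'$ almost surely, hence $\E{T\mid A_1,\dots,A_P}\leq \E{X_{(K:P)}}$ and, taking the outer expectation, $\E{T}\leq \E{X_{(K:P)}}$, which is the claim.

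The step I expect to be the main (though fairly standard) obstacle is the coupling/stochastic-dominance argument that transfers the coordinate-wise inequality $R_i \preceq X_i$ to the order statistic $R_{(K:P)} \preceq X_{(K:P)}$; one must be careful that the $R_i$'s are only conditionally independent, which is why I condition on $(A_1,\dots,A_P)$ before coupling and only take the outer expectation at the end. As a sanity check, on the very first iteration all $A_i = 0$ so $R_i\equiv X_i$ and the bound holds with equality, and when $X_i$ is exponential the memoryless property makes $R_i\stackrel{d}{=}X_i$ for every iteration, again yielding equality (consistent with the exponential computation used in the proof of \Cref{thm:runtime2}).
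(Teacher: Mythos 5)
Your proposal is correct and follows essentially the same route as the paper's proof: model the residual computation times of the busy workers, use the new-longer-than-used property to show each residual time is stochastically dominated by a fresh copy of $X_i$, and then transfer this coordinate-wise dominance to the $K$-th order statistic by monotonicity. The only (immaterial) difference is that you realize the dominance via an explicit conditional inverse-CDF coupling, whereas the paper swaps the residual times for fresh copies one coordinate at a time using the fact that stochastic dominance is preserved under expectations of increasing functions.
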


The proof of this lemma is provided in \Cref{subsec:runtime_K_async}.

\begin{rem} Recall that the runtime of $K$-sync SGD is $\E{X_{K:\workers}}$. Therefore, \Cref{lem:runtime kasync} essentially implies that for new-longer-than-used distributions, the runtime of $K$-async SGD is upper-bounded by the runtime of $K$-sync SGD. For the special case of exponential runtimes, \Cref{eq:defn_new_longer} holds with equality, and the expected runtime of both $K$-sync SGD and $K$-async SGD match theoretically. However, for other classes of distributions where  \Cref{eq:defn_new_longer} holds with strict inequality, the upper bound of 
\Cref{lem:runtime kasync} also holds with strict inequality. In general, intuitively the ``more'' is the new-longer-than-used property, the lower is the runtime of $K$-async SGD as compared to $K$-sync SGD. We show this by explicitly deriving an alternate upper bound for the shifted-exponential distribution (a special case of the new-longer-than-used distributions) that is lower than the expected runtime of $K$-sync SGD when the shift of the distribution is large.
\end{rem}

\begin{lem}[Alternate Upper Bound for Shifted Exponential]
\label{lem:tight kasync}
Let $P=nK$ where $n$ is an integer greater than $1$, and each $X_i$ follow a shifted exponential distribution with shift $\Delta$, \textit{i.e.}, $X_i \sim \Delta + \Exp(\mu)$. Then, the expected runtime for any $n$ consecutive iterations of $K$-async SGD is upper-bounded as
\begin{equation}
\E{T_1+T_2+\ldots+T_n} \leq \Delta + \sum_{i=0}^{n-1} \E{\widetilde{X}_{K:(P-iK)}},
\end{equation}
where $\widetilde{X} \sim \Exp(\mu)$ and $\widetilde{X}_{K:(P-iK)}$ denotes the $K^{th}$ order statistic out of $P-iK$ i.i.d.\ random variables.
\end{lem}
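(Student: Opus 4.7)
The plan is to use the memorylessness of the $\Exp(\mu)$ portion of each shifted-exponential service time. The key observation is that by the end of iteration~1, every remaining ``original'' worker has been computing for at least $\Delta$, so by memorylessness its residual service time is $\Exp(\mu)$; this reduces iterations $2,\dots,n$ to order-statistic calculations over a shrinking pool of survivors.

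For iteration~1, all $P$ workers start simultaneously at time zero, so $T_1 = X_{K:P} = \Delta + \widetilde{X}_{K:P}$ \emph{exactly}, which supplies both the leading $\Delta$ and the $i=0$ summand of the claimed bound. For $i \geq 2$, let $S_i$ denote the number of the original $P$ workers still running at the start of iteration~$i$. At most $K$ workers finish per iteration (originals or restarts), so at most $K$ originals finish per iteration, giving $S_i \geq P - (i-1)K = (n-i+1)K \geq K$. Since $T_1 \geq \Delta$, each surviving original has been computing for at least $\Delta$ by the start of iteration~$i$; applying memorylessness worker-by-worker shows that, conditional on $S_i$ and on the identities of the survivors, their residual service times are iid $\Exp(\mu)$.

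To bound $T_i$ for $i \geq 2$, note that iteration~$i$ ends as soon as $K$ workers (originals or restarts) finish, and discarding the restarts can only increase the wait. So $T_i \mid S_i = s$ is stochastically dominated by the $K$-th order statistic of $s$ iid $\Exp(\mu)$ residuals, whose mean I denote $\phi(s) := \E{\widetilde{X}_{K:s}}$. A standard coupling argument (removing one sample can only increase the $K$-th order statistic) shows $\phi$ is decreasing in $s$, whence
\[ \E{T_i} = \E{\E{T_i \mid S_i}} \leq \E{\phi(S_i)} \leq \phi(P - (i-1)K) = \E{\widetilde{X}_{K:(P - (i-1)K)}}. \]
Summing over $i = 1, \dots, n$ and reindexing $j = i - 1$ yields the claimed bound.

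The delicate step is rigorously justifying the iid-$\Exp(\mu)$ claim for the residuals conditional on the random set of surviving originals. This reduces to the elementary fact that for each worker $j$ independently, $X_j - t \mid X_j > t \sim \Exp(\mu)$ whenever $t \geq \Delta$; since $\{T_1 \geq \Delta\}$ holds deterministically, conditioning on the survival events $\{X_j > T_1 + \dots + T_{i-1}\}$ preserves both independence across workers and the $\Exp(\mu)$ law of each residual. Once this is in place, the remainder is routine order-statistic bookkeeping, and the bound is tight up to the slack introduced by throwing away the restarted workers.
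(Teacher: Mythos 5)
Your argument is essentially the paper's own proof: iteration $1$ contributes $\Delta + \E{\widetilde{X}_{K:P}}$ exactly, and for each later iteration you discard the restarted workers, invoke memorylessness for the residuals of the at least $P-(i-1)K$ surviving originals (each of which has already run for at least $\Delta$), and use monotonicity of $s \mapsto \E{\widetilde{X}_{K:s}}$ --- the same chain of ideas, if anything spelled out more explicitly. The one piece you omit is that the lemma claims the bound for \emph{any} $n$ consecutive iterations, whereas your proof assumes all $P$ workers launch afresh at time zero, i.e., it only covers the first $n$ iterations. The paper closes this by noting that, by the new-longer-than-used property of the shifted exponential, the residual service times at the start of an arbitrary block are stochastically dominated by fresh copies of $X_i$, so the expected runtime of any $n$ consecutive iterations is at most that of the first $n$ after a fresh launch (the same dominance argument as in the proof of \Cref{lem:runtime kasync}). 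Adding that one reduction makes your proof match the stated lemma.
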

The proof of this lemma is also provided in \Cref{subsec:runtime_K_async}. Based on this lemma, the upper-bound on per-iteration runtime can be approximated as:
\begin{align}
&\frac{1}{n}\left( \Delta + \sum_{i=0}^{n-1} \E{\widetilde{X}_{K:(P-iK)}}\right) \nonumber \\
& \approx \frac{\Delta}{n} + \frac{1}{n\mu} \left( \log{\frac{P}{P-K}} + \log{\frac{P-K}{P-2K}} + \ldots + \log{\frac{P-(n-2)K}{P-(n-1)K}} \right) + \frac{1}{n\mu} \log{(K)} \nonumber \\
&\approx \frac{\Delta}{n} + \frac{ \log{P}}{n\mu} \nonumber \\
&= \frac{K\Delta}{P} + \frac{K \log{P}}{P\mu}.  
 \end{align}

In comparison, the runtime per iteration for $K$-sync SGD is $\left(\Delta + \frac{\log{P/(P-K)} }{\mu}   \right)$. Thus, a high value of $\Delta$ implies that the runtime of $K$-async SGD is strictly lower than $K$-sync SGD.

\subsubsection{Runtime of K-batch-async SGD}

For this variant, we derive an expression that holds for any general distribution on $X_i$.
\begin{lem}[Runtime of $K$-batch-async SGD]
\label{lem:runtime kbatch}
The expected runtime per iteration for $K$-batch-async SGD in the limit of large number of iterations is given by:
\begin{equation}
\E{T} = \frac{K\E{X}}{P}.
\end{equation}
\end{lem}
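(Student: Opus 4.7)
The plan is to use a renewal-theoretic argument. Since in $K$-batch-async SGD no worker is ever idle (each worker immediately fetches the current parameter and starts a new mini-batch the moment it finishes the previous one), the sequence of completion times at worker $i$ forms an ordinary renewal process with i.i.d.\ inter-arrival times distributed as $X_i$ with mean $\E{X}$. Let $N_i(t)$ denote the number of mini-batches that worker $i$ has completed by wallclock time $t$, and let $N(t)=\sum_{i=1}^{P}N_i(t)$ denote the total number of mini-batches completed across all workers by time $t$.

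First, I would invoke the elementary renewal theorem (equivalently, the strong law for renewal processes) on each worker, which gives $N_i(t)/t \to 1/\E{X}$ almost surely as $t\to\infty$. Summing over the $P$ independent workers,
\begin{equation}
\frac{N(t)}{t} \;=\; \sum_{i=1}^{P}\frac{N_i(t)}{t} \;\xrightarrow{\text{a.s.}}\; \frac{P}{\E{X}}.
\end{equation}
Next, I would relate the number of completed iterations $J(t)$ to $N(t)$. By the update rule for $K$-batch-async, the PS performs one iteration each time it accumulates $K$ fresh mini-batches, regardless of which worker they came from. Hence $K\cdot J(t) \leq N(t) < K\cdot(J(t)+1)$, which immediately yields
\begin{equation}
\frac{J(t)}{t} \;\xrightarrow{\text{a.s.}}\; \frac{P}{K\,\E{X}}.
\end{equation}

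Finally, since $\E{T}$ is the (long-run) expected time between two consecutive PS updates, I would identify $\E{T}$ with the reciprocal of the long-run iteration rate:
\begin{equation}
\E{T} \;=\; \lim_{t\to\infty} \frac{t}{J(t)} \;=\; \frac{K\,\E{X}}{P}.
\end{equation}
The main subtlety is justifying the passage from the almost-sure (or in-expectation) limit of $J(t)/t$ to the per-iteration expectation $\E{T}$; this is precisely the content of the lemma's qualifier \emph{``in the limit of large number of iterations,''} and it can be made rigorous by standard renewal-reward arguments (e.g., Wald's identity applied to $\sum_{j=1}^{J} T_j = t + o(t)$, where boundary effects from the partially completed mini-batches at time $t$ contribute only $O(1)$ and hence vanish after dividing by $J(t)\to\infty$). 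No distributional assumption on $X$ beyond finiteness of $\E{X}$ is needed, which explains why the expression holds for all distributions.
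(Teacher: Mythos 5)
Your proposal is correct and follows essentially the same route as the paper: model each worker's gradient pushes as a renewal process, apply the elementary renewal theorem, superpose the $P$ processes to get an aggregate push rate of $P/\E{X}$, and observe that every $K$ pushes constitute one iteration. Your added care in relating $J(t)$ to $N(t)$ and in flagging the passage from the long-run rate to the per-iteration expectation is a slight refinement of the paper's argument, not a different approach.
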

Unlike the results for the synchronous variants, this result on average runtime per iteration holds only in the limit of large number of iterations. To prove the result we use ideas from renewal theory. For a brief background on renewal theory, the reader is referred to \Cref{subsec:runtime_K_batch_async}. 

\begin{proof}[Proof of \Cref{lem:runtime kbatch}]
For the $i$-th worker, let $\{N_i(t),\ t>0\}$ be the number of times the $i$-th worker pushes its gradient to the PS over in time $t$. The time between two pushes is an independent realization of $X_i$. Thus, the inter-arrival times $X_i^{(1)}, X_i^{(2)},\dots$ are i.i.d.\ with mean inter-arrival time $\E{X_i}$. Using the elementary renewal theorem \cite[Chapter 5]{gallager2013stochastic} we have,
\begin{equation}
\lim_{t \to \infty} \frac{\E{N_i(t)}}{t} = \frac{1}{\E{X_i}}.
\end{equation}
Thus, the rate of gradient pushes by the $i$-th worker is $1/\E{X_i}$. As there are $P$ workers, we have a superposition of $P$ renewal processes and thus the average rate of gradient pushes to the PS is 
\begin{equation}
\lim_{t \to \infty} \sum_{i=1}^P\frac{\E{N_i(t)}}{t} = \sum_{i=1}^P \frac{1}{\E{X_i}} = \frac{P}{\E{X}}.
\end{equation}

Every $K$ pushes are one iteration. Thus, the expected runtime per iteration or effectively the expected time for $K$ pushes is given by
$
 \E{T}= \frac{K\E{X}}{P}.
$ \end{proof}

\section{Error Analysis: New Convergence Analysis for Asynchronous SGD}
\label{sec:main_async_fixed}

In this section, we discuss our analysis of error convergence with the number of iterations. We first state some assumptions on the objective function in \Cref{subsec:error_assumption}, followed by the main theoretical results on the error analysis including our novel analysis for asynchronous SGD and its variants in \Cref{subsec:error_main_results}.

\subsection{Assumptions on the Objective Function}
\label{subsec:error_assumption}

Closely following \cite{bottou2016optimization}, we make the following assumptions:
\begin{enumerate}[leftmargin=*]
\item $F(\wts)$ is an $\lips-$ smooth function. Thus,
\begin{equation}
||\nabla F(\wts)- \nabla F(\widetilde{\wts})||_2 \leq \lips ||\wts -\widetilde{\wts} ||_2 \ \ \forall \ \wts, \widetilde{\wts}.
\end{equation}  
\item $F(\wts)$ is strongly convex with parameter $c$. Thus,
\begin{equation}
\label{eq:strong-convexity}
2c(F(\wts)-F^* ) \leq ||\nabla F(\wts)||_2^2 \ \ \forall \  \wts.
\end{equation}
Refer to \Cref{sec:strong_convexity} for discussion on strong convexity. Our results also extend to non-convex objectives, as discussed in \Cref{thm:non_convex}.
\item The stochastic gradient is an unbiased estimate of the true gradient:
\begin{equation}
\Esub{\xi_{j}|
\wts_k}{g(\wts_k,\xi_{j})}= \nabla F(\wts_k) \ \ \forall \  k \leq j.
\end{equation}
Observe that this is slightly different from the common assumption that says $\Esub{\xi_{j}}
{g(\wts,\xi_{j})}= \nabla F(\wts)$ for all $\wts$.
Observe that all $\wts_j$ for $j>k$ is actually not independent of the data $\xi_{j}$.  We thus make the assumption more rigorous by conditioning on $\wts_k$ for $k \leq j$. Our requirement $k \leq j$ means that $\wts_k$ is the value of the parameter at the PS before the data $\xi_{j}$ was accessed and can thus be assumed to be independent of the data $\xi_{j}$.

\item Similar to the previous assumption, we also assume that the variance of the stochastic update given $\wts_k$ at iteration $k$ before the data point was accessed is also bounded as follows:
\begin{align}
&\Esub{\xi_{j}|\wts_k}{||g(\wts_k,\xi_{j})- \nabla F(\wts_k) ||_2^2} 
\leq \frac{\sigma^2}{m} + \frac{M_G}{m} ||\nabla F(\wts_k) ||_2^2 \ \forall \ k \leq j.
\end{align}

\item In the analysis of $K$-async and $K$-batch-async SGD, we replace some assumptions in existing literature that we discuss in \Cref{subsec:error_main_results}, and instead use an alternate staleness bound that allows for large, but rare delays. We assume that for some $\gamma\leq 1$,
\begin{equation}
\E{|| \nabla F(\wts_{j}) - \nabla F(\wts_{\tau(l,j)})||_2^2 } \leq \gamma \E{|| \nabla F(\wts_{j}) ||_2^2 }.
\end{equation}

\end{enumerate}

\subsection{Main Theoretical Results}
\label{subsec:error_main_results}
In this work, we provide a novel convergence analysis of $K$-async SGD for fixed $\eta$, relaxing the following assumptions in existing literature.
\begin{itemize}[leftmargin=*]
\item In several prior works such as \cite{mitliagkas2016asynchrony,lee2017speeding,dutta2016short,hannah2017more}, it is often assumed, for the ease of analysis, that runtimes are exponentially distributed. In this paper, we extend our analysis for any general service time $X_i$.
\item In \cite{mitliagkas2016asynchrony}, it is also assumed that the staleness process is independent of $\wts$. While this assumption simplifies the analysis greatly, it is not true in practice. For instance, for a two worker case, the parameter $\wts_2$ after $2$ iterations depends on whether the update from $\wts_1$ to $\wts_2$ was based on a stale gradient at $\wts_0$ or the current gradient at $\wts_1$, depending on which worker finished first. In this work, we remove this independence assumption.
\item Instead of the bounded delay assumption in \cite{lian2015asynchronous}, we use a general staleness bound $$\E{|| \nabla F(\wts_{j}) - \nabla F(\wts_{\tau(l,j)})||_2^2 } \leq \gamma \E{|| \nabla F(\wts_{j}) ||_2^2 }$$ which allows for large, but rare delays.
\item In \cite{recht2011hogwild}, the norm of the gradient is assumed to be bounded. However, if we assume that $||\nabla F(\wts) ||_2^2 \leq M$ for some constant $M$, then using \cref{eq:strong-convexity} we obtain $ ||\wts-\wts^*||_2^2 \leq \frac{2}{c} (F(\wts)-F^*) \leq \frac{M}{c^2} $ implying that $\wts$ itself is bounded which is a very strong and restrictive assumption, that we relax in this result.
\end{itemize}

\noindent Some of these assumptions have been addressed in the context of alternative asynchronous SGD variants in the recent works of \cite{hannah2017more,hannah2016unbounded,sun2017asynchronous,leblond2017asaga}.

\subsubsection{Convex Loss Function}
\begin{thm}
\label{thm:error kasync}
Suppose the objective $F(\wts)$ is $c$-strongly convex and the learning rate $\eta \leq \frac{1}{2\lips\left(\frac{M_G}{Km}+ \frac{1}{K}\right)}$. Also assume that for some $\gamma \leq 1$, $$\E{|| \nabla F(\wts_{j}) - \nabla F(\wts_{\tau(l,j)})||_2^2 } \leq \gamma \E{|| \nabla F(\wts_{j}) ||_2^2 }.$$ Then, the error of $K$-async SGD after $J$ iterations is,
\begin{align}
& \E{F(\wts_{J})}-F^* 
\leq \frac{\eta L\sigma^2}{2c\gamma'K m} % \nonumber \\ & \hspace{0.6cm} 
+ (1 - \eta c\gamma')^{J} \left(F(\wts_{0})-F^* - \frac{\eta L\sigma^2}{2c \gamma' Km}  \right)
\end{align}
where $\gamma'= 1-\gamma + \frac{p_0}{2}$ and $p_0$ is a lower bound on the conditional probability that $\tau(l,j)=j$, given all the past delays and parameters.
\end{thm}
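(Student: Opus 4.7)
The overall strategy is the standard descent-lemma approach for strongly convex SGD, but adapted to handle stale gradients via the relaxed staleness bound and the parameter $p_0$. I would start from the $L$-smoothness inequality
\[
F(\wts_{j+1}) \leq F(\wts_{j}) + \langle \nabla F(\wts_{j}), \wts_{j+1}-\wts_{j}\rangle + \tfrac{L}{2}\|\wts_{j+1}-\wts_{j}\|_2^2,
\]
plug in the $K$-async update rule \eqref{eq:kasync}, and take conditional expectation over the fresh mini-batches $\xi_{l,j}$ given the history (parameters and delay process up to iteration $j$). The unbiasedness assumption applied at $\wts_{\tau(l,j)}$ (not $\wts_j$, since the $\xi_{l,j}$ need only be independent of $\wts_{\tau(l,j)}$) converts the linear term to $-\eta\,\langle \nabla F(\wts_j), \tfrac{1}{K}\sum_l \nabla F(\wts_{\tau(l,j)})\rangle$, and the variance bound controls the squared-norm term. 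This is where I expect to have to be careful, because unlike classical analyses the $\wts_{\tau(l,j)}$ are random and correlated with the delays.

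The core of the argument is handling the inner product $\langle \nabla F(\wts_j), \nabla F(\wts_{\tau(l,j)})\rangle$ without bounded-delay or independence-of-staleness assumptions. I would apply the polarization identity $2\langle a,b\rangle = \|a\|^2+\|b\|^2-\|a-b\|^2$ to rewrite this as a sum of gradient norms minus the staleness deviation $\|\nabla F(\wts_j)-\nabla F(\wts_{\tau(l,j)})\|_2^2$. Taking full expectation and invoking the relaxed staleness bound $\E{\|\nabla F(\wts_j)-\nabla F(\wts_{\tau(l,j)})\|_2^2}\le \gamma\,\E{\|\nabla F(\wts_j)\|_2^2}$ produces a factor of $(1-\gamma)\E{\|\nabla F(\wts_j)\|_2^2}$. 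To recover the extra $p_0/2$ in $\gamma'$, I would condition on the event $\{\tau(l,j)=j\}$: by the law of total expectation, the indices with $\tau(l,j)=j$ contribute $\|\nabla F(\wts_j)\|_2^2$ exactly, and the given lower bound $p_0$ on $\Pr(\tau(l,j)=j\mid\text{past})$ lets us extract an additional $\tfrac{p_0}{2}\E{\|\nabla F(\wts_j)\|_2^2}$ term. Summing these contributions yields the effective contraction coefficient $\gamma'=1-\gamma+p_0/2$.

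For the quadratic term, I would expand $\|\sum_l g(\wts_{\tau(l,j)},\xi_{l,j})\|_2^2$ into a variance piece and a mean piece, using independence of the mini-batches conditioned on the history to kill cross-covariance terms (only the diagonal variance survives). Assumption 4 then bounds each variance by $\sigma^2/m + (M_G/m)\E{\|\nabla F(\wts_{\tau(l,j)})\|_2^2}$, which $L$-smoothness plus the staleness bound allow me to re-express in terms of $\E{\|\nabla F(\wts_j)\|_2^2}$. The learning-rate constraint $\eta\le 1/(2L(M_G/(Km)+1/K))$ is precisely what is needed to absorb the $\|\nabla F(\wts_j)\|_2^2$ contribution from this term into the favorable $\gamma'$ factor obtained above, leaving only the residual $\eta^2 L\sigma^2/(2Km)$ as the noise floor.

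Combining these steps produces the one-step contraction
\[
\E{F(\wts_{j+1})}-F^* \;\leq\; (1-\eta c\gamma')\,\bigl(\E{F(\wts_{j})}-F^*\bigr) \;+\; \frac{\eta^2 L \sigma^2}{2Km},
\]
after invoking strong convexity \eqref{eq:strong-convexity} to turn $\E{\|\nabla F(\wts_j)\|_2^2}$ into $2c\,\E{F(\wts_j)-F^*}$. Unrolling this geometric recursion over $J$ iterations and summing the geometric series gives the stated bound. The main obstacle, as flagged, is the correlation between the delay process and the parameter iterates: extracting the $p_0/2$ bonus cleanly requires conditioning on the full history before invoking the lower bound on $\Pr(\tau(l,j)=j\mid\text{past})$, and care is needed so that no ``independence of delays and $\wts$'' is smuggled in when reducing $\E{\|\nabla F(\wts_{\tau(l,j)})\|_2^2}$ back to $\E{\|\nabla F(\wts_j)\|_2^2}$ via smoothness and the staleness bound.
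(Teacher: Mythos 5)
Your plan follows the paper's proof essentially step for step: the descent lemma, polarization of the inner product, the relaxed staleness bound yielding the $(1-\gamma)$ factor, the law-of-total-expectation argument with the lower bound $p_0$ on $\Pr(\tau(l,j)=j\mid \text{past})$ (the paper's \Cref{lem:delay}) yielding the extra $p_0/2$, cancellation of the cross-covariances in the variance of the summed stale gradients (the paper's \Cref{lem:assumption4}), absorption of the $M_G$ contribution via the learning-rate condition, and finally strong convexity plus geometric unrolling. The only cosmetic difference is that the paper absorbs the $\frac{M_G}{m}\E{\|\nabla F(\wts_{\tau(l,j)})\|_2^2}$ variance contribution directly into the matching negative $-\frac{\eta}{2K}\sum_{l}\E{\|\nabla F(\wts_{\tau(l,j)})\|_2^2}$ term produced by the polarization identity (both are indexed by $\wts_{\tau(l,j)}$, so no conversion back to $\wts_j$ via smoothness is needed) and only afterwards applies the $p_0$ lower bound to the surviving $-\frac{\eta}{4K}\sum_l\E{\|\nabla F(\wts_{\tau(l,j)})\|_2^2}$, which is precisely how the stated threshold $\eta \le \frac{1}{2L(M_G/(Km)+1/K)}$ and the coefficient $p_0/2$ (rather than $p_0$) emerge.
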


Here, $\gamma$ is a measure of staleness of the gradients returned by workers; smaller $\gamma$ indicates less staleness. 
The full proof is provided in \Cref{sec:async_proof}. We first prove the result for $K=1$ in \Cref{subsec:async_proof} for ease of understanding, and then provide the more general proof for any $K$ in \Cref{subsec:K_async_proof}.

\Cref{lem:p_0} below provides bounds on $p_0$ for different classes of distributions.

\begin{lem}[Bounds on $p_0$]
\label{lem:p_0}
Define $p_0=\inf_{j}p_0^{(j)}$.%, \textit{i.e.}, the largest constant such that $p_0 \leq p_0^{(j)} \ \forall \ j$. 
Then the following holds:
\begin{itemize}
\item For exponential computation times, $p_0^{(j)} = \frac{1}{P}$ for all $j$ (invariant of $j$) and $p_0=\frac{1}{P}$.
\item For new-longer-than-used (See \Cref{defn:new-longer-than-used}) computation times, $p_0^{(j)} \leq \frac{1}{P} $ and thus $p_0 \leq \frac{1}{P} $.
\item For new-shorter-than-used computation times, $p_0^{(j)} \geq \frac{1}{P} $ and thus $p_0 \geq \frac{1}{P} $.
\end{itemize}
\end{lem}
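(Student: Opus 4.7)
I would reduce each case to a direct probabilistic comparison: at the moment right after the PS updates to $\wts_j$, exactly one worker (say worker $A$, the one that just pushed its gradient for $K=1$) restarts with a freshly sampled service time $X \sim F_X$, while the other $P-1$ workers have been running for (history-measurable) elapsed times $t_1,\dots,t_{P-1}$ and carry residual service times $R_i := (U_i - t_i) \mid U_i > t_i$ for independent $U_i \sim F_X$. The event $\{\tau(l,j)=j\}$ for the worker contributing the next update is exactly $\{X < \min_i R_i\}$. Conditioning on the history $\mathcal H$ fixes the $t_i$, and $X,R_1,\dots,R_{P-1}$ are then mutually independent, so
\begin{equation*}
p_0^{(j)} \;=\; \Pr\!\bigl(X < \min_i R_i \bigm| \mathcal H\bigr) \;=\; \int_0^\infty f_X(u)\,\prod_{i=1}^{P-1}\bar F_{R_i}(u)\,du,\qquad \bar F_{R_i}(u)=\Pr(U_i>u+t_i\mid U_i>t_i).
\end{equation*}
The rest is a pointwise comparison of $\bar F_{R_i}(u)$ against $\bar F_X(u)$.

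\textbf{Case analysis.} For exponential service, the memoryless property gives $\bar F_{R_i}(u)=\bar F_X(u)$ identically, so the integrand reduces to $f_X(u)\bar F_X(u)^{P-1}$. The integral is then the classical symmetry identity $\int_0^\infty f_X(u)\bar F_X(u)^{P-1}\,du = 1/P$ (the probability that a designated one among $P$ i.i.d.\ copies is the minimum), yielding $p_0^{(j)}=1/P$ for every $j$. For a new-longer-than-used distribution, Definition~\ref{defn:new-longer-than-used} gives $\bar F_{R_i}(u)\le\bar F_X(u)$ for every $u$, hence
\begin{equation*}
p_0^{(j)} \;\le\; \int_0^\infty f_X(u)\bar F_X(u)^{P-1}\,du \;=\; \frac{1}{P},
\end{equation*}
and the new-shorter-than-used case is identical with the inequality reversed. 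Finally I take $\inf_j$ to transfer each pointwise bound to $p_0$.

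\textbf{Extension to $K>1$.} The same comparison applies at each successive completion: when $\ell-1$ of the next $K$ workers have reported, the next to finish is determined by a contest among the remaining $P-\ell+1$ residuals, of which some are fresh (read $\wts_j$) and the rest are stale. The identical memoryless/stochastic-dominance argument on these residuals pins the conditional probability of a fresh contestant winning at $1/(P-\ell+1)$ for exponentials and on the appropriate side of that value for the two classes.

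\textbf{Main obstacle.} The one genuinely technical point is justifying that, conditional on the full past history of delays and parameters, the residuals $R_1,\dots,R_{P-1}$ are mutually independent of each other and of the fresh $X$. This is where I would be careful: it follows because the $P$ workers' underlying service times are independent across (re)starts and the elapsed times $t_i$ are $\mathcal H$-measurable, so conditioning on $\mathcal H$ only fixes the $t_i$ and leaves the conditional laws of the $R_i$ and the fresh $X$ as an independent product. Once this is pinned down, the three cases are essentially one line each.
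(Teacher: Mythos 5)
Your proof is correct and follows essentially the same route as the paper's: condition on the history so that the freshly restarted worker carries a full service time while the other $P-1$ carry residual times, then use memorylessness for the exponential case and the new-longer/shorter-than-used definition to compare each residual's survival function with that of a fresh $X$. Your explicit integral $\int_0^\infty f_X(u)\prod_{i}\bar F_{R_i}(u)\,du \lessgtr \int_0^\infty f_X(u)\bar F_X(u)^{P-1}\,du = \tfrac{1}{P}$ is in fact a slightly more rigorous rendering of the paper's verbal argument (which asserts that each stale worker is individually more likely to finish first and appeals to symmetry), and the independence point you flag is treated the same way there.
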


The proof is provided in \Cref{subsec:proof_lem_p_0}.

\begin{rem}For $K$-batch-async, the update rule is same as $K$-async except that the index $l$ denotes the index of the mini-batch. Thus, the error analysis will be exactly similar.
\end{rem}

Now let us compare with $K$-sync SGD. We observe that the analysis of $K$-sync SGD is same as serial SGD with mini-batch size $Km$. Thus,
\begin{lem}[Error of $K$-sync
 \cite{bottou2016optimization}]
Suppose that the objective $F(\wts)$ is $c$-strongly convex and learning rate $\eta \leq \frac{1}{2\lips(\frac{M_G}{Km}+ 1)}$. Then, the error after $J$ iterations of $K$-sync SGD is
\begin{align*}
\E{F(\wts_{\iters}) - F^*}&\leq \frac{ \eta \lips \sigma^2}{2 c (K\mb)} +  (1- \eta c )^{J} \left( F(\wts_0) - F^* - \frac{ \eta \lips \sigma^2}{2 c (K\mb)} \right).
\end{align*}
\label{lem:error ksync}
\end{lem}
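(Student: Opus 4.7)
The plan is to treat $K$-sync SGD as ordinary mini-batch SGD with an effective batch size of $Km$, since all $K$ workers evaluate their gradients at the \emph{same} parameter $\wts_j$ (unlike in the asynchronous case). Concretely, I would write the aggregate stochastic gradient as $\bar{g}_j = \frac{1}{K}\sum_{l=1}^K g(\wts_j, \xi_{l,j})$. Using the unbiasedness assumption and the i.i.d.\ property of $\xi_{l,j}$ across workers, I obtain $\Esub{\cdot|\wts_j}{\bar{g}_j} = \nabla F(\wts_j)$, and by averaging independent estimators the conditional variance reduces by a factor of $K$, yielding
\begin{equation*}
\Esub{\cdot|\wts_j}{\|\bar{g}_j\|_2^2} \leq \|\nabla F(\wts_j)\|_2^2 + \frac{\sigma^2}{Km} + \frac{M_G}{Km}\|\nabla F(\wts_j)\|_2^2.
\end{equation*}

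Next I would exploit $L$-smoothness in the standard descent lemma form $F(\wts_{j+1}) \leq F(\wts_j) - \eta\, \nabla F(\wts_j)^\top \bar{g}_j + \tfrac{L\eta^2}{2}\|\bar{g}_j\|_2^2$, then take conditional expectation given $\wts_j$ and substitute the two bounds above. This gives
\begin{equation*}
\Esub{\cdot|\wts_j}{F(\wts_{j+1})} \leq F(\wts_j) - \eta\left(1 - \tfrac{L\eta}{2}\left(1 + \tfrac{M_G}{Km}\right)\right)\|\nabla F(\wts_j)\|_2^2 + \tfrac{L\eta^2 \sigma^2}{2Km}.
\end{equation*}
The step-size condition $\eta \leq \frac{1}{2L(M_G/(Km)+1)}$ is precisely what is needed to guarantee that the coefficient of $\|\nabla F(\wts_j)\|_2^2$ is at least $\eta/2$. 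Strong convexity then converts this gradient norm into a progress bound via $\|\nabla F(\wts_j)\|_2^2 \geq 2c(F(\wts_j) - F^*)$, producing the one-step recursion
\begin{equation*}
\E{F(\wts_{j+1}) - F^*} \leq (1-\eta c)\,\E{F(\wts_j) - F^*} + \tfrac{L\eta^2 \sigma^2}{2Km}
\end{equation*}
after taking total expectation.

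Finally, I would unroll this affine recursion over $J$ iterations. The homogeneous part contributes $(1-\eta c)^J(F(\wts_0)-F^*)$ and the forcing term sums as a geometric series with ratio $(1-\eta c)$, producing $\frac{L\eta^2\sigma^2}{2Km}\cdot\frac{1-(1-\eta c)^J}{\eta c}$. Rearranging these two pieces around the stationary point $\frac{\eta L \sigma^2}{2cKm}$ gives exactly the stated bound. None of the steps are genuinely hard; the only place one has to be careful is checking that the step-size condition gives coefficient $\geq \eta/2$ rather than some smaller constant (since $\eta L(1+M_G/(Km)) \leq 1/2$ implies $1 - \tfrac{L\eta}{2}(1+M_G/(Km)) \geq 3/4 > 1/2$). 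This is the direct classical derivation from Bottou et al., adapted with the effective batch size $Km$ which is the key observation for $K$-sync.
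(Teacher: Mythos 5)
Your proposal is correct and follows exactly the route the paper intends: the paper itself gives no separate proof of this lemma, merely observing that $K$-sync SGD is serial mini-batch SGD with effective batch size $Km$ and citing Bottou et al., and your derivation (unbiasedness plus $1/K$ variance reduction across i.i.d.\ workers, descent lemma, the step-size condition making the gradient coefficient at least $\eta/2$, strong convexity in the Polyak--{\L}ojasiewicz form, and unrolling the affine recursion) is precisely that classical argument. All steps check out, including the final rearrangement around the fixed point $\frac{\eta L \sigma^2}{2cKm}$.
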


\textit{Can stale gradients win the race?}
For the same $\eta$, observe that the error given by \Cref{thm:error kasync} decays at the rate $(1 - \eta c(1-\gamma + \frac{p_0}{2}))$ for $K$-async or $K$-batch-async SGD while for $K$-sync, the decay rate with number of iterations is $(1 - \eta c)$. Thus, depending on the values of $\gamma$ and $p_0$, the decay rate of $K$-async or $K$-batch-async SGD can be faster or slower than $K$-sync SGD. The decay rate of $K$-async or $K$-batch-async SGD is faster if $\frac{p_0}{2}>\gamma $. As an example, one might consider an exponential or new-shorter-than-used service time where $p_0 \geq \frac{1}{P}$ and $\gamma$ can be made smaller by increasing $K$. It might be noted that asynchronous SGD can still be faster than synchronous SGD with respect to wallclock time even if its decay rate with respect to number of iterations is lower as every iteration is much faster in asynchronous SGD (Roughly $P\log{P}$ times faster for exponential service times).

The maximum allowable learning rate for synchronous SGD is $\max\{ \frac{1}{c}, \frac{1}{2L (\frac{M_G}{Pm}+1)}  \}$ which can be much higher than that for asynchronous SGD,\textit{i.e.}, $\max\{ \frac{1}{c(1-\gamma + \frac{p_0}{2})}, \frac{1}{2L (\frac{M_G}{m}+1)}  \}$. Similarly the error floor for synchronous is $ \frac{\eta L\sigma^2}{2c Pm} $ as compared to asynchronous whose error floor is $ \frac{\eta L\sigma^2}{2c(1-\gamma + \frac{p_0}{2})m}$.

\subsubsection{Extension to Non-Convex Loss Function}
\label{subsec:non_convex}
The analysis can be extended to provide weaker guarantees for non-convex objectives. Let $\gamma'=1-\gamma + \frac{p_0}{2} $. For non-convex objectives, we have the following result.

\begin{thm} For non-convex objective function $F(\cdot)$, where $F^*=\min_{\wts} F(\wts)$, we have the following ergodic convergence result for $K$-async SGD:
\begin{equation}
\frac{1}{J} \sum_{j=0}^{J-1} \E{|| \nabla F(\wts_j)   ||_2^2} \leq \frac{2(F(\wts_0)-F^*)}{J \eta \gamma' } +  \frac{L\eta\sigma^2}{Km\gamma'}.
\label{eq:non_convex}
\end{equation}
\label{thm:non_convex}
\end{thm}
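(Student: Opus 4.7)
The plan is to recycle the one-step descent machinery that yields Theorem 1 and, in place of strong convexity, simply telescope across iterations. Starting from the $L$-smoothness inequality
\begin{equation*}
F(\wts_{j+1}) \leq F(\wts_j) + \langle \nabla F(\wts_j),\, \wts_{j+1}-\wts_j\rangle + \tfrac{L}{2}\|\wts_{j+1}-\wts_j\|_2^2,
\end{equation*}
I would substitute the $K$-async update $\wts_{j+1}-\wts_j = -\tfrac{\eta}{K}\sum_{l=1}^K g(\wts_{\tau(l,j)}, \xi_{l,j})$ and take conditional expectations over the minibatches and the delay process given the history. Three ingredients then come into play: (i) the unbiasedness assumption replaces each $g(\wts_{\tau(l,j)}, \xi_{l,j})$ by $\nabla F(\wts_{\tau(l,j)})$ inside the linear term; (ii) the per-worker variance bound, together with the conditional independence of minibatches across workers, controls $\E{\|\wts_{j+1}-\wts_j\|_2^2}$ by $\eta^2 \sigma^2/(Km)$ plus a term bounded by $\eta^2 L (M_G/(Km)+1/K)\,\|\nabla F(\wts_j)\|_2^2$; and (iii) the cross term $\langle \nabla F(\wts_j),\, \nabla F(\wts_{\tau(l,j)})\rangle$ is lower bounded using the polarization identity $2\langle a,b\rangle = \|a\|_2^2 + \|b\|_2^2 - \|a-b\|_2^2$, the staleness hypothesis $\E{\|\nabla F(\wts_j) - \nabla F(\wts_{\tau(l,j)})\|_2^2} \leq \gamma\,\E{\|\nabla F(\wts_j)\|_2^2}$, and the fact that with conditional probability at least $p_0$ the delayed gradient is actually fresh, i.e.\ $\tau(l,j)=j$.

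With the step size chosen as in the hypothesis $\eta \leq 1/(2L(M_G/(Km)+1/K))$, the quadratic remainder from (ii) is absorbed into the negative coefficient generated by (iii), producing a one-step descent inequality
\begin{equation*}
\E{F(\wts_{j+1})} \leq \E{F(\wts_j)} - \tfrac{\eta \gamma'}{2}\,\E{\|\nabla F(\wts_j)\|_2^2} + \tfrac{\eta^2 L \sigma^2}{2 K m},
\end{equation*}
with $\gamma' = 1-\gamma + p_0/2$. Isolating $\E{\|\nabla F(\wts_j)\|_2^2}$, summing over $j=0,\ldots,J-1$, using $\E{F(\wts_J)} \geq F^*$ to cancel the telescoping sum, and dividing by $J$ immediately yields the claim \eqref{eq:non_convex}.

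The main obstacle is the cross-term step (iii), because $\tau(l,j)$ is not independent of the parameter trajectory --- precisely the coupling that earlier literature, e.g.\ \cite{mitliagkas2016asynchrony}, explicitly ruled out by an independence assumption. The cleanest route is to split on the event $\{\tau(l,j)=j\}$: by the definition of $p_0$ this event has conditional probability at least $p_0$ given the past, and on it the inner product equals $\|\nabla F(\wts_j)\|_2^2$; on the complementary event, the polarization identity combined with the staleness bound loses at most a $\gamma$-multiple of $\|\nabla F(\wts_j)\|_2^2$. Averaging the two contributions produces exactly the $(1-\gamma+p_0/2)$ coefficient that defines $\gamma'$. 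Since this cross-term calculation is already the workhorse of the proof of Theorem 1, it transports verbatim; the only genuinely new step in the non-convex case is swapping the geometric contraction afforded by strong convexity for a straightforward telescoping sum across $j$.
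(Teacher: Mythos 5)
Your proposal is correct and follows essentially the same route as the paper: the paper's proof of \Cref{thm:non_convex} simply reuses the one-step recursion \eqref{recursion2} already established in the proof of \Cref{thm:error kasync} (which never invokes strong convexity until its final step), rearranges it to isolate $\E{\|\nabla F(\wts_j)\|_2^2}$, telescopes over $j=0,\dots,J-1$, and applies $\E{F(\wts_J)}\geq F^*$. Your reconstruction of that recursion, including the $(1-\gamma+p_0/2)$ coefficient from the staleness bound and \Cref{lem:delay}, matches the paper's argument.
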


The proof is provided in \Cref{subsec:nonconvex}. As before, the same analysis also holds for $K$-batch-async SGD. For $K$-sync and $K$-batch-sync, we can also obtain a similar result, substituting $\gamma'=1$ in \Cref{eq:non_convex} (see \cite{bottou2016optimization}). Next, we combine our runtime analysis with the error analysis to characterize the error-runtime trade-off.

\section{Experiments and Insights on the Error-Runtime Trade-off}
\label{sec:error_runtime}

We can combine our expressions for runtime per iteration with the error convergence per iteration to derive the error-runtime trade-off. 
In \Cref{fig:overview}(b), we compare the theoretical trade-offs between synchronous ($K=P$ in \Cref{lem:error ksync} and \Cref{lem:runtime ksync}) and asynchronous SGD ($K=1$ in \Cref{thm:error kasync} and \Cref{lem:runtime kasync}) under the strongly-convex assumption. Asynchronous SGD converges very quickly, but to a higher floor. On the other hand, synchronous SGD converges slowly with respect to time, but reaches a much lower error-floor. To validate the trend observed in theory, we conduct experiments on training neural networks to perform image classfication on CIFAR 10 dataset. These experiments give insights on how choosing the right $K$ helps us strike the best error-runtime trade-off, depending upon the distribution of the gradient computation delays.

\subsection{Experimental Setting}
\label{sec:experiments}
%\subsection{Experimental Setting}
The algorithms discussed in this paper are implemented in Pytorch (v1.0) using multiple nodes. Ray (v0.7) is used for supporting the distributed execution. 
We use the {CIFAR-10} \cite{krizhevsky2009learning} dataset. This dataset consists of $60,000$ $32\times32$ color images in $10$ classes. There are $50,000$ training images and $10,000$ validation images. The neural network used to classify this dataset has two convolutional layers and three fully connected layers. Experiments were conducted on a local cluster with 8 worker machines, each of which has an NVIDIA TitanX GPU. Machines are connected via a 40 Gbps (5000 Mb/s) Ethernet interface. Mini-batch size per worker machine is $32$ and learning rate is $\eta =0.12$.

\subsection{Speedup in Runtime}
In \Cref{fig:cifar_runtime}(a) and \Cref{fig:cifar_runtime}(b) we compare the average runtime per epoch of $K$-sync and $K$-async SGD for different values of $K$. When $K=P=8$, both the variants become identical to fully synchronous SGD. As we decrease $K$ from $P=8$ to $1$, the computation time drastically reduces since we do not have to wait for straggling nodes. $K$-async SGD gives a larger delay reduction than $K$-sync SGD because we do not cancel partially completed gradient computation tasks. Each plot shows three cases: 1) with no artificial delays added to induce straggling, 2) with an additional exponential delay with mean $0.02$sec, and 3) with an additional exponential delay with mean $0.05$sec. The purpose of these curves is to demonstrate how variability in gradient computation time affects the runtime per iteration. Higher variability means that the system is more susceptible to straggling workers. Thus, as the delay variability increases (as we add a higher mean exponential delay per worker), setting a smaller $K$ gives sharper delay reduction as compared to the $K=8$ (fully synchronous SGD) case.

\begin{figure}[t]
    \centering
    \begin{subfigure}{.4\textwidth}
    \centering
    \includegraphics[width=\textwidth]{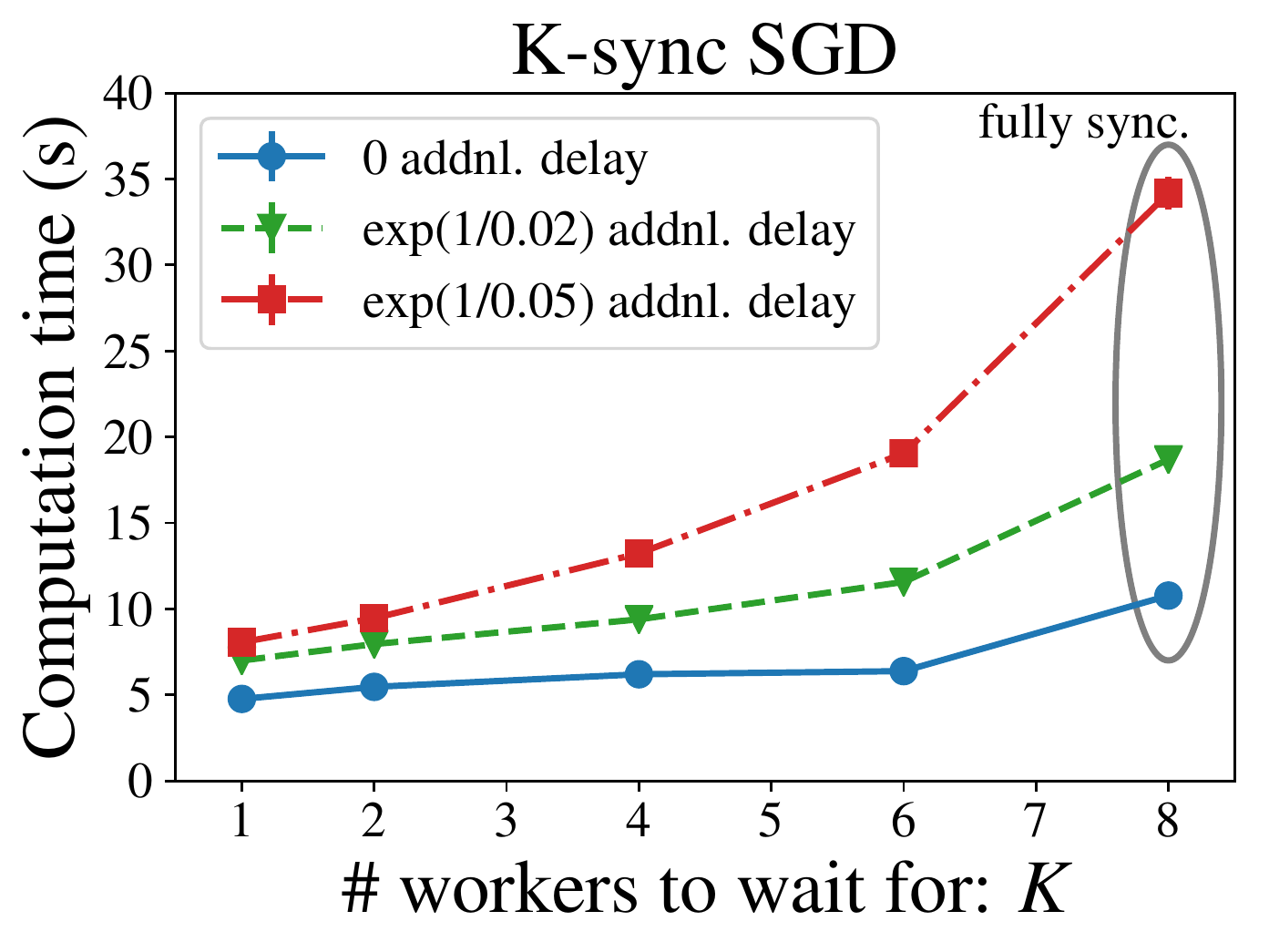}
    \caption{Runtime-versus-$K$ in K-sync SGD.}
    \end{subfigure}%
    ~
    \begin{subfigure}{.4\textwidth}
    \centering
    \includegraphics[width=\textwidth]{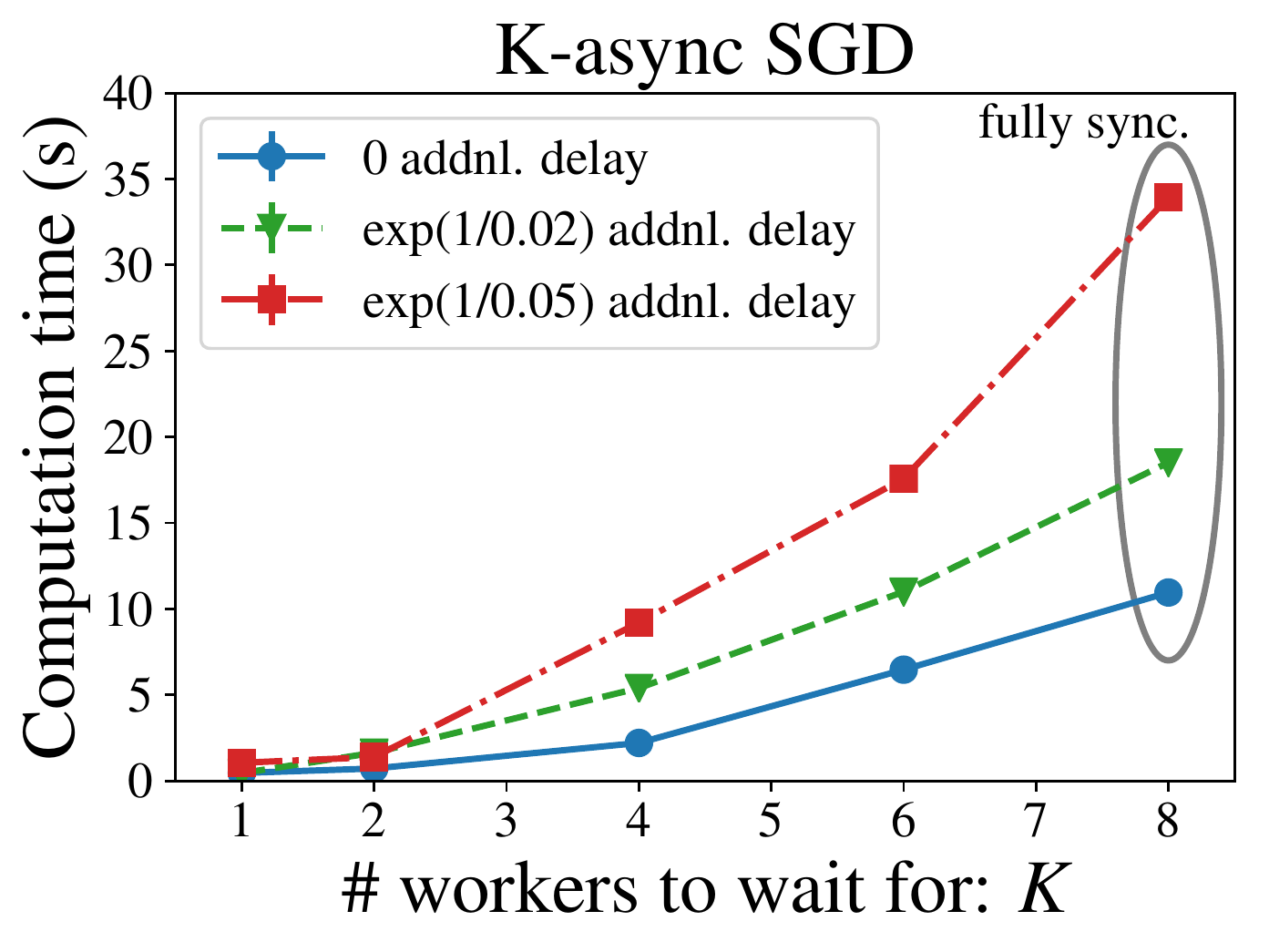}
    \caption{Runtime-versus-$K$ in K-async SGD.}
    \end{subfigure}%
    \caption{Runtime per iteration changes along with the parameter $K$ for CIFAR10 dataset: K-async SGD always has lower runtime than $K$-sync SGD as one might expect from our theoretical analysis. 
    \label{fig:cifar_runtime}}
\end{figure}

\subsection{Accuracy-Runtime Trade-off in $K$-sync and $K$-async SGD}

In this subsection, we examine the accuracy-runtime trade-offs for $K$-sync and $K$-async SGD variants for CIFAR10 dataset. We plot the test error against wallclock runtime for three cases: (a) No artificial delay; (b) Exponential delay with mean 0.02sec; and (c) Exponential delay with mean 0.05sec. $K$-sync SGD in \Cref{fig:cifar_ksync} shows the test error for $K$-sync SGD and \Cref{fig:cifar_kasync} shows $K$-async SGD. For brevity, we present test error plots here and include the training loss plots in \Cref{app:additional_experimental_results}. The training loss follows the same trend as test error.

As predicted from our theoretical analysis, increasing $K$ improves the final error floor for all the SGD variants. But increasing $K$ also increases the runtime per iteration. Hence, when the error is plotted against wallclock run-time, we begin to observe interesting trends in the error-runtime trade-offs -- the highest $K$ does not always achieve the best error-runtime trade-off. An intermediate value of $K$ often achieves a better trade-off than fully-asynchronous or fully-synchronous SGD. For example, in \Cref{fig:cifar_sync_0}, since there is little delay variability in the gradient computation time, $K=8$ (fully synchronous SGD) is the best choice of $K$, but as the delay variability increases in \Cref{fig:cifar_sync_2} and \Cref{fig:cifar_sync_5}, $K=4$ becomes the case that gives the fastest error-versus-wallclock time convergence. A similar trend can be observed in \Cref{fig:cifar_kasync} for $K$-async SGD.

\begin{rem}
Note that the problem is choosing $K$ is similar in spirit to that of selecting the best mini-batch size in standard synchronous SGD. The main difference is that we consider the error-runtime trade-off instead of the error-iterations trade-off when making the choice. 
\end{rem}

So far we have considered partially synchronous/asynchronous SGD variants with a fixed value of $K$. In \Cref{sec:vary_synchronicity} we propose the AdaSync strategy that gradually adapts $K$ during the training process in order to get the best error-runtime trade-off.

\begin{figure}[thbp]
    \centering
    \begin{subfigure}{.33\textwidth}
    \centering
    \includegraphics[width=\textwidth]{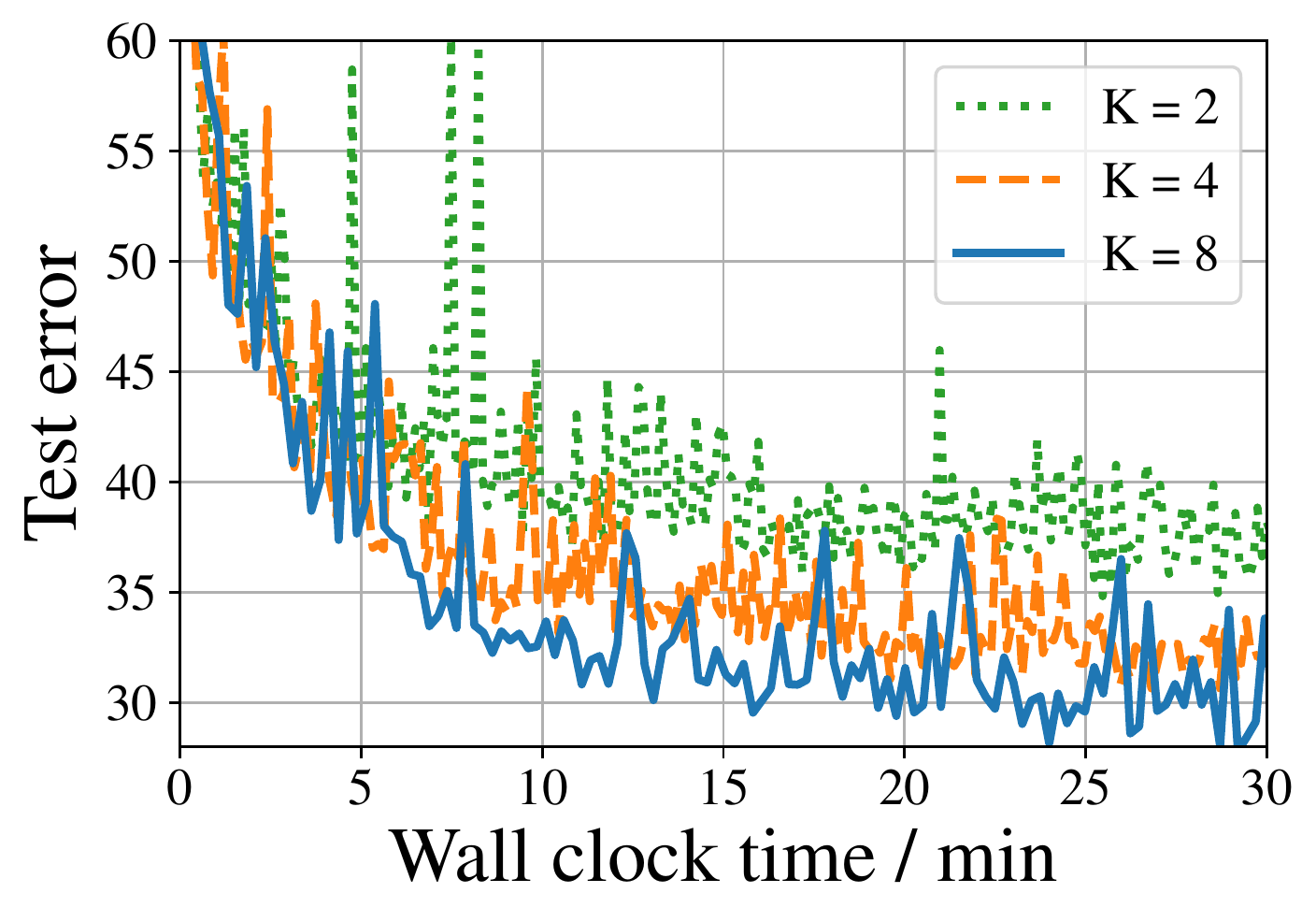}
    \caption{No artificial delay.}
    \label{fig:cifar_sync_0}
    \end{subfigure}%
    ~
    \begin{subfigure}{.33\textwidth}
    \centering
    \includegraphics[width=\textwidth]{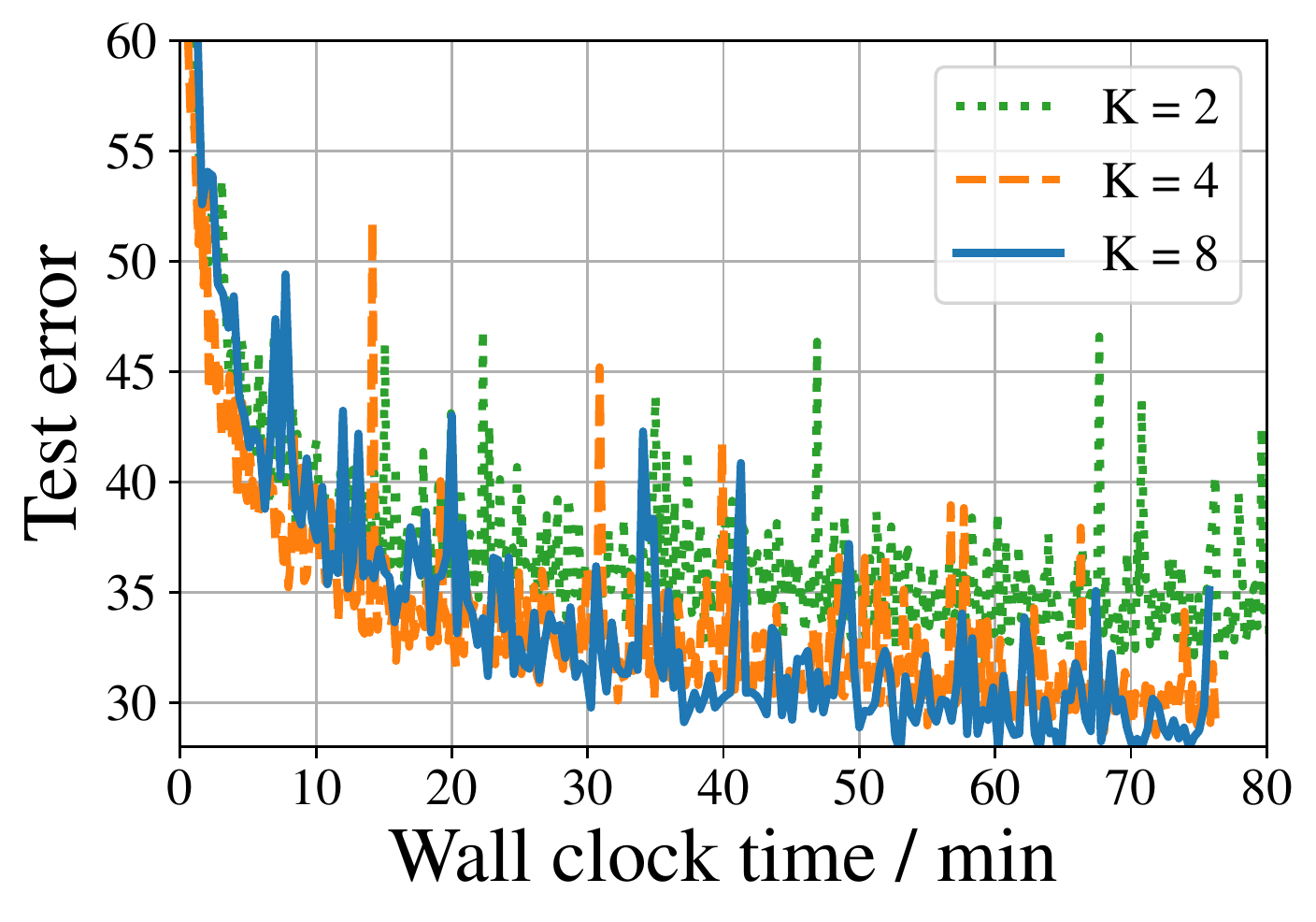}
    \caption{Delay with mean 0.02s.}
    \label{fig:cifar_sync_2}
    \end{subfigure}%
    ~
    \begin{subfigure}{.33\textwidth}
    \centering
    \includegraphics[width=\textwidth]{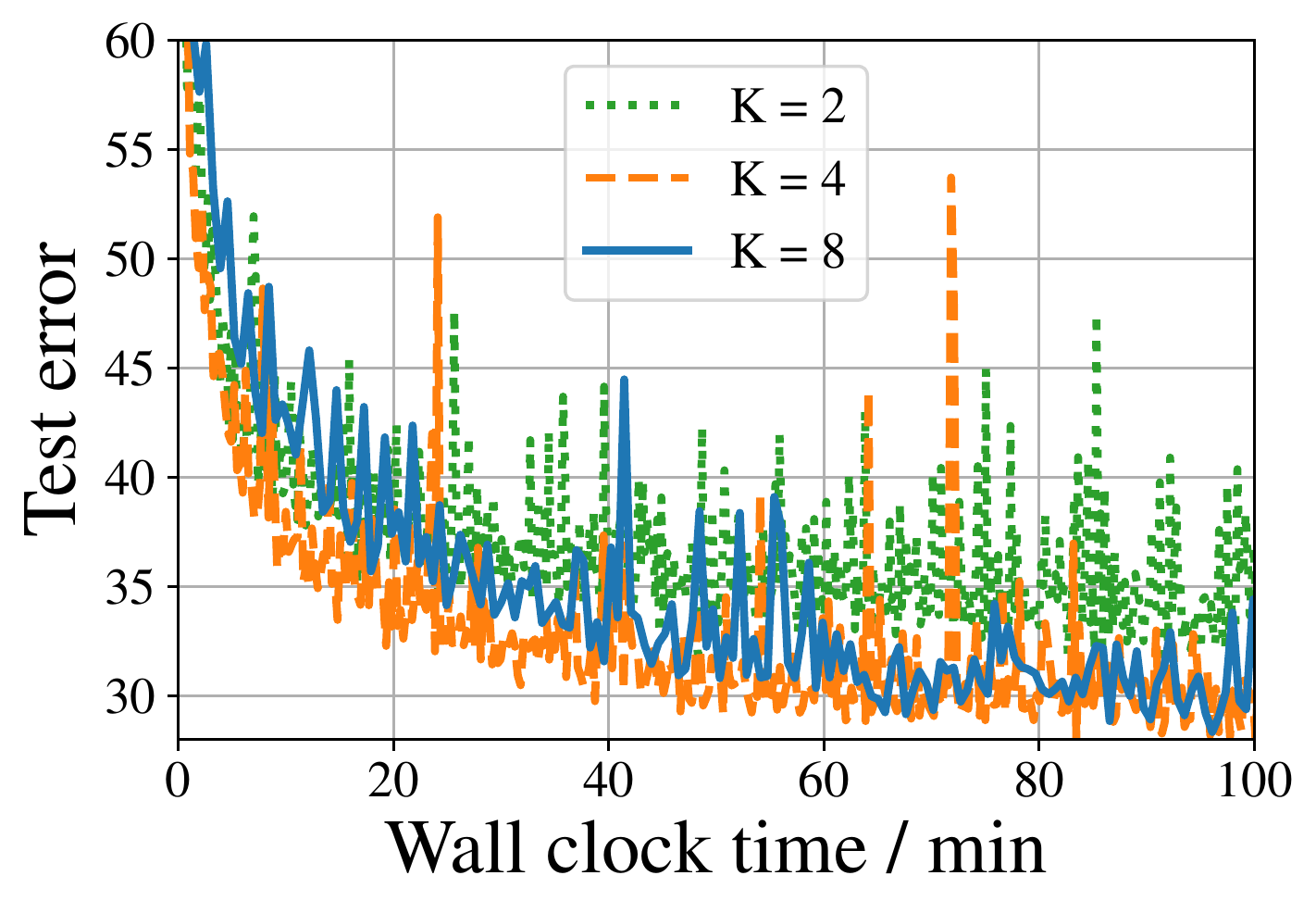}
    \caption{Delay with mean 0.05s.}
    \label{fig:cifar_sync_5}
    \end{subfigure}%
    \caption{Test error ($\%$) of \textbf{K-sync SGD} on CIFAR-10 with $8$ worker nodes. We now demonstrate the error-runtime trade-off for the case with no artificial delay, and then also plot the trade-off as we add an exponential delay on each worker. As the mean of the additional delay increases, using an intermediate value of $K$ achieves a better error-runtime trade-off. The trend of training loss is similar (see \Cref{fig:cifar_ksync_loss} in \Cref{app:additional_experimental_results}).
    \label{fig:cifar_ksync}}
\end{figure}

\begin{figure}[thb]
    \centering
    \begin{subfigure}{.33\textwidth}
    \centering
    \includegraphics[width=\textwidth]{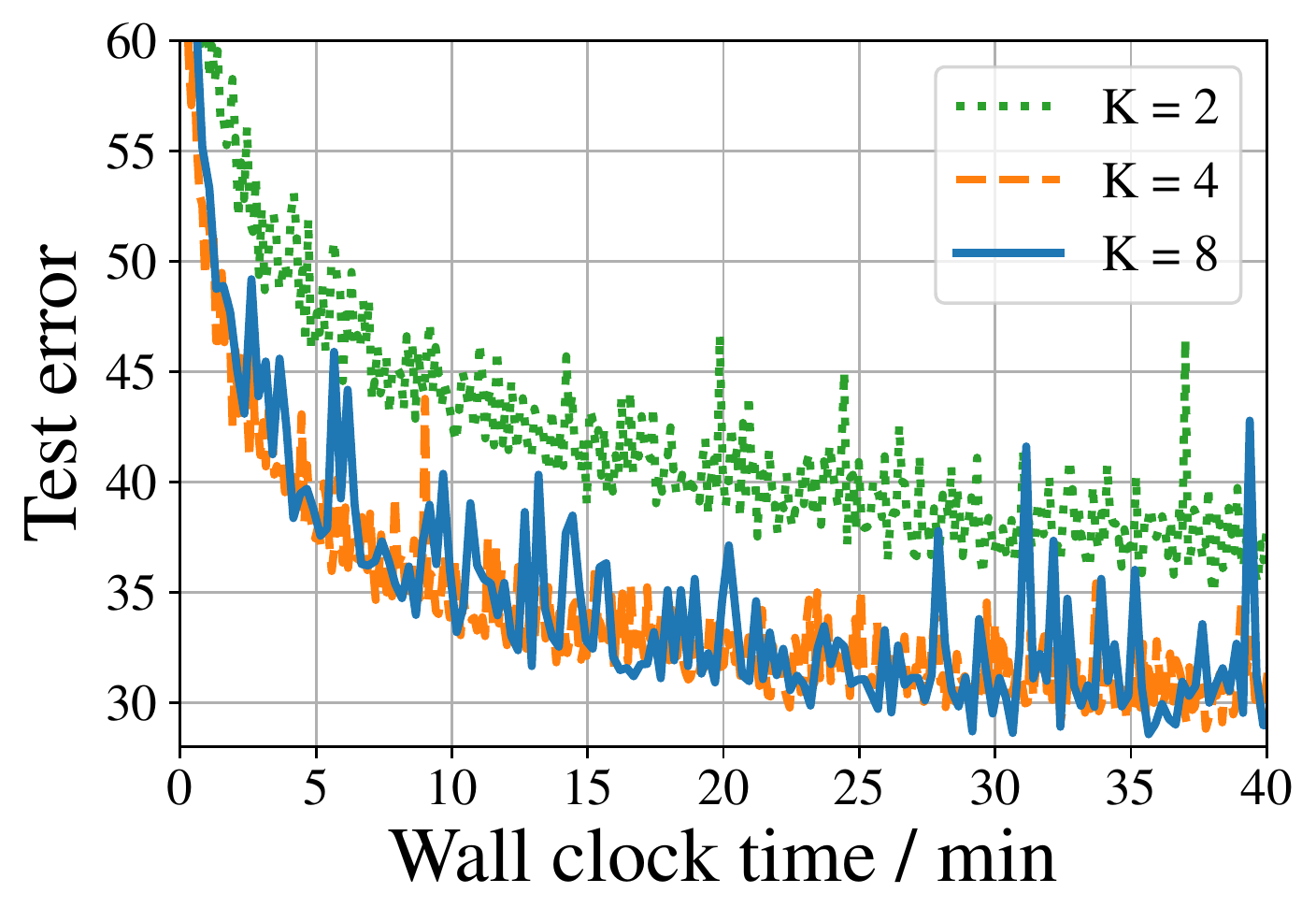}
    \caption{No artificial delay.}
    \label{fig:cifar_async_0}
    \end{subfigure}%
    ~
    \begin{subfigure}{.33\textwidth}
    \centering
    \includegraphics[width=\textwidth]{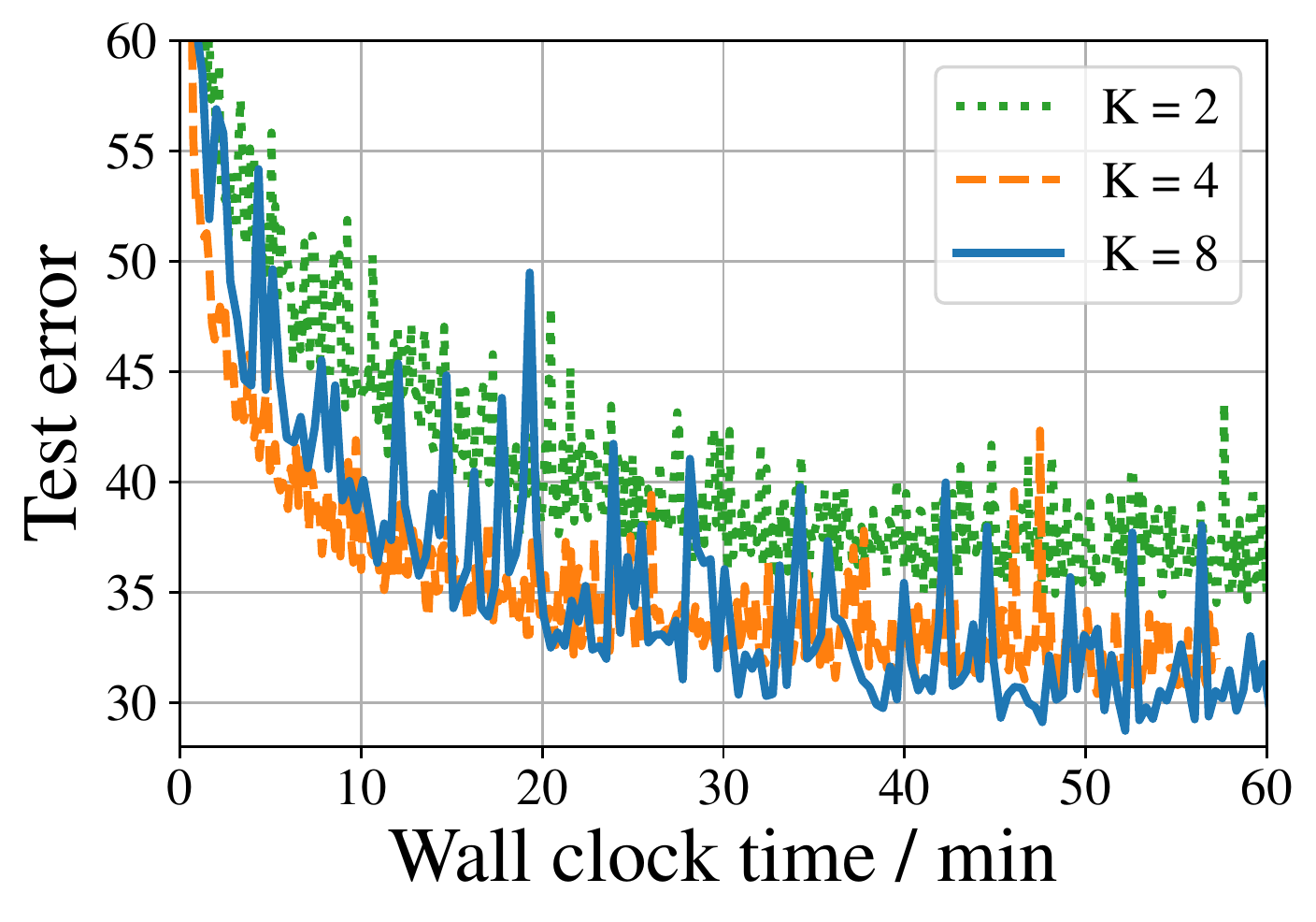}
    \caption{Delay with mean 0.02s.}
    \label{fig:cifar_async_2}
    \end{subfigure}%
    ~
    \begin{subfigure}{.33\textwidth}
    \centering
    \includegraphics[width=\textwidth]{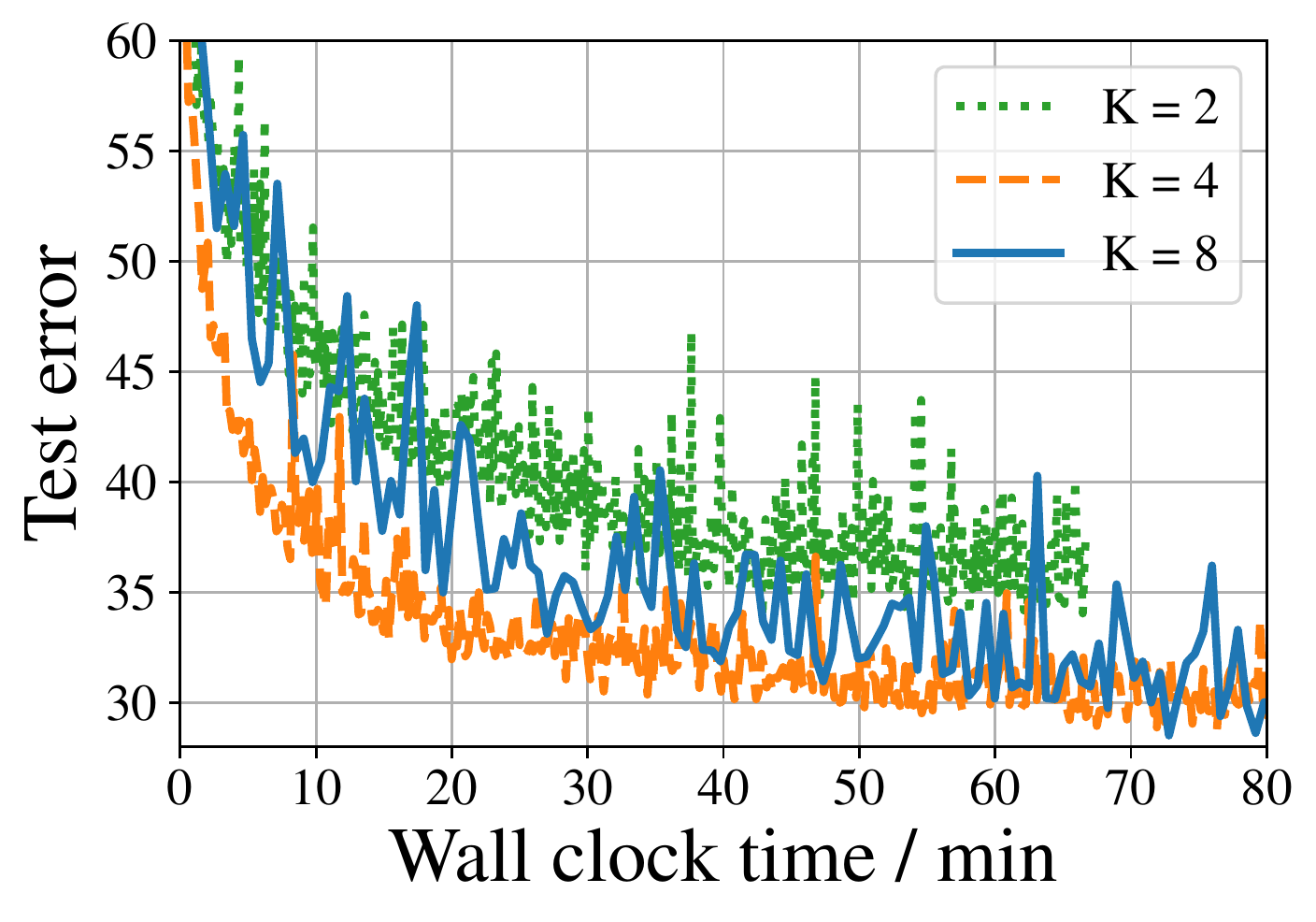}
    \caption{Delay with mean 0.05s.}
    \label{fig:cifar_async_5}
    \end{subfigure}%
    \caption{Test error ($\%$) of \textbf{K-async SGD} on CIFAR-10 with $8$ worker nodes. We demonstrate the error-runtime trade-off for the case with no artificial delay, and then also plot the trade-off as we add an exponential delay on each worker. As the mean of the additional delay increases, using an intermediate value of $K$ achieves a better error-runtime trade-off. The trend of training loss is again similar (see \Cref{fig:cifar_kasync_loss} in \Cref{app:additional_experimental_results}).
    \label{fig:cifar_kasync}}
\end{figure}

\section{AdaSync: Adaptive Synchronicity for Achieving the Best Error-Runtime Trade-Off}
\label{sec:vary_synchronicity}

As we observed both theoretically and empirically above, asynchronous SGD converges faster (with respect to wallclock time) but has a higher error floor. On the other hand, synchronous SGD converges slower (again with respect to wallclock time) but achieves a lower error floor. In this section, our goal is to try to achieve the best of both worlds, \emph{i.e.}, attain the most desirable error-runtime trade-off by gradually varying the level of synchronicity (parameter $K$) for the different SGD variants.

Let us partition the training time into intervals of time $t$ each, such that, after every slot of time $t$ we vary $K$. The number of iterations performed within time $t$ is assumed to be approximately $N(t)\approx t/\E{T}$ where $\E{T}$ is the expected runtime per-iteration for the chosen SGD variant\footnote{Note that, for exponential inter-arrival times, this approximation holds in the limit of large $t$.}. From \Cref{thm:non_convex}, we can write a (heuristic) upper bound on the average of $\E{||\nabla F(\wts)||_2^2}$  within each time interval $t$ as follows:
\begin{equation}
\text{Upper Bound as a function of }K = u(K)=
\frac{2(F(\wts_{start})-F^*)\E{T}}{t\eta \gamma'} + \frac{L\eta \sigma^2}{Km\gamma'},
\end{equation}
where $\wts_{start}$ denotes the value of the model $\wts$ at the beginning of that time interval\footnote{Note that $\gamma'$ can be set as $1$ for synchronous variants. Though, this does not matter much here as the minimizing $K$ does not depend on it.} Our goal is to minimize $u(K)$ with respect to $K$ for each time interval.

Observe that,
\begin{equation}
\frac{d u(K)}{d K} = \frac{2(F(\wts_{start}))}{t\eta \gamma'}\frac{d \E{T}}{d K} - \frac{L\eta \sigma^2}{K^2m\gamma'}. 
\end{equation}
Setting $\frac{d u(K)}{d K} $ to $0$ therefore provides a rough heuristic on how to choose parameter $K$ for each time interval, as long as, $\frac{d^2 u(K)}{d K^2}  $ is positive. We derive the rule for adaptively varying $K$ for each of the SGD variants in \Cref{sec:adasync_derivation}. Here, we include the method for one variant $K$-async to demonstrate the key idea.

For general distributions, the runtime of $K$-async is upper bounded by that of $K$-sync (see \Cref{lem:runtime kasync}). For exponential distributions, the two become equal and an algorithm similar to $K$-sync works. Here, we examine the interesting case of shifted-exponential distribution. We approximate $\E{T}\approx \frac{K \Delta}{P} + \frac{K \log{P}}{P \mu}$ (see \Cref{lem:tight kasync}). This leads to
$$K^2=   \frac{L\eta \sigma^2t\eta P \mu }{2m(F(\wts_{start}))(\Delta \mu + \log{P} )  }.$$ 
To actually solve for this equation, we would need the values of the Lipschitz constant, variance of the gradient etc. which are not always available. We sidestep this issue by proposing a heuristic here that relies on the ratio of the parameter $K$ at different instants.

Observe that, the larger is $F(\wts_{start})$, the smaller is the value of $K$ required to minimize $u(K)$. We assume that $F(\wts_{start})$ is maximum at the beginning of training, \textit{i.e.}, when $\wts_{start}=\wts_{0}$. Hence we start with the smallest initial $K$,~e.g., $K_0=1$. 
Thus, we could start with a small $K_0$ and after each time interval $t$, we can update $K$ by solving for
$$ K^2 = K_0^2 \frac{F(\wts_{0})}{F(\wts_{start})}.$$ We can also verify that the second derivative is positive, \textit{i.e.},
$$\frac{d^2 u(K)}{d K^2} =   \frac{2L\eta \sigma^2}{K^3m} >0 .$$

The detailed algorithm for AdaSync for $K$-async SGD in described in \Cref{algo}.

%\subsection{Algorithm}
%\label{subsec:algorithm}
\begin{algorithm}[H]
\caption{AdaSync for $K$-async SGD}
\label{algo}
\begin{algorithmic}[1]
\STATE Start with $K=K_0$ (typically $K_0=1$).
\STATE \textbf{While} Wallclock Time $\leq$ Total Time Budget \textbf{ do:}
\STATE \hspace{0.2cm} Perform an iteration of $K$-async SGD
\STATE \hspace{0.2cm} \textbf{If} (Wallclock Time \% $t =0$) and ($K <P$) \textbf{ do:}
\STATE \hspace{1cm} Update $K$ as follows:
$K = K_0\sqrt{ \frac{F(\wts_{0})}{F(\wts_{start})}}$
\end{algorithmic}
\end{algorithm}

For $K$-sync SGD under exponential assumption, the update rule for $K$ is derived by solving for $K$ in the quadratic equation:  $\frac{K^2}{P-K} = \frac{K_0^2}{P-K_0} \frac{F(\wts_{0})}{F(\wts_{start})} ,$ as discussed in \Cref{sec:adasync_derivation}.
For the two other variants of distributed SGD, the adaptive update rule for $K$ is $K = K_0\sqrt{ \frac{ F(\wts_{0})}{F(\wts_{start})} }$, under certain assumptions on the runtime distribution, as also discussed in \Cref{sec:adasync_derivation}.

\begin{figure}[!htbp]
    \centering
    \begin{subfigure}{.4\textwidth}
    \centering
    \includegraphics[width=\textwidth]{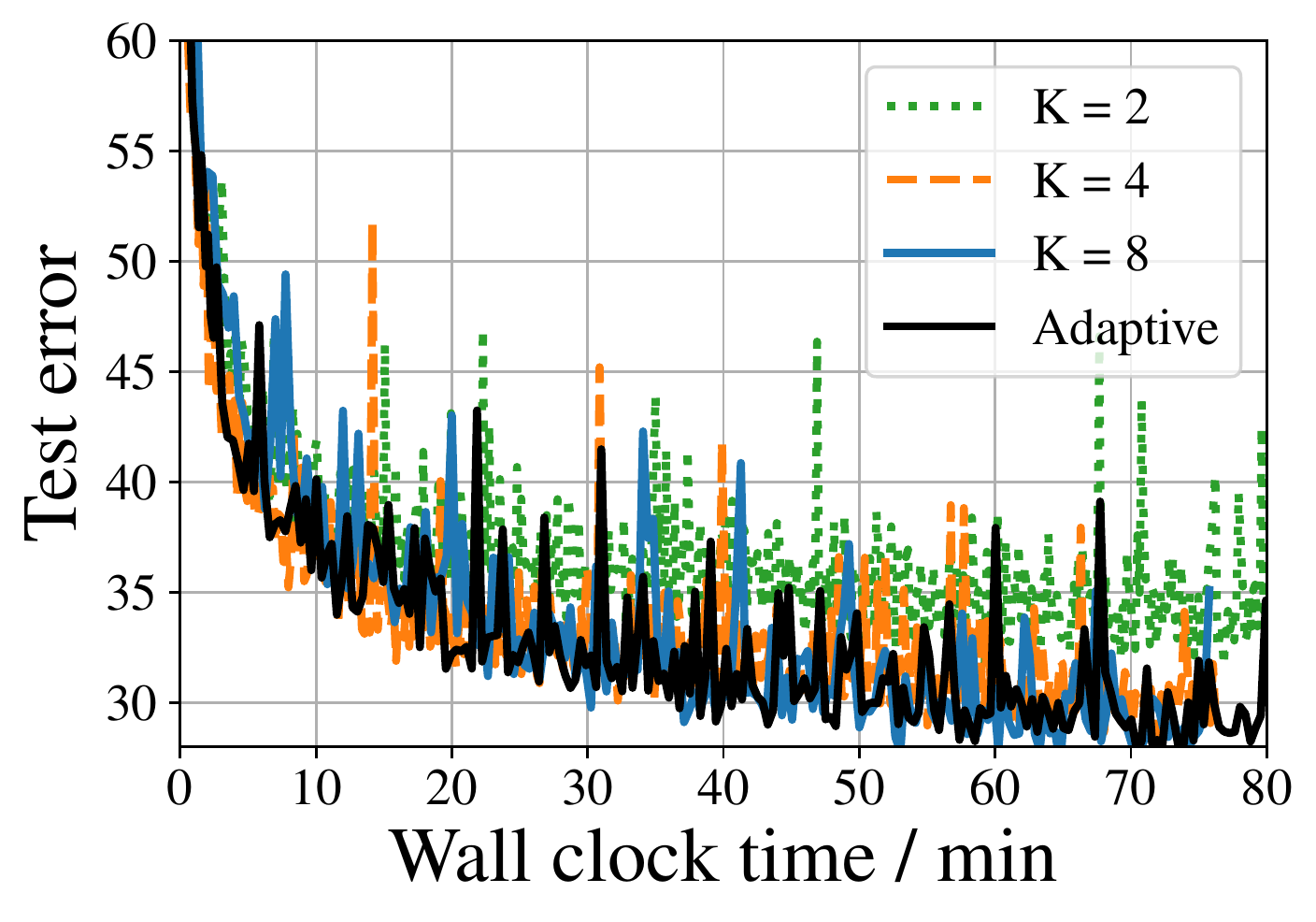}
    \caption{Adaptive K-sync. SGD.}
    \label{fig:cifar_sync_var}
    \end{subfigure}%
    ~
    \begin{subfigure}{.4\textwidth}
    \centering
    \includegraphics[width=\textwidth]{exp_figs/kasyncv.pdf}
    \caption{Adaptive K-async. SGD.}
    \label{fig:cifar_async_var}
    \end{subfigure}%
    \caption{Test error of \textbf{AdaSync SGD} on CIFAR-10 with $8$ worker nodes. We add an exponential delay with mean $0.02$s on each worker. The value of $K$ is changed after every $60$ seconds.
    \label{fig:cifar_adasync}}
\end{figure}

\subsection{AdaSync Experimental Results}
\label{subsec:adasync_experiments}
In this subsection, we evaluate the effectiveness of AdaSync for both K-sync SGD and K-async SGD algorithms. An exponential delay with mean $0.02$s is added to each worker node independently. We fix $K$ for every $t=60$ seconds (about 10 epochs). The initial values of $K$ are fine-tuned and set to $2$ and $4$ for K-sync SGD and K-async SGD, respectively. As shown in \Cref{fig:cifar_adasync}, the adaptive strategy achieves the fastest convergence in terms of error-versus-time. The adaptive K-async algorithm can even achieve a better error-runtime trade-off than the $K=8$ case (i.e., fully synchronous case).

\section{Concluding Remarks}
\label{sec:conclusion}

%\subsection{Novel Insights into Distributed Machine Learning}

This work introduces a novel analysis of error-runtime trade-off of distributed SGD, accounting for both error reduction per iteration as well as the wallclock runtime in a delay-prone computing environment. Furthermore, we also give a new analysis of asynchronous SGD by relaxing some commonly made assumptions in existing literature. Lastly, we also propose a novel strategy called AdaSync that adaptively increases synchronicity during distributed machine learning to achieve the best error-runtime trade-off. Our results provide valuable insights into distributed machine learning that could inform choice of workers and preferred method of parallelization for a particular distributed SGD algorithm in a chosen distributed computing environment.

As future work, we plan to explore methods of gradually increasing synchrony in other distributed optimization frameworks, e.g., federated learning, decentralized SGD, elastic averaging etc., that is closely related to \cite{zhang2016parallel,yin2017gradient,zhou2017convergence,zhang2015deep,wang2018adaptive, wang2018cooperative,mcmahan2016communication,konevcny2016federated,li2019federated,wang2019matcha}. Our proposed techniques can also inform hyperparameter tuning. Given some knowledge of the computing environment, our technique could allow one to simulate the expected error-runtime trade-off in advance, and possibly choose training parameters such as parameter $K$, mini-batch size $m$ etc. It is also an interesting future direction to extend our current theoretical analysis for non iid\ scenarios, i.e., when the runtime or the dataset of different workers are not independent and identically distributed.

\section*{Acknowledgements}
The authors thank Soumyadip Ghosh, Parijat Dube,  Pulkit Grover, Souptik Sen, Li Zhang, Wei Zhang, and Priya Nagpurkar for helpful discussions. This work was supported in part by an IBM Faculty Award, and NSF CRII Award (CCF-1717314), and the Qualcomm Innovation fellowship.

\bibliographystyle{unsrt}
\bibliography{sample}

\appendix
%\section*{Appendix A.}
%\label{app:theorem}

\section{Strong Convexity Discussion}
\label{sec:strong_convexity}
\begin{defn}[Strong-Convexity]
A function $h(\mathbf{u})$ is defined to be $c$-strongly convex, if the following holds for all $\mathbf{u}_1$ and $\mathbf{u}_2$ in the domain:
\begin{equation*}
h(\mathbf{u}_2) \geq h(\mathbf{u}_1) + [\nabla h(\mathbf{u}_1)]^T(\mathbf{u}_2-\mathbf{u}_1) + \frac{c}{2} ||\mathbf{u}_2-\mathbf{u}_1 ||_2^2
\end{equation*}
\end{defn}

For strongly convex functions, the following result holds for all $\mathbf{u}$ in the domain of $h(.)$.
\begin{equation}
2c(h(\mathbf{u})- h^*) \leq ||\nabla h(\mathbf{u}) ||_2^2
\end{equation}
The proof is derived in \cite{bottou2016optimization}. For completeness, we give the sketch here.
\begin{proof}
Given a particular $\mathbf{u}$, let us define the quadratic function as follows:
$$q(\mathbf{u}')= h(\mathbf{u}) + \nabla h(\mathbf{u})^T (\mathbf{u}' - \mathbf{u}) + \frac{c}{2} ||\mathbf{u}'-\mathbf{u}  ||_2^2  $$
Now, $q(\mathbf{u}')$ is minimized at $\mathbf{u}' = \mathbf{u} - \frac{1}{c} \nabla h(\mathbf{u}) $ and the value is $h(\mathbf{u}) - \frac{1}{2c} ||\nabla h(\mathbf{u})   ||_2^2$. Thus, from the definition of strong convexity we now have,
\begin{align*}
h^* &\geq h(\mathbf{u}) + \nabla h(\mathbf{u})^T (\mathbf{u}' - \mathbf{u}) + \frac{c}{2} ||\mathbf{u}'-\mathbf{u}  ||_2^2 \nonumber \\
&\geq h(\mathbf{u}) - \frac{1}{2c} ||\nabla h(\mathbf{u})   ||_2^2 \ \ [\text{minimum value of } q(\mathbf{u}')].
\end{align*}
\end{proof}
%\end{appendix}

%\begin{appendix}
\section{Runtime Analysis Proofs}
\label{subsec:runtime_K_sync}
Here we provide all the remaining proofs and supplementary information for the results in \Cref{subsec:runtime_lemmas}.

\subsection{Runtime of K-sync SGD}
\label{subsec:runtime_K_statistic}
\textbf{$K$-th statistic of exponential distributions:}
Here we give a sketch of why the $K$-th order statistic of $P$ exponentials scales as $\log (P/P-K)$. A detailed derivation can be obtained in \cite{sheldon2002first}. Consider $P$ i.i.d.\ exponential distributions with parameter $\mu$. The minimum $X_{1:P}$ of $P$ independent exponential random variables with parameter $\mu$ is exponential with parameter $P \mu$. Conditional on $X_{1:P}$, the second smallest value $X_{2:P}$ is distributed like the sum of $X_{1:P}$ and an independent exponential random variable with parameter $(P-1)\mu $. And so on, until the $K$-th smallest value $X_{K:P}$ which is distributed like the sum of $X_{(K-1):P}$ and an independent exponential random variable with parameter $(P-K+1)\mu$. Thus, $$X_{K:P}=Y_P+Y_{P-1}+ \dots+Y_{P-K+1} $$
where the random variables $Y_i$s are independent and exponential with parameter $i \mu$. Thus,
$$\E{X_{K:P}} = \sum_{i=P-K+1}^{P} \frac{1}{i\mu} =\frac{H_P-H_{P-K}}{\mu} \approx \frac{\log{\frac{P}{P-K}}}{\mu}. $$ 
Here $H_P$ and $H_{P-K}$ denote the $P$-th and $(P-K)$-th harmonic numbers respectively.

For the case where $K=P$, the expectation is given by,
$$\E{X_{P:P}} = \frac{1}{\mu}\sum_{i=1}^{P} \frac{1}{i} =\frac{1}{\mu}H_P \approx \frac{1}{\mu}\log{P}. $$
 
%Let $\E{T}$ denote the time taken to complete $J$ iterations using asynchronous SGD. Now the average number of iterations required by any single worker is given by $\frac{\E{T}}{\E{X_i}}$. Since there are $P$ workers, we thus have, $$P\frac{\E{T}}{\E{X_i}} = J \implies \E{T}=J \frac{\E{X_i}}{P}  $$. 

\subsection{Runtime of K-batch-sync SGD}
\label{subsec:runtime_K_batch_sync}

\begin{proof}[Proof of \Cref{lem:runtime kbatch sync}]
For new-longer-than-used distributions observe that the following holds:
\begin{align}
\Pr(X_i > u+t| X_i>t) \leq \Pr(X_i>u).
\end{align}

Thus the random variable corresponding to the remaining time, \textit{i.e.}, $X_i-t | X_i>t$ is thus stochastically dominated by $X_i$. 

Assume that we want to compute the expected computation time of one iteration of $K$-batch-sync SGD starting at time instant $t_0$. The time taken to complete the first mini-batch is the minimum of $P$ iid\ random variables, denoted as $\E{X_{1:P}}$. Suppose the first mini-batch finishes at time $t_1$, and the worker which finished first has index $i'$. Now, the time taken to complete the next mini-batch is the minimum of the remaining runtimes of all the $P$ workers of which the $i'$-th worker just started afresh at time $t_1$ while the $P-1$ other workers had been computing since time $t_0$. Let $Y_1,Y_2,\dots Y_P$ be the random variables denoting the remaining computation time of the $P$ workers after time $t_1$. Thus, 
\begin{equation}
Y_i = \begin{cases}
X_i-(t_1-t_0) | X_i> (t_1-t_0) \ \ \text{for all } i\neq i' \\
 X_{i} \text{  otherwise.}
\end{cases}
\end{equation}
Now each of the $Y_i$ s are independent and are stochastically dominated by $X_i$ s. 

%And since all the $X_i$ s are \textit{iid}, we thus have
\begin{equation}
\Pr(Y_i >u ) \leq \Pr(X_i >u ) \ \forall \ i = 1,2,\dots,P.
\end{equation}
The expectation of the minimum of $\{Y_1,Y_2,\dots,Y_P\}$ is the expected runtime of computing the second mini-batch. Let us denote $h_1(x_1,x_2,\dots,x_P)$ as the minimum of $P$ numbers  $(x_1,x_2,\dots,x_P)$. And let us denote $g_{1,\bm{s}}(x)$ as the minimum of $P$ numbers where $P-1$ of them are given as $\bm{s}_{1 \times (P-1)}$ and $x$ is the $P-$th number. Thus
$$g_{1,\bm{s}}(x)=h_1(x, s(1),s(2),\dots,s(P-1)). $$
First observe that $g_{1,\bm{s}}(x) $ is an increasing function of $x$ since given all the other $P-1$ values, the minimum will either stay the same or increase with $x$.  Now we use the property that if $Y_i$ is stochastically dominated by $X_i$, then for any increasing function $g(.)$, we have
$$\Esub{Y_1}{g(Y_1)} \leq \Esub{X_1}{g(X_1)}.$$
This result is derived in \cite{kreps1990course}.

This implies that for a given $\bm{s}$,
$$ \Esub{Y_1}{g_{K,\bm{s}}(Y_1)} \leq \Esub{X_1}{g_{K,\bm{s}}(X_1)}. $$
This leads to,
\begin{multline}
 \Esub{Y_1|Y_2=s(1),Y_3=s(2) \dots Y_P=s(P-1)}{h_1(Y_1,Y_2,\dots Y_P)} 
\\ \leq \Esub{X_1|Y_2=s(1),Y_3=s(2) \dots Y_P=s(P-1) }{h_1(X_1,Y_2,\dots Y_P)} . 
 \end{multline}

From this, 
\begin{align}
\E{h_1(Y_1,Y_2,\dots Y_P)} 
&=\Esub{Y_2,\dots,Y_P}{ \Esub{Y_1|Y_2, Y_3 \dots Y_P}{h_1(Y_1,Y_2,\dots Y_P)}} \nonumber \\
&\leq \Esub{Y_2,\dots,Y_P}{\Esub{X_1|Y_2,Y_3 \dots Y_P }{h_1(X_1,Y_2,\dots Y_P)} } \nonumber \\
& = \E{h_1(X_1,Y_2,\dots Y_P)}.
 \end{align}

This step proceeds inductively. Thus, similarly
\begin{align}
\E{h_1(X_1,Y_2,\dots Y_P)} 
& =\Esub{X_1,Y_3,\dots,Y_P}{ \Esub{Y_2|X_1, Y_3 \dots Y_P}{h_1(X_1,Y_2,\dots Y_P)}} \nonumber \\
&\leq \Esub{X_1,Y_3,\dots,Y_P}{\Esub{X_2|X_1, Y_3 \dots Y_P }{h_1(X_1,X_2, Y_3,\dots Y_P)} } \nonumber \\
&= \E{h_1(X_1,X_2,Y_3\dots Y_P)}.
 \end{align}

Thus, finally combining, we have,
\begin{align}
\E{h_1(Y_1,Y_2,\dots Y_P)}
&\leq \E{h_1(X_1,Y_2,\dots Y_P)} \nonumber \\
&\leq \E{h_1(X_1,X_2, Y_3\dots Y_P)} \nonumber \\& \leq \dots \nonumber \\
&\leq \E{h_1(X_1,X_2,X_3\dots X_P)}.
 \end{align}
 
Therefore, the expected runtime of the second mini-batch is upper bounded by $\E{X_{1:P}}$. A similar argument can be made for all the $K$ mini-batches, leading to an upper bound of $K\E{X_{1:P}}$ for all the $K$ mini-batches.
\end{proof}

In general, the exact analytical expression for the expected runtime per iteration of $K$-batch-sync SGD is not tractable but for the special case of exponentials it follows the distribution $Erlang(K, P\mu)$. This is obtained from the memoryless property of exponentials. 

All the workers start their computation together. The expected time taken by the first mini-batch to be completed is the minimum of $P$ i.i.d.\ exponential random variables $X_1,X_2,\dots,X_P \sim \Exp{(\mu)}$ is another exponential random variable  distributed as $ \Exp{(P\mu)}$. At the time when the first mini-batch is complete, from the memoryless property of exponentials, it may be viewed as $P$ i.i.d.\ exponential random variables $X_1,X_2,\dots,X_P \sim \Exp{(\mu)}$ starting afresh again. Thus, the time to complete each mini-batch is distributed as $\Exp{(P\mu)}$, and an iteration being the sum of the time to complete $K$ such mini-batches, has the distribution $Erlang(K, P\mu)$.

\subsection{Runtime of K-async SGD}
\label{subsec:runtime_K_async}
\begin{proof}[Proof of \Cref{lem:runtime kasync}]
For new-longer-than-used distributions observe that the following holds:
\begin{align}
\Pr(X_i > u+t| X_i>t) \leq \Pr(X_i>u).
\end{align}

Thus the random variable $X_i-t | X_i>t$ is thus stochastically dominated by $X_i$. Now let us assume we want to compute the expected computation time of one iteration of $K$-async starting at time instant $t_0$. Let us also assume that the workers last read their parameter values at time instants $t_1,t_2,\dots t_P$ respectively where any $K$ of these $t_1,t_2,\dots t_P$ are equal to $t_0$ as $K$ out of $P$ workers were updated at time $t_0$ and the remaining $(P-K)$ of these $t_1,t_2,\dots t_P$ are $< t_0$. 
Let $Y_1,Y_2,\dots Y_P$ be the random variables denoting the computation time of the $P$ workers starting from time $t_0$. Thus, 
\begin{equation}
Y_i = X_i-(t_0-t_i) | X_i> (t_0-t_i) \ \ \forall \  i=1,2,\dots,P.
\end{equation}
Now each of the $Y_i$ s are independent and are stochastically dominated by $X_j$ s.

\begin{equation}
\Pr(Y_i >u ) \leq \Pr(X_j >u ) \ \forall \ i,j = 1,2,\dots,P.
\end{equation}
The expectation of the $K$-th statistic of $\{Y_1,Y_2,\dots,Y_P\}$ is the expected runtime of the iteration. 
Let us denote $h_K(x_1,x_2,\dots,x_P)$ as the $K$-th statistic of $P$ numbers  $(x_1,x_2,\dots,x_P)$. And let us us denote $g_{K,\bm{s}}(x)$ as the $K$-th statistic of $P$ numbers where $P-1$ of them are given as $\bm{s}_{1 \times (P-1)}$ and $x$ is the $P-$th number. Thus 
$$g_{K,\bm{s}}(x)=h_K(x, s(1),s(2),\dots,s(P-1)) $$
First observe that $g_{K,\bm{s}}(x)$ is an increasing function of $x$ since given the other $P-1$ values, the $K$-th order statistic will either stay the same or increase with $x$. Now we use the property that if $Y_i$ is stochastically dominated by $X_i$, then for any increasing function $g(.)$, we have
$$\Esub{Y_1}{g(Y_1)} \leq \Esub{X_1}{g(X_1)}.$$
This result is derived in \cite{kreps1990course} .

This implies that for a given $\bm{s}$,
$$ \Esub{Y_1}{g_{K,\bm{s}}(Y_1)} \leq \Esub{X_1}{g_{K,\bm{s}}(X_1)}. $$

This leads to,
\begin{multline}
 \Esub{Y_1|Y_2=s(1),Y_3=s(2) \dots Y_P=s(P-1)}{h_K(Y_1,Y_2,\dots Y_P)} 
\\ \leq \Esub{X_1|Y_2=s(1),Y_3=s(2) \dots Y_P=s(P-1) }{h_K(X_1,Y_2,\dots Y_P)} . 
 \end{multline}

From this, 
\begin{align}
\E{h_K(Y_1,Y_2,\dots Y_P)} 
&=\Esub{Y_2,\dots,Y_P}{ \Esub{Y_1|Y_2, Y_3 \dots Y_P}{h_K(Y_1,Y_2,\dots Y_P)}} \nonumber \\
&\leq \Esub{Y_2,\dots,Y_P}{\Esub{X_1|Y_2,Y_3 \dots Y_P }{h_K(X_1,Y_2,\dots Y_P)} } \nonumber \\
& = \E{h_K(X_1,Y_2,\dots Y_P)}.
 \end{align}

This step proceeds inductively. Thus, similarly
\begin{align}
\E{h_K(X_1,Y_2,\dots Y_P)} 
& =\Esub{X_1,Y_3,\dots,Y_P}{ \Esub{Y_2|X_1, Y_3 \dots Y_P}{h_K(X_1,Y_2,\dots Y_P)}} \nonumber \\
&\leq \Esub{X_1,Y_3,\dots,Y_P}{\Esub{X_2|X_1, Y_3 \dots Y_P }{h_K(X_1,X_2, Y_3,\dots Y_P)} } \nonumber \\
&= \E{h_K(X_1,X_2,Y_3\dots Y_P)}.
 \end{align}

Thus, finally combining, we have,
\begin{align}
\E{h_K(Y_1,Y_2,\dots Y_P)}
&\leq \E{h_K(X_1,Y_2,\dots Y_P)} \nonumber \\
&\leq \E{h_K(X_1,X_2, Y_3\dots Y_P)} \nonumber \\& \leq \dots \nonumber \\
&\leq \E{h_K(X_1,X_2,X_3\dots X_P)}.
 \end{align}
\end{proof}

\subsubsection{Discussion for special distributions} 
\label{subsec:runtime_K_async_exp}
For exponential distributions, the inequality in 
\Cref{lem:runtime kasync} holds with equality. This follows from the memoryless property of exponentials. Let us consider the scenario of the proof of \Cref{lem:runtime kasync} where we similarly define $Y_i=X_i-(t_0-t_i)|X_i>(t_0-t_i)$. From the memoryless property of exponentials \cite{sheldon2002first}, if $X_i \sim \Exp(\mu)$, then $Y_i \sim \Exp(\mu)$. Thus, the expectation of the $K$-th statistic of $Y_i$s can be easily derived as all the $Y_i$s are now i.i.d.\ with distribution $\Exp(\mu)$. Thus, the expected runtime per iteration is given by,
$$\E{T}= \E{Y_{K:P}}= \frac{1}{\mu} \sum_{i=P-K+1}^P\frac{1}{i} \approx \frac{1}{\mu} \log{\frac{P}{P-K}} .$$

\begin{proof}[Proof of \Cref{lem:tight kasync}]
Here, we derive an alternate upper bound on the runtime for $n$ consecutive iterations for shited-exponential distributions where $P=nK$. Suppose, we are looking at the first $n$ consecutive iterations starting from time $t_0$. Assuming all the workers launched afresh for the first iteration, the expected time for the first iteration is given by $\E{X_{K:P}}= \Delta + \E{\widetilde{X}_{K:P}}$ where $\widetilde{X} \sim \Exp{(\mu)}$. 

For the second iteration, the runtime is the $K$-th order statistic of the remaining runtimes of the $P$ workers, where $K$ of them just launched afresh (hence shifted-exponential) and the remaining $P-K$ of them have been computing since $t_0$. The distribution of the remaining runtime for these $P-K$ workers correspond to exponential distributions with no shift ($\widetilde{X} \sim \Exp{(\mu)}$) because they have been running for at least time $\Delta$. The expectation of the $K$-th order statistic of all the $P$ runtimes can thus be upper-bounded by the $K$-th order statistic of a subset of the $P$ remaining runtimes, specifically, these $P-K$ workers having the exponential runtime distribution of $\widetilde{X}$. 

Similarly, the expected runtime of the third iteration is the $K$-th order statistic of the remaining runtimes of the $P$ workers of which at least $P-2K$ of them have been running since $t_0$ (for at least time $\Delta$) and hence have the distribution of $\widetilde{X}$. Thus, the expected runtime of the third iteration is upper-bounded by the $K$-th order statistic of these $P-2K$ exponential random variables. Extending a similar argument for $n$ iterations, the expected runtime for $n$ consecutive iterations can be upper-bounded as 
\begin{equation}
\E{T_1+T_2+\ldots+T_n} \leq \Delta + \sum_{i=0}^{n-1} \E{\widetilde{X}_{K:(P-iK)}}.
\end{equation}

From the new-longer-than-used property, the expected runtime of any $n$ consecutive iterations is upper-bounded by the expected runtime of the first $n$ consecutive iterations when all the $P$ workers are launched afresh.
\end{proof}

\subsection{Runtime of K-batch-async SGD}
\label{subsec:runtime_K_batch_async}
Here we include a discussion on renewal processes for completeness, to provide a background for the proof of \Cref{lem:runtime kbatch}, which gives the expected runtime of $K$-batch-async SGD. The familiar reader can merely skim through this and refer to the proof provided in the main section of the paper in \Cref{subsec:runtime_lemmas}. 

\begin{defn}[Renewal Process]
A renewal process is an arrival process where the inter-arrival intervals are positive, independent and identically distributed random variables.
\end{defn}

\begin{lem}[Elementary Renewal Theorem]
\cite[Chapter 5]{gallager2013stochastic}
\label{lem:renewal}
Let $\{N(t), t>0\} $ be a renewal counting process denoting the number of renewals in time $t$. Let $\E{Z}$ be the mean inter-arrival time. Then,
\begin{equation}
\lim_{t \to \infty} \frac{\E{N(t)}}{t} = \frac{1}{\E{Z}}.
\end{equation}
\end{lem}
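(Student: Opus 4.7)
The plan is to prove the elementary renewal theorem by first obtaining the almost-sure convergence of $N(t)/t$ via the strong law of large numbers, and then upgrading to convergence of expectations through a two-sided argument based on Wald's identity, with a truncation trick for the upper bound.

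First, I would set up notation. Let $Z_1,Z_2,\ldots$ be the i.i.d.\ positive inter-arrival times and $S_n=Z_1+\cdots+Z_n$, so that $N(t)=\sup\{n:S_n\le t\}$ and the key sandwich $S_{N(t)}\le t<S_{N(t)+1}$ holds. By the strong law of large numbers, $S_n/n\to\E{Z}$ almost surely, which forces $N(t)\to\infty$ a.s. Dividing the sandwich by $N(t)$ and taking $t\to\infty$ yields $t/N(t)\to\E{Z}$ a.s., i.e. $N(t)/t\to 1/\E{Z}$ a.s. This gives the intuition, but almost-sure convergence does not automatically imply convergence of means, so an additional argument is needed.

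For the lower bound on $\E{N(t)}/t$, I would observe that $N(t)+1$ is a stopping time for the filtration generated by the $Z_i$'s (since $\{N(t)+1\le n\}=\{S_{n-1}>t\}$), and it has finite mean (bounded by a geometric-type estimate). Wald's identity then gives $\E{S_{N(t)+1}}=\E{Z}\bigl(\E{N(t)}+1\bigr)$. Since $S_{N(t)+1}>t$, this rearranges to $\E{N(t)}/t\ge 1/\E{Z}-1/t$, hence $\liminf_{t\to\infty}\E{N(t)}/t\ge 1/\E{Z}$.

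The hard part is the matching upper bound, because the overshoot $S_{N(t)+1}-t$ may have no useful control in general. I would circumvent this with a truncation: fix $c>0$ and set $Z_i^c=\min(Z_i,c)$ with corresponding partial sums $S_n^c$ and counting process $N^c(t)$. Since $Z_i^c\le Z_i$, we have $N^c(t)\ge N(t)$, and for the truncated process each jump is at most $c$, giving the deterministic bound $S_{N^c(t)+1}^c\le t+c$. Applying Wald's identity to the truncated process yields $\E{Z^c}\bigl(\E{N^c(t)}+1\bigr)\le t+c$, whence
\begin{equation*}
\frac{\E{N(t)}}{t}\;\le\;\frac{\E{N^c(t)}}{t}\;\le\;\frac{t+c}{t\,\E{Z^c}}-\frac{1}{t}.
\end{equation*}
Taking $\limsup_{t\to\infty}$ gives $\limsup\E{N(t)}/t\le 1/\E{Z^c}$, and then $c\to\infty$ together with monotone convergence $\E{Z^c}\uparrow\E{Z}$ yields $\limsup\E{N(t)}/t\le 1/\E{Z}$. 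Combining with the lower bound gives the claim. The main obstacle is precisely this upper bound: a direct application of Wald to $N(t)$ rather than $N(t)+1$ fails (because $N(t)$ is not a stopping time), and bounding the overshoot $S_{N(t)+1}-t$ in expectation is subtle in the general (possibly heavy-tailed) setting, which is exactly what the truncation sidesteps.
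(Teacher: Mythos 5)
The paper does not prove this lemma at all: it is quoted verbatim as a standard textbook result with a citation to Gallager, Chapter 5, so there is no in-paper argument to compare against. Your proof is correct and is essentially the classical one found in that reference: the SLLN step for intuition, Wald's identity applied to the stopping time $N(t)+1$ for the lower bound, and truncation of the inter-arrival times to control the overshoot for the upper bound. The only blemish is an indexing slip in the stopping-time verification: the correct identity is $\{N(t)+1\le n\}=\{S_n>t\}$, not $\{S_{n-1}>t\}$; since either event lies in $\mathcal{F}_n=\sigma(Z_1,\dots,Z_n)$, the conclusion that $N(t)+1$ is a stopping time (and hence the whole argument) is unaffected.
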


Observe that for asynchronous SGD or $K$-batch-async SGD, every gradient push by a worker to the PS can be thought of as an arrival process. The time between two consecutive pushes by a worker follows the distribution of $X_i$ and is independent as computation time has been assumed to be independent across workers and mini-batches. Thus the inter-arrival intervals are positive, independent and identically distributed and hence, the gradient pushes are a renewal process. 

\section{Error Analysis Proofs}
\label{sec:async_proof}

In this section, we first discuss and understand the significance of the quantity $p_0$ that was introduced in the error-analysis in \Cref{thm:error kasync} in \Cref{subsec:proof_lem_p_0}. Next, for ease of understanding, we discuss and analyze the error convergence of asynchronous SGD first in \Cref{subsec:async_proof} because of its simplicity. This will be followed by the proof of the more general result, \textit{i.e.}, the error convergence analysis of $K$-async SGD (\Cref{thm:error kasync}) in \Cref{subsec:K_async_proof}. 

\subsection{Discussion on $p_0$}
\label{subsec:proof_lem_p_0}
Let us denote the conditional probability of $\tau(l,j)=j$ given all the past delays and parameters as $p_0^{(l,j)}$. Now $p_0 \leq p_0^{(l,j)} \ \forall l,j$. Clearly the value of $p_0^{(l,j)}$ will differ for different distributions and accordingly the value of $p_0$ will differ. Here we include a brief discussion on the possible values of $p_0$ for different distributions. These also hold for $K$-async and $K$-batch-async SGD.

\begin{proof}[Proof of \Cref{lem:p_0}]
Let $t_0$ be the time when the $j$-th iteration occurs, and suppose that worker $i'$ pushed its gradient in the $j$-th iteration. Now similar to the proof of \Cref{lem:runtime kasync}, let us also assume that the workers last read their parameter values at time instants $t_1,t_2,\dots t_P$ respectively where $t_i'=t_0$ and the remaining $(P-1)$ of these $t_i$s are $< t_0$. Let $Y_1,Y_2,\dots Y_P$ be the random variables denoting the computation time of the $P$ workers starting from time $t_0$. Thus, $Y_i= X_i -(t_0-t_i)| X_i >(t_0-t_i)$. For exponentials, from the memoryless property, all these $Y_i$ s become i.i.d.\ and thus from symmetry the probability of $i'$ finishing before all the others is equal, \textit{i.e.} $ \frac{1}{P}$. Thus, $p_0^{(j)} =p_0=\frac{1}{P}$.
For new-longer-than-used distributions, as we have discussed before all the $Y_i$s with $i \neq i'$ will be stochastically dominated by $Y_{i'}=X_{i'}$. Thus, probability of $i$s with $i \neq i'$ finishing first is higher than $i'$. Thus, $p_0^{(j)} \leq \frac{1}{P}$ and so is $p_0$. Similarly, for new-shorter-than-used distributions, $Y_{i'}$ is stochastically dominated by all the $Y_i$s and thus probability of $i'$ finishing first is more. So, $p_0^{(j)} \geq \frac{1}{P}$ and so is $p_0$.
\end{proof}

We use \Cref{lem:delay} below to prove \Cref{thm:error kasync}. 
\begin{lem} 
\label{lem:delay}
Suppose that $p_0^{(l,j)}$ is the conditional probability that $\tau(l,j)=j$ given all the past delays and all the previous $\wts$, and $p_0 \leq p_0^{(l,j)}$ for all $j$. Then,
\begin{equation}
\E{||\nabla F(\wts_{\tau(l,j)})||_2^2} \geq p_0 \E{||\nabla F(\wts_{j})||_2^2}.
\end{equation}
\end{lem}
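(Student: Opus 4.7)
The plan is to use a tower-property conditioning argument on the filtration $\mathcal{F}_j$ consisting of all past delays and the parameter iterates $\wts_0, \wts_1, \ldots, \wts_j$ up through iteration $j$. Conditional on $\mathcal{F}_j$, the quantity $||\nabla F(\wts_j)||_2^2$ is deterministic, while the staleness index $\tau(l,j)$ is a random variable taking values in $\{0,1,\ldots,j\}$. The key observation is that when $\tau(l,j) = j$, the stale gradient evaluation is in fact at the current parameter, giving exactly $||\nabla F(\wts_j)||_2^2$.

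Concretely, I would write
\begin{align*}
\E{\|\nabla F(\wts_{\tau(l,j)})\|_2^2 \mid \mathcal{F}_j}
&= p_0^{(l,j)}\,\|\nabla F(\wts_j)\|_2^2 \\
&\quad + \bigl(1 - p_0^{(l,j)}\bigr)\,\E{\|\nabla F(\wts_{\tau(l,j)})\|_2^2 \mid \mathcal{F}_j,\, \tau(l,j) < j},
\end{align*}
using the definition of $p_0^{(l,j)}$ as the conditional probability that $\tau(l,j) = j$ given $\mathcal{F}_j$. The second term is non-negative, so it can be dropped to obtain the lower bound $p_0^{(l,j)}\|\nabla F(\wts_j)\|_2^2$. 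Applying the hypothesis $p_0 \leq p_0^{(l,j)}$ then yields
\[
\E{\|\nabla F(\wts_{\tau(l,j)})\|_2^2 \mid \mathcal{F}_j} \;\geq\; p_0\,\|\nabla F(\wts_j)\|_2^2.
\]
Finally I would take an outer expectation and invoke the tower property, using that $\|\nabla F(\wts_j)\|_2^2$ is $\mathcal{F}_j$-measurable, to conclude $\E{\|\nabla F(\wts_{\tau(l,j)})\|_2^2} \geq p_0\,\E{\|\nabla F(\wts_j)\|_2^2}$.

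There is no substantive obstacle here; the argument is essentially a one-step conditioning followed by dropping a non-negative term. The only thing worth being careful about is the choice of conditioning $\sigma$-algebra: it must be rich enough that (i) $\wts_j$ is measurable with respect to it (so that $\|\nabla F(\wts_j)\|_2^2$ comes out of the inner expectation), and (ii) $p_0^{(l,j)}$ is a well-defined conditional probability for the event $\{\tau(l,j) = j\}$ as given in the hypothesis. Taking $\mathcal{F}_j$ to be the natural filtration of all past delays and iterates up to time $j$, as specified in the lemma statement, handles both requirements simultaneously.
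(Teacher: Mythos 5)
Your proposal is correct and follows essentially the same route as the paper: a law-of-total-expectation decomposition over the event $\{\tau(l,j)=j\}$, dropping the non-negative complementary term, and applying $p_0 \leq p_0^{(l,j)}$. Your version is in fact slightly more careful than the paper's, since by conditioning explicitly on $\mathcal{F}_j$ before invoking the tower property you correctly treat $p_0^{(l,j)}$ as a conditional probability (hence a random variable) rather than pulling it outside the expectation as a constant.
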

\begin{proof}
By the law of total expectation,
\begin{align*}
&\E{||\nabla F(\wts_{\tau(l,j)})||_2^2}\nonumber \\
&= p_0^{(l,j)} \E{||\nabla F(\wts_{\tau(l,j)})||_2^2|\tau(j)= j}    % \nonumber \\& \hspace{2cm} 
+ (1-p_0^{(l,j)}) \E{||\nabla F(\wts_{\tau(l,j)})||_2^2|\tau(j) \neq j} \nonumber \\
&  \geq p_0 \E{||\nabla F(\wts_{j})||_2^2}. 
\end{align*}
\end{proof}

\subsection{Async-SGD with fixed learning rate}
\label{subsec:async_proof}
First we prove a simplified version of \Cref{thm:error kasync} for the case $K=1$. While this is actually a corollary of the more general \Cref{thm:error kasync}, we prove this first for ease of understanding and simplicity. The proof of the more general \Cref{thm:error kasync} is then provided in \Cref{subsec:K_async_proof}. 

The corollary is as follows:
%\vspace{0.8cm}
\begin{coro} Suppose that the objective function $F(\wts)$ is strongly convex with parameter $c$ and the learning rate $\eta \leq \frac{1}{2\lips(\frac{M_G}{m}+ 1)}$. Also assume that $\E{|| \nabla F(\wts_{j}) - \nabla F(\wts_{\tau(j)})||_2^2 } \leq \gamma \E{|| \nabla F(\wts_{j}) ||_2^2 }$ for some constant $\gamma \leq 1$. Then, the error after $J$ iterations of Async SGD is given by,
\begin{align*}
 \E{F(\wts_{J})} & -F^* 
\leq \frac{\eta L\sigma^2}{2c\gamma' m} +  (1 - \eta c\gamma')^{J} \left(\E{F(\wts_{0})}-F^* - \frac{\eta L\sigma^2}{2c \gamma' m}  \right),
\end{align*}
where $\gamma'= 1-\gamma + \frac{p_0}{2}$ and $p_0$ is a non-negative lower bound on the conditional probability that $\tau(j)=j$ given all the past delays and parameters.
\label{coro:fixed learning rate}
\end{coro}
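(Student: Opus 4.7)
\textbf{Proof plan for \Cref{coro:fixed learning rate}.} My plan is to chain together the standard $L$-smoothness descent inequality with the bounded-variance assumption and the two new ingredients that distinguish the asynchronous analysis: the staleness bound involving $\gamma$ and the conditional-probability bound involving $p_0$ (via \Cref{lem:delay}). The final step is to iterate the resulting one-step recursion.

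First I would start from $L$-smoothness of $F$ applied to the update $\wts_{j+1}=\wts_j-\eta\,g(\wts_{\tau(j)},\xi_j)$, giving
\[
F(\wts_{j+1})\le F(\wts_j)-\eta\,\nabla F(\wts_j)^\top g(\wts_{\tau(j)},\xi_j)+\tfrac{L\eta^2}{2}\|g(\wts_{\tau(j)},\xi_j)\|_2^2.
\]
Conditioning on everything up to (and including) $\wts_{\tau(j)}$, the unbiasedness assumption turns the gradient inside the inner product into $\nabla F(\wts_{\tau(j)})$, and the variance assumption bounds $\E{\|g\|_2^2}$ by $\frac{\sigma^2}{m}+\bigl(\frac{M_G}{m}+1\bigr)\|\nabla F(\wts_{\tau(j)})\|_2^2$. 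Taking full expectation and applying the polarization identity
\[
-\nabla F(\wts_j)^\top\nabla F(\wts_{\tau(j)})
=\tfrac{1}{2}\|\nabla F(\wts_j)-\nabla F(\wts_{\tau(j)})\|_2^2
-\tfrac{1}{2}\|\nabla F(\wts_j)\|_2^2-\tfrac{1}{2}\|\nabla F(\wts_{\tau(j)})\|_2^2
\]
lets me inject the staleness assumption directly, replacing the first squared-norm by $\gamma\E{\|\nabla F(\wts_j)\|_2^2}$.

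At this point the inequality looks like
\[
\E{F(\wts_{j+1})}-\E{F(\wts_j)}\le \tfrac{\eta(\gamma-1)}{2}\E{\|\nabla F(\wts_j)\|_2^2}+\Bigl[\tfrac{L\eta^2}{2}\bigl(\tfrac{M_G}{m}+1\bigr)-\tfrac{\eta}{2}\Bigr]\E{\|\nabla F(\wts_{\tau(j)})\|_2^2}+\tfrac{L\eta^2\sigma^2}{2m}.
\]
The learning-rate condition $\eta\le\frac{1}{2L(M_G/m+1)}$ makes the bracketed coefficient at most $-\eta/4$, so the $\|\nabla F(\wts_{\tau(j)})\|_2^2$ term can be used rather than discarded. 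Here I would apply \Cref{lem:delay} to replace $\E{\|\nabla F(\wts_{\tau(j)})\|_2^2}$ by at least $p_0\E{\|\nabla F(\wts_j)\|_2^2}$; combining with the $(\gamma-1)$ term yields a clean coefficient $-\eta\gamma'/2$ in front of $\E{\|\nabla F(\wts_j)\|_2^2}$, where $\gamma'=1-\gamma+p_0/2$.

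Finally, strong convexity gives $\E{\|\nabla F(\wts_j)\|_2^2}\ge 2c(\E{F(\wts_j)}-F^*)$, producing the one-step contraction
\[
\E{F(\wts_{j+1})}-F^*\le(1-\eta c\gamma')\bigl(\E{F(\wts_j)}-F^*\bigr)+\tfrac{L\eta^2\sigma^2}{2m}.
\]
Unrolling this linear recursion over $J$ steps and summing the geometric series with ratio $(1-\eta c\gamma')$ produces the stated closed form. The main obstacle I anticipate is cleanly handling the cross term $\nabla F(\wts_j)^\top\nabla F(\wts_{\tau(j)})$: a naive Cauchy--Schwarz loses the crucial sign structure, and without the polarization identity plus \Cref{lem:delay} one cannot convert the mixed-iterate quantities into a single $\E{\|\nabla F(\wts_j)\|_2^2}$ on which strong convexity can be applied. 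Verifying that the $\eta$ bound is tight enough to keep the overall coefficient negative, so that the recursion is genuinely contractive, is the other place where care is needed.
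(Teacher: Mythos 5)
Your proposal is correct and follows essentially the same route as the paper's proof: $L$-smoothness, the polarization identity to expose the staleness term $\E{\|\nabla F(\wts_j)-\nabla F(\wts_{\tau(j)})\|_2^2}$, the $\gamma$-bound, the variance bound combined with the learning-rate condition to retain a $-\eta/4$ coefficient on $\E{\|\nabla F(\wts_{\tau(j)})\|_2^2}$, \Cref{lem:delay} to convert that into $p_0\E{\|\nabla F(\wts_j)\|_2^2}$, strong convexity, and unrolling the recursion. The only (immaterial) difference is that you take the conditional expectation of the inner product first and then polarize the deterministic gradients, whereas the paper polarizes against the stochastic gradient $\mathbf{v}_j$ and then invokes the bias--variance decomposition of \Cref{lem:bias-variance}; the two orderings produce the identical intermediate inequality.
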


To prove the result, we will use the following lemma.
%\vspace{0.8cm}
\begin{lem}
\label{lem:bias-variance}
Let us denote $\mathbf{v}_j = g(\wts_{\tau(j)}, \xi_{j})$, and assume that $ \Esub{\xi_{j}|
\wts}{g(\wts,\xi_{j})}= \nabla F(\wts) $. Then,
\begin{align*}
\E{||\nabla F(\wts_{j}) - \mathbf{v}_j||^2_2 } 
& \leq  \E{||\mathbf{v}_j||^2_2} -  
 \E{||\nabla F(\wts_{\tau(j)})||_2^2 } \\
 & \hspace{2cm} +  \E{|| \nabla F(\wts_{j}) - \nabla F(\wts_{\tau(j)})||_2^2 }.
\end{align*}
\end{lem}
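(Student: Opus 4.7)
The plan is to prove this identity (stated as an inequality in the lemma) by adding and subtracting $\nabla F(\wts_{\tau(j)})$ inside the squared norm and expanding. Concretely, I write
\[
\|\nabla F(\wts_j) - \mathbf{v}_j\|_2^2 = \|\nabla F(\wts_j) - \nabla F(\wts_{\tau(j)})\|_2^2 + 2\langle \nabla F(\wts_j) - \nabla F(\wts_{\tau(j)}),\, \nabla F(\wts_{\tau(j)}) - \mathbf{v}_j\rangle + \|\nabla F(\wts_{\tau(j)}) - \mathbf{v}_j\|_2^2.
\]
I then take total expectation term by term. The first term matches the last term on the right-hand side of the lemma directly, so the work is in handling the cross term and the third term.

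For the cross term, the key is to condition on the sigma-algebra $\mathcal{F}$ generated by the entire history up to the point where the random sample $\xi_j$ is drawn; in particular this fixes both $\wts_{\tau(j)}$ and $\wts_j$ (since $\wts_j$ is determined by updates that occurred before the mini-batch $\xi_j$ was accessed). Thus $\nabla F(\wts_j) - \nabla F(\wts_{\tau(j)})$ is $\mathcal{F}$-measurable, while by the assumption $\Esub{\xi_j \mid \wts_{\tau(j)}}{g(\wts_{\tau(j)}, \xi_j)} = \nabla F(\wts_{\tau(j)})$ (and the implicit independence of $\xi_j$ from $\mathcal{F}$ given $\wts_{\tau(j)}$) we get $\E{\mathbf{v}_j \mid \mathcal{F}} = \nabla F(\wts_{\tau(j)})$. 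Applying the tower property makes the cross term vanish.

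For the third term, I expand once more: $\|\nabla F(\wts_{\tau(j)}) - \mathbf{v}_j\|_2^2 = \|\nabla F(\wts_{\tau(j)})\|_2^2 - 2\langle \nabla F(\wts_{\tau(j)}), \mathbf{v}_j\rangle + \|\mathbf{v}_j\|_2^2$. Conditioning on $\wts_{\tau(j)}$ and using unbiasedness, $\E{\langle \nabla F(\wts_{\tau(j)}), \mathbf{v}_j\rangle} = \E{\|\nabla F(\wts_{\tau(j)})\|_2^2}$, so the expectation of this third term simplifies to $\E{\|\mathbf{v}_j\|_2^2} - \E{\|\nabla F(\wts_{\tau(j)})\|_2^2}$. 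Summing the pieces yields exactly the right-hand side of the lemma (in fact as an equality, which is stronger than needed).

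The main subtle point is the cross term: one must be careful about what $\wts_j$ depends on. In the asynchronous setting, $\wts_j$ has been modified by updates from other workers between iteration $\tau(j)$ and iteration $j$, so $\wts_j$ is random. The argument goes through because those updates were produced from data samples drawn strictly before $\xi_j$, so conditioning on the history renders $\wts_j$ deterministic while keeping $\xi_j$ conditionally unbiased — this is precisely the reason the paper strengthens the usual unbiased-gradient assumption to hold conditionally on any past $\wts_k$ with $k \le j$.
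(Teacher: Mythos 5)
Your proof is correct and follows essentially the same route as the paper's: the same add-and-subtract decomposition around $\nabla F(\wts_{\tau(j)})$, the cross term killed by conditioning on the history (the paper conditions on $(\wts_{\tau(j)},\wts_j)$, which is the same device as your sigma-algebra $\mathcal{F}$) together with the conditional unbiasedness assumption, and the same expansion of $\E{\|\mathbf{v}_j-\nabla F(\wts_{\tau(j)})\|_2^2}$ into $\E{\|\mathbf{v}_j\|_2^2}-\E{\|\nabla F(\wts_{\tau(j)})\|_2^2}$. Your observation that the bound actually holds with equality is also consistent with the paper's derivation.
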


\begin{proof}[Proof of \Cref{lem:bias-variance}]
Observe that,
\begin{align}
\E{||\nabla F(\wts_{j}) - \mathbf{v}_j||^2_2 }  &= \E{||\nabla F(\wts_{j}) -\nabla F(\wts_{\tau(j)}) + \nabla F(\wts_{\tau(j)}) - \mathbf{v}_j||^2_2 }  \nonumber \\
&= \E{||\nabla F(\wts_{j}) - \nabla F(\wts_{\tau(j)})||^2_2 } 
 + \E{||\mathbf{v}_j - \nabla F(\wts_{\tau(j)})||^2_2}.
\label{update-bias-variance}
\end{align}

The last line holds since the cross term is $0$ as derived below.
\begin{align*}
&\E{(\nabla F(\wts_{j}) - \nabla F(\wts_{\tau(j)})^T(\mathbf{v}_j - \nabla F(\wts_{\tau(j)}))}  \nonumber \\
& =  \mathbb{E}_{\wts_{\tau(j)},\wts_{j}} [(\nabla F(\wts_{j}) - \nabla F(\wts_{\tau(j)})^T  \Esub{\xi_j|\wts_{\tau(j)},\wts_{j} }{(\mathbf{v}_j - \nabla F(\wts_{\tau(j)}))} ] \nonumber \\
& =  \mathbb{E}_{\wts_{\tau(j)},\wts_{j}} [(\nabla F(\wts_{j}) - \nabla F(\wts_{\tau(j)})^T
(\Esub{\xi_j|\wts_{\tau(j)}}{\mathbf{v}_j} - \nabla F(\wts_{\tau(j)}))] \nonumber \\
&= 0.
\end{align*}
Here again the last line follows from Assumption 2 in \Cref{sec:system model} which states that $$\Esub{\xi_j|\wts_{\tau(j)}}{\mathbf{v}_j} = \nabla F(\wts_{\tau(j)})).$$
Returning to \eqref{update-bias-variance}, observe that the second term can be further decomposed as,
\begin{align*}
 \E{||\mathbf{v}_j - \nabla F(\wts_{\tau(j)})||^2_2} 
& = \Esub{\wts_{\tau(j)}}{ \Esub{\xi_j|\wts_{\tau(j)}}{||\mathbf{v}_j - \nabla F(\wts_{\tau(j)})||^2_2}} \\
&=\Esub{\wts_{\tau(j)}}{ \Esub{\xi_j|\wts_{\tau(j)}}{||\mathbf{v}_j||_2^2}} -2 \Esub{\wts_{\tau(j)}}{ \Esub{\xi_j|\wts_{\tau(j)}}{\mathbf{v}_j^T \nabla F(\wts_{\tau(j)}) }} \\
& \hspace{2cm} + \Esub{\wts_{\tau(j)}}{\Esub{\xi_j|\wts_{\tau(j)}}{|| \nabla F(\wts_{\tau(j)})||_2^2 }}\\
& = \E{||\mathbf{v}_j||_2^2} - 2\E{||\nabla F(\wts_{\tau(j)})  ||_2^2}  + \E{||\nabla F(\wts_{\tau(j)})  ||_2^2 } \\
& = \E{||\mathbf{v}_j||_2^2} - \E{||\nabla F(\wts_{\tau(j)})  ||_2^2 }.
\end{align*}
\end{proof}
We will also be proving a $K$-worker version of this lemma \Cref{subsec:K_async_proof} to prove \Cref{thm:error kasync}. Now we proceed to provide the proof of \Cref{coro:fixed learning rate}.

\begin{proof}[Proof of \Cref{coro:fixed learning rate}]
\begin{align}
 F(\wts_{j+1})   \leq & F(\wts_{j}) + (\wts_{j+1}-\wts_{j} )^T \nabla F(\wts_{j})    + \frac{L}{2} ||\wts_{j+1}-\wts_{j}   ||_2^2 
 \nonumber \\
 = & F(\wts_{j}) + (-\eta \mathbf{v}_j )^T \nabla F(\wts_{j}) + \frac{L \eta^2}{2} || \mathbf{v}_j   ||_2^2 
 \nonumber \\
 = & F(\wts_{j}) - \frac{\eta}{2}||\nabla F(\wts_{j})||_2^2 - \frac{\eta}{2}||\mathbf{v}_j||_2^2 
 + \frac{\eta}{2}||\nabla F(\wts_{j})- \mathbf{v}_j ||_2^2 + \frac{L\eta^2}{2}||\mathbf{v}_j||_2^2. 
 \label{lipschitz condition}
\end{align}

Here the last line follows from $2\bm{a}^T\bm{b} = ||\bm{a}||_2^2 + ||\bm{b}||_2^2 - ||\bm{a}-\bm{b}||_2^2 $. Taking expectation,
\begin{align}
\E{F(\wts_{j+1})} 
 &\leq \E{F(\wts_{j})} - \frac{\eta}{2}\E{||\nabla F(\wts_{j})||_2^2}  - \frac{\eta}{2}\E{||\mathbf{v}_j||_2^2} + \frac{\eta}{2}\E{||\nabla F(\wts_{j})- \mathbf{v}_j ||_2^2}+ \frac{L\eta^2}{2}\E{||\mathbf{v}_j||_2^2} 
 \nonumber\\
& \overset{(a)}{\leq}  \E{F(\wts_{j})} - \frac{\eta}{2}\E{||\nabla F(\wts_{j})||_2^2} - \frac{\eta}{2}\E{||\mathbf{v}_j||_2^2}  
 + \frac{ \eta}{2} \E{|| \mathbf{v}_j||_2^2} - \frac{\eta}{2}\E{||\nabla F(\wts_{\tau(j)})||_2^2} 
\nonumber \\
&\hspace{3cm} +   \frac{\eta}{2}\E{|| \nabla F(\wts_{j}) - \nabla F(\wts_{\tau(j)})||_2^2 } + \frac{L\eta^2}{2}\E{||\mathbf{v}_j||_2^2}. 
\label{termdiff} 
\end{align}
Here, (a) follows from Lemma \ref{lem:bias-variance} that we just derived. Now, again bounding from \eqref{termdiff}, we have
\begin{align}
\E{F(\wts_{j+1})} & \overset{(b)}{\leq} \E{F(\wts_{j})} - \frac{\eta}{2}\E{||\nabla F(\wts_{j})||_2^2}  - \frac{\eta}{2}\E{||\nabla F(\wts_{\tau(j)})||_2^2}  
 + \frac{\eta}{2} \gamma \E{|| \nabla F(\wts_{j})||_2^2 } 
 \nonumber \\
& \hspace{2cm}
 + \frac{L\eta^2}{2}\E{||\mathbf{v}_j||_2^2} \nonumber \\
&\overset{(c)}{\leq} \E{F(\wts_{j})} - \frac{\eta}{2}(1-\gamma)\E{||\nabla F(\wts_{j})||_2^2} + \frac{L\eta^2\sigma^2}{2m}  
\nonumber \\ 
& \hspace{2cm} - \frac{\eta}{2} \left(1 -L\eta(\frac{ M_G}{m}+1)  \right)\E{||\nabla F(\wts_{\tau(j)})||_2^2} 
\nonumber \\
& \overset{(d)}{\leq} \E{F(\wts_{j}) } - \frac{\eta}{2}(1-\gamma)\E{||\nabla F(\wts_{j})||_2^2 } + \frac{L\eta^2\sigma^2}{2m} 
 - \frac{\eta}{4}\E{||\nabla F(\wts_{\tau(j)})||_2^2 } 
\nonumber \\
& \overset{(e)}{\leq} \E{F(\wts_{j}) } - \frac{\eta}{2}(1-\gamma)\E{||\nabla F(\wts_{j})||_2^2 } + \frac{L\eta^2\sigma^2}{2m} 
 - \frac{\eta}{4}p_0\E{||\nabla F(\wts_{j})||_2^2 }. \label{recursion}
\end{align}

Here  (b) follows from the statement of the theorem that $$\E{|| \nabla F(\wts_{j}) - \nabla F(\wts_{\tau(j)})||_2^2 } \leq \gamma \E{|| \nabla F(\wts_{j}) ||_2^2 }$$ for some constant $\gamma \leq 1$.
The next step (c) follows from Assumption 4 in \Cref{sec:system model} which lead to   $$\E{|| \mathbf{v}_j ||_2^2  } \leq \frac{\sigma^2}{m} + \left(\frac{M_G}{m}+1 \right)\E{|| \nabla F(\wts_{\tau(j)}) ||_2^2 }   .$$
Step (d) follows from choosing $\eta < \frac{1}{2L(\frac{M_G}{m} + 1)} $ and finally (e) follows from Lemma \ref{lem:delay}. 

Now one might recall that the function $F(w)$ was defined to be  strongly convex with parameter $c$. 

Using the standard result of strong-convexity \cref{eq:strong-convexity} in \eqref{recursion}, we obtain the following result:
 \begin{align*}
\E{F(\wts_{j+1})}&-F^*  \leq  \frac{\eta^2L\sigma^2}{2m} + (1 - \eta c(1-\gamma + \frac{p_0}{2})) (\E{F(\wts_{j})}-F^*   ).   
\end{align*}
Let us denote $\gamma'= (1-\gamma + \frac{p_0}{2}) $. Then, using the above recursion, we thus have,
\begin{align*}
& \E{F(\wts_{J})} -F^* 
\leq \frac{\eta L\sigma^2}{2c\gamma' m}   + (1 - \eta \gamma'c)^J (\E{F(\wts_{0})}-F^* - \frac{\eta L\sigma^2}{2c\gamma' m}  ).
\end{align*}%\vspace{-0.5cm}
\end{proof}

%\clearpage
\subsection{K-async SGD under fixed learning rate}
\label{subsec:K_async_proof}
In this subsection, we provide a proof of \Cref{thm:error kasync}.

Before we proceed to the proof of this theorem, we first extend our Assumption 4 from the variance of a single stochastic gradient to sum of stochastic gradients in the following Lemma.

\begin{lem}
\label{lem:assumption4}If the variance of the stochastic updates is bounded as $$\Esub{\xi_{j}|\wts_{\tau{l,j}}}{||g(\wts_{\tau(l,j)},\xi_{l,j})- \nabla F(\wts_{\tau(l,j)}) ||_2^2} \nonumber \\
\leq \frac{\sigma^2}{m} + \frac{M_G}{m} ||\nabla F(\wts_{\tau(l,j)}) ||_2^2 \ \forall \ \tau(l,j) \leq j,$$ then for $K$-async, the variance of the sum of stochastic updates given all the parameter values $\wts_{\tau(l,j)}$ is also bounded as follows:%\vspace{-0.5cm}
\begin{align}
&\Esub{\xi_{1,j},\dots,\xi_{K,j}|\wts_{\tau(1,j)} \dots \wts_{\tau(K,j)}}{||\sum_{l=1}^K g(\wts_{l,j}, \xi_{l,j})||_2^2} \nonumber \\ & \leq 
\frac{K\sigma^2}{m} + \left(\frac{M_G}{m}+K\right)||\sum_{l=1}^K \nabla F(\wts_{\tau(l,j)})  ||_2^2.
\end{align}
\end{lem}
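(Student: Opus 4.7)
The plan is to combine a bias--variance decomposition of $\sum_{l=1}^K g(\wts_{\tau(l,j)}, \xi_{l,j})$ with the conditional independence of the stochastic noises across different workers. Since the mini-batches $\xi_{l,j}$ are drawn independently across $l$, the estimators $g_l \coloneqq g(\wts_{\tau(l,j)}, \xi_{l,j})$ are mutually independent given all of $\wts_{\tau(1,j)}, \dots, \wts_{\tau(K,j)}$, each with conditional mean $m_l \coloneqq \nabla F(\wts_{\tau(l,j)})$ by Assumption 3.

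First, I would write $g_l = m_l + \delta_l$ with $\E{\delta_l \mid \wts_{\tau(l,j)}} = 0$, and expand
\begin{align*}
\left\lVert\sum_{l=1}^K g_l\right\rVert_2^2
= \left\lVert\sum_{l=1}^K m_l\right\rVert_2^2
+ 2\left(\sum_{l=1}^K m_l\right)^{\!T}\!\left(\sum_{l'=1}^K \delta_{l'}\right)
+ \left\lVert\sum_{l=1}^K \delta_l\right\rVert_2^2.
\end{align*}
Taking conditional expectation given $\wts_{\tau(1,j)}, \dots, \wts_{\tau(K,j)}$, the cross term vanishes because each $\E{\delta_l \mid \wts_{\tau(l,j)}}=0$ and the $m_l$'s are measurable with respect to the conditioning.

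Second, I would handle the pure-variance term $\E{\lVert \sum_l \delta_l\rVert_2^2}$ by expanding it into diagonal and off-diagonal pieces. For $l \neq l'$, the cross term $\E{\delta_l^T \delta_{l'}}$ equals $\E{\E{\delta_l \mid \cdot}^T\E{\delta_{l'} \mid \cdot}} = 0$ by conditional independence of $\xi_{l,j}$ and $\xi_{l',j}$ together with unbiasedness. This leaves $\sum_l \E{\lVert\delta_l\rVert_2^2}$, and applying Assumption 4 to each worker independently gives an upper bound of $\tfrac{K\sigma^2}{m} + \tfrac{M_G}{m}\sum_l \lVert m_l\rVert_2^2$.

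Third, to repackage the result into the form claimed, I would invoke the elementary inequality $\lVert\sum_l m_l\rVert_2^2 \leq K\sum_l\lVert m_l\rVert_2^2$ (Cauchy--Schwarz) to combine the surviving $\lVert\sum_l m_l\rVert_2^2$ term with the $\sum_l\lVert m_l\rVert_2^2$ variance residual into a single expression with coefficient $\tfrac{M_G}{m} + K$ multiplying a common gradient-norm factor, matching the $K$-worker extension of Assumption 4 stated in the lemma.

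The main subtlety will be book-keeping of the conditioning: the parameter values $\wts_{\tau(l,j)}$ themselves may be statistically dependent through the joint delay process, so one must be careful that conditioning on \emph{all} of them simultaneously is what decouples the $\delta_l$'s. This is exactly what makes the independent-mini-batch sampling assumption do its work, and is the analog of the step used in Lemma~\ref{lem:bias-variance} for the single-worker case, just applied vector-componentwise across the $K$ sums.
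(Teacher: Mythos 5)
Your proposal is correct and follows essentially the same route as the paper's own proof: a bias--variance split of $\sum_{l=1}^K g(\wts_{\tau(l,j)},\xi_{l,j})$ into conditional means plus zero-mean noises, cancellation of all cross terms by conditioning on $\wts_{\tau(1,j)},\dots,\wts_{\tau(K,j)}$ together with unbiasedness and independence of the mini-batches, the per-worker variance bound summed over $l$, and finally $\lVert\sum_l \nabla F(\wts_{\tau(l,j)})\rVert_2^2 \le K\sum_l\lVert\nabla F(\wts_{\tau(l,j)})\rVert_2^2$. One remark: just as in the paper's own derivation, the bound this yields is in terms of $\sum_{l}\lVert\nabla F(\wts_{\tau(l,j)})\rVert_2^2$ rather than the $\lVert\sum_l \nabla F(\wts_{\tau(l,j)})\rVert_2^2$ written in the lemma statement (the former is what is actually used in the proof of \Cref{thm:error kasync}), so that mismatch lies in the statement, not in your argument.
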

\begin{proof}
First let us consider the expectation of any cross term such that $l \neq l'$. For the ease of writing, let $\Omega= \{ \wts_{\tau(1,j)} \dots \wts_{\tau(K,j)} \} $. 

Now observe the conditional expectation of the cross term as follows:%\vspace{-0.2cm}
\begin{align}
&\mathbb{E}_{\xi_{1,j},\dots,\xi_{K,j}| \Omega}[(g(\wts_{l,j}, \xi_{l,j})-\nabla F(\wts_{\tau(l,j)}))^T  ((g(\wts_{l',j}, \xi_{l',j})-\nabla F(\wts_{\tau(l',j)}))] \nonumber 
\\
& = \mathbb{E}_{\xi_{l,j},\xi_{l',j}| \Omega}[(g(\wts_{l,j}, \xi_{l,j})-\nabla F(\wts_{\tau(l,j)}))^T ((g(\wts_{l',j}, \xi_{l',j})-\nabla F(\wts_{\tau(l',j)}))] \nonumber 
\\
& = \mathbb{E}_{\xi_{l',j}| \Omega }[\mathbb{E}_{\xi_{l,j}|\xi_{l',j},\Omega}[ (g(\wts_{l,j}, \xi_{l,j})-\nabla F(\wts_{\tau(l,j)}))^T] (g(\wts_{l',j}, \xi_{l',j})-\nabla F(\wts_{\tau(l',j)})] \nonumber 
\\
& = \mathbb{E}_{\xi_{l',j}| \Omega } [0^T  (g(\wts_{l',j}, \xi_{l',j})-\nabla F(\wts_{\tau(l',j)})] 
=0.
\end{align}

Thus the cross terms are all $0$. So the expression simplifies as,
\begin{align}
 & \Esub{\xi_{1,j},\dots,\xi_{K,j}|\Omega}{||\sum_{l=1}^K g(\wts_{l,j}, \xi_{l,j}) - F(\wts_{\tau(l,j)})||_2^2} \nonumber \\
& \overset{(a)}{=} \sum_{l=1}^K \Esub{\xi_{1,j},\dots,\xi_{K,j}|\Omega}{||g(\wts_{l,j}, \xi_{l,j})  -F(\wts_{\tau(l,j)})||_2^2} \nonumber \\
& \leq \sum_{l=1}^K \frac{\sigma^2}{m} + \frac{M_G}{m} ||\nabla F(\wts_{\tau(l,j)}) ||_2^2.
\end{align}
Thus,
\begin{align}
&\Esub{\xi_{1,j},\dots,\xi_{K,j}|\Omega}{||\sum_{l=1}^K g(\wts_{l,j}, \xi_{l,j})||_2^2} \nonumber 
\\ 
& = \Esub{\xi_{1,j},\dots,\xi_{K,j}|\Omega}{||\sum_{l=1}^K g(\wts_{l,j}, \xi_{l,j}) - F(\wts_{\tau(l,j)}) ||_2^2} 
+ \Esub{\xi_{1,j},\dots,\xi_{K,j}|\Omega}{||\sum_{l=1}^K  F(\wts_{\tau(l,j)}) ||_2^2} 
\nonumber 
\\
& \leq \frac{K\sigma^2}{m} + \sum_{l=1}^K \frac{M_G}{m}||  F(\wts_{\tau(l,j)}) ||_2^2 + ||\sum_{l=1}^K  F(\wts_{\tau(l,j)}) ||_2^2 \nonumber \\
&\leq \frac{K\sigma^2}{m} + \sum_{l=1}^K \frac{M_G}{m}||  F(\wts_{\tau(l,j)}) ||_2^2 + \sum_{l=1}^K K||  F(\wts_{\tau(l,j)}) ||_2^2.
\end{align}
\end{proof}

Now we return to the proof of the theorem.

\begin{proof}[Proof of \Cref{thm:error kasync}] Let  $\mathbf{v}_j= \frac{1}{K} \sum_{l=1}^K g(\wts_{l,j}, \xi_{l,j})$. Following steps similar to the Async-SGD proof, from Lipschitz continuity we have the following.
\begin{align}
 F(\wts_{j+1})  & \leq F(\wts_{j}) + (\wts_{j+1}-\wts_{j} )^T \nabla F(\wts_{j})  
  + \frac{L}{2} ||\wts_{j+1}-\wts_{j}   ||_2^2 
 \nonumber \\
 = & F(\wts_{j}) - \frac{\eta}{K} \sum_{l=1}^K g(\wts_{l,j}, \xi_{l,j})^T \nabla F(\wts_{j}) 
+ \frac{L}{2} ||\eta \mathbf{v}_j   ||_2^2  \nonumber 
 \\
 \overset{(a)}{=} & F(\wts_{j}) -\frac{\eta}{2K} \sum_{l=1}^K ||\nabla F(\wts_{j})||_2^2 
- \frac{\eta}{2K} \sum_{l=1}^K ||g(\wts_{l,j}, \xi_{l,j})||_2^2
 \nonumber \\
 &  \hspace{2cm} + \frac{\eta}{2K} \sum_{l=1}^K ||g(\wts_{l,j}, \xi_{l,j}) ||_2^2
 - \frac{\eta}{2K} \sum_{l=1}^K || \nabla F(\wts_{j})  ||_2^2 
 + \frac{L \eta^2}{2} || \mathbf{v}_j   ||_2^2 
 \nonumber \\
 = & F(\wts_{j}) - \frac{\eta}{2}||\nabla F(\wts_{j})||_2^2 
 - \frac{\eta}{2K} \sum_{l=1}^K ||g(\wts_{l,j}, \xi_{l,j})||_2^2
 \nonumber \\
 & \hspace{2cm}
 + \frac{\eta}{2K} \sum_{l=1}^K ||g(\wts_{l,j}, \xi_{l,j}) - \nabla F(\wts_{j})  ||_2^2 
 + \frac{L \eta^2}{2} || \mathbf{v}_j   ||_2^2. 
\end{align}

Here (a) follows from $2\bm{a}^T\bm{b} = ||\bm{a}||_2^2 + ||\bm{b}||_2^2 - ||\bm{a}-\bm{b}||_2^2 $. 
Taking expectation,
\begin{align}
\E{F(\wts_{j+1})} 
 &\leq \E{F(\wts_{j})} - \frac{\eta}{2}\E{||\nabla F(\wts_{j})||_2^2}   - \frac{\eta}{2K} \sum_{l=1}^K \E{||g(\wts_{l,j}, \xi_{l,j})||_2^2} \nonumber\\
 & \hspace{2cm} + \frac{\eta}{2K} \sum_{l=1}^K \E{||\nabla F(\wts_{j})- g(\wts_{l,j}, \xi_{l,j}) ||_2^2}
+ \frac{L\eta^2}{2}\E{||\mathbf{v}_j||_2^2} 
 \nonumber
 \\
& \overset{(a)}{\leq}  \E{F(\wts_{j})} - \frac{\eta}{2}\E{||\nabla F(\wts_{j})||_2^2} 
- \frac{\eta}{2K} \sum_{l=1}^K \E{||g(\wts_{l,j}, \xi_{l,j})||_2^2}
\nonumber \\
&\hspace{2cm}  + \frac{ \eta}{2K}  \sum_{l=1}^K  \E{|| g(\wts_{l,j}, \xi_{l,j})||_2^2}   -\frac{ \eta}{2K}  \sum_{l=1}^K  \E{||\nabla F(\wts_{\tau(l,j)})||_2^2} 
 \nonumber \\
&\hspace{2cm} +   \frac{\eta}{2K}\sum_{l=1}^K \E{|| \nabla F(\wts_{j}) 
 - \nabla F(\wts_{\tau(l,j)})||_2^2 }
 + \frac{L\eta^2}{2}\E{||\mathbf{v}_j||_2^2} 
 \\
& \overset{(b)}{\leq} \E{F(\wts_{j})} - \frac{\eta}{2}\E{||\nabla F(\wts_{j})||_2^2}  - \frac{\eta}{2K}\sum_{l=1}^K \E{||\nabla F(\wts_{\tau(l,j)})||_2^2}  
 + \frac{\eta}{2} \gamma \E{|| \nabla F(\wts_{j})||_2^2 } 
 \nonumber \\
&\hspace{4cm}
+ \frac{L\eta^2}{2}\E{||\mathbf{v}_j||_2^2} \nonumber \\
&\overset{(c)}{\leq} \E{F(\wts_{j})} - \frac{\eta}{2}(1-\gamma)\E{||\nabla F(\wts_{j})||_2^2} + \frac{L\eta^2\sigma^2}{2Km}  
\nonumber \\ 
& \hspace{2cm} - \frac{\eta}{2K} \sum_{l=1}^K \left(1 -L\eta \left(\frac{ M_G}{Km}+\frac{1}{K}\right)  \right)\E{||\nabla F(\wts_{\tau(l,j)})||_2^2} 
\nonumber \\
& \overset{(d)}{\leq} \E{F(\wts_{j}) } - \frac{\eta}{2}(1-\gamma)\E{||\nabla F(\wts_{j})||_2^2 } + \frac{L\eta^2\sigma^2}{2Km} 
 - \frac{\eta}{4K}\sum_{l=1}^K \E{||\nabla F(\wts_{\tau(l,j)})||_2^2 } 
\nonumber \\
& \overset{(e)}{\leq} \E{F(\wts_{j}) } - \frac{\eta}{2}(1-\gamma)\E{||\nabla F(\wts_{j})||_2^2 } + \frac{L\eta^2\sigma^2}{2Km} 
 - \frac{\eta}{4}p_0\E{||\nabla F(\wts_{j})||_2^2 }. 
\label{recursion2}
\end{align}

Here step (a) follows from \Cref{lem:bias-variance} and step (b) follows from the assumption that $$\E{|| \nabla F(\wts_{j}) - \nabla F(\wts_{\tau(l,j)})||_2^2 } \leq \gamma \E{|| \nabla F(\wts_{j}) ||_2^2 }$$ for some constant $\gamma \leq 1$. The next step (c) follows from the \Cref{lem:assumption4} that bounds the variance of the sum of stochastic gradients. Step (d) follows from choosing $\eta < \frac{1}{2L(\frac{M_G}{Km} + \frac{1}{K})} $ and finally (e) follows from \Cref{lem:delay} in \Cref{sec:main_async_fixed} that says $ \E{||\nabla F(\wts_{\tau(l,j)})||_2^2} \geq p_0\E{||\nabla F(\wts_{j})||_2^2}$ for some non-negative constant $p_0$ which is a lower bound on the conditional probability that $\tau(l,j)=j$ given all past delays and parameter values. 

Finally, since $F(\wts)$ is strongly convex, using the inequality $2c(F(\wts)-F^*) \leq ||\nabla F(\wts)  ||_2^2$ in \eqref{recursion2}, we finally obtain the desired result.
\end{proof}

\subsection{Extension to Non-Convex case}
\label{subsec:nonconvex}
\begin{proof}
Recall the recursion derived in the last proof in \eqref{recursion2}. After re-arrangement, we obtain the following:
\begin{align}
\E{||\nabla F(\wts_j)   ||_2^2} & \leq \frac{2(\E{F(\wts_{j})}-\E{F(\wts_{j+1}}))}{\eta \gamma'} + \frac{L\eta\sigma^2}{Km\gamma'}.
\end{align}

Taking summation from $j=0$ to $j=J-1$, we get,
\begin{align}
\frac{1}{J}  \sum_{j=0}^{J-1} \E{|| \nabla F(\wts_j)   ||_2^2}    & \leq\frac{2(\E{F(\wts_0)}-\E{F(\wts_J)})}{J \eta \gamma' } +  \frac{L\eta\sigma^2}{Km\gamma'} \nonumber \\
& \overset{(a)}{\leq}\frac{2(F(\wts_0)-F^*)}{J \eta \gamma' } +  \frac{L\eta\sigma^2}{Km\gamma'}.
\end{align}
Here (a) follows since we assume $\wts_0$ to be known and also from   $\E{F(\wts_J)} \geq F^*$.

\end{proof}

\section{AdaSync: Derivation of optimal $K$ for achieving the best error-runtime trade-off}
\label{sec:adasync_derivation}
\subsection{K-sync SGD} For general distributions, we are required to solve for $$\frac{d u(K)}{d K} =  \frac{2(F(\wts_{start}))}{t\eta }\frac{d \E{X_{K:P}}}{d K} - \frac{L\eta \sigma^2}{K^2m}=0.$$ For the case of exponential or shifted-exponential distributions, this reduces to a quadratic equation in $K$ as follows:
$$ \left(\frac{2(F(\wts_{start}))}{t\eta } \right) K^2 = \frac{L\eta \sigma^2 \mu}{m} (P-K).$$
To actually solve for this equation, we would need the values of the Lipschitz constant, variance of the gradient etc. which are not always available. So, we propose a heuristic here. 

Observe that, the larger is $F(\wts_{start})$, the smaller is the value of $K$ required to minimize $u(K)$. We assume that $F(\wts_{start})$ is maximum at the beginning of training, \textit{i.e.}, when $\wts_{start}=\wts_{0}$. Hence we start with the smallest initial $K$,~e.g., $K_0=1$. Subsequently, after every time interval $t$, we choose $K$ as follows:
$$ \frac{K^2}{P-K} = \frac{K_0^2}{P-K_0} \frac{F(\wts_{0})}{F(\wts_{start})} ,$$
where we assume that $K_0$ satisfies $\left(\frac{2(F(\wts_{0}))}{t\eta } \right) K_0^2 = \frac{L\eta \sigma^2 \mu}{m} (P-K_0)$. Thus, to summarize, our method for gradually increasing $K$ is as follows:
\begin{itemize}
\item Start with an initial $K_0$ (typically $1$).
\item After time $t$, update $K$ by solving the quadratic equation:
$K^2 =  \frac{K_0^2 F(\wts_{0})}{(P-K_0)F(\wts_{start})} (P-K).$
\end{itemize}

We also verify that the second derivative is positive, \textit{i.e.},
$$\frac{d^2 u(K)}{d K^2} =  \frac{2(F(\wts_{start}))}{t\eta (P-K)^2 } + \frac{2L\eta \sigma^2}{K^3m} >0 .$$
\subsection{K-batch-sync SGD} 
For general distributions, if we assume $\E{T} \approx K \E{X_{1:P}}$ (see \Cref{lem:runtime kbatch sync}) we are required to solve for $$\frac{d u(K)}{d K} =  \frac{2(F(\wts_{start}))}{t\eta }\frac{d K \E{X_{1:P}}}{d K} - \frac{L\eta \sigma^2}{K^2m}=0.$$ This leads to
$$   K^2 =  \frac{L\eta \sigma^2t\eta}{2m(F(\wts_{start}))\E{X_{1:P}}}.$$

The interesting observation is that even if we do not know the constants or $\E{X_{1:P}}$, we can still use the trick that we used in the previous case. We can start with the smallest initial $K$, e.g., $K_0=1$. Then, after each time interval $t$, we update $K$ by solving for
$$ K^2 = K_0^2 \frac{F(\wts_{0})}{F(\wts_{start})}.$$
Thus, our method of gradually varying synchronicity is as follows:
\begin{itemize}
\item Start with an initial $K_0$ (typically $1$).
\item After time $t$, update $K$ as follows: $K = K_0\sqrt{ \frac{ F(\wts_{0})}{F(\wts_{start})} }.$
\end{itemize}
We also verify that the second derivative is positive, \textit{i.e.},
$$\frac{d^2 u(K)}{d K^2} =   \frac{2L\eta \sigma^2}{K^3m} >0 .$$

\subsection{K-async SGD} For general distributions, the runtime of $K$-async is upper bounded by that of $K$-sync (see \Cref{lem:runtime kasync}). For exponential distributions, the two become equal and an algorithm similar to $K$-sync works. Here, we examine the interesting case of shifted-exponential distribution. We approximate $\E{T}\approx \frac{K \Delta}{P} + \frac{K \log{P}}{P \mu}$ (see \Cref{lem:tight kasync}). This leads to
$$K^2=   \frac{L\eta \sigma^2t\eta P \mu }{2m(F(\wts_{start}))(\Delta \mu + \log{P} )  }.$$ 
Thus, we could start with a small $K_0$ and after each time interval $t$, we can update $K$ by solving for
$$ K^2 = K_0^2 \frac{F(\wts_{0})}{F(\wts_{start})},$$ in a manner similar to $K$-batch-sync. We can also verify that the second derivative is positive, \textit{i.e.},
$$\frac{d^2 u(K)}{d K^2} =   \frac{2L\eta \sigma^2}{K^3m} >0 .$$

\subsection{K-batch-async SGD} For any general distribution, the runtime is given by $\E{T}= \frac{K \E{X}}{P}$. This leads to
$$K^2=   \frac{L\eta \sigma^2t\eta P  }{2m(F(\wts_{start}))(\E{X} )  }.$$ 
Again, similar to the previous case, we could start with a small $K_0$ and after each time interval $t$, we can update $K$ by solving for
$$ K^2 = K_0^2 \frac{F(\wts_{0})}{F(\wts_{start})}.$$  We can also verify that the second derivative is positive, \textit{i.e.},
$$\frac{d^2 u(K)}{d K^2} =   \frac{2L\eta \sigma^2}{K^3m} >0 .$$

\section{Additional Experimental Results}
\label{app:additional_experimental_results}

Here, we include some additional experimental results to complement the results in the main paper.

\begin{figure}[!htbp]
    \centering
    \begin{subfigure}{.33\textwidth}
    \centering
    \includegraphics[width=\textwidth]{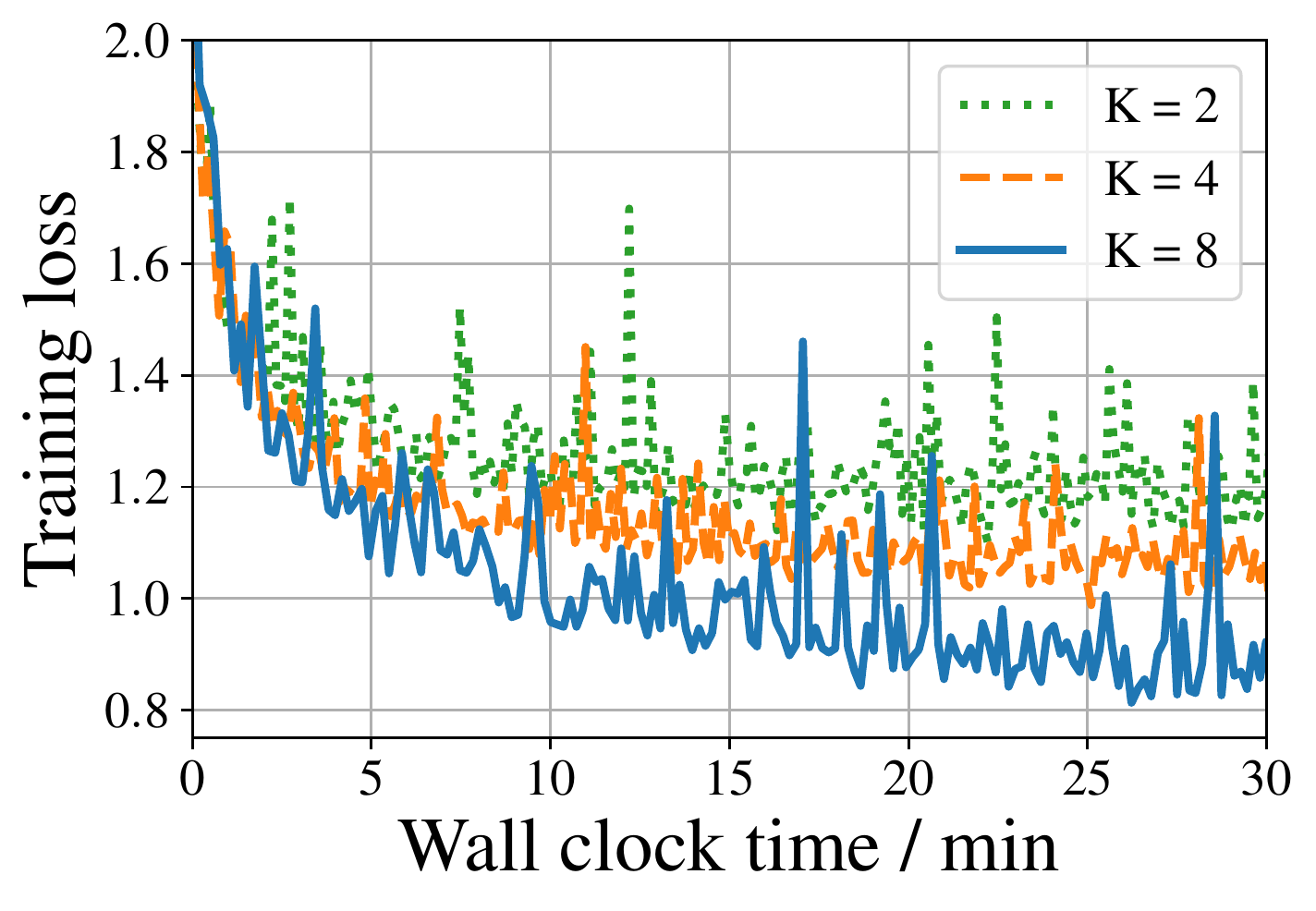}
    \caption{No artificial delay.}
    \label{fig:cifar_sync_loss_0}
    \end{subfigure}%
    ~
    \begin{subfigure}{.33\textwidth}
    \centering
    \includegraphics[width=\textwidth]{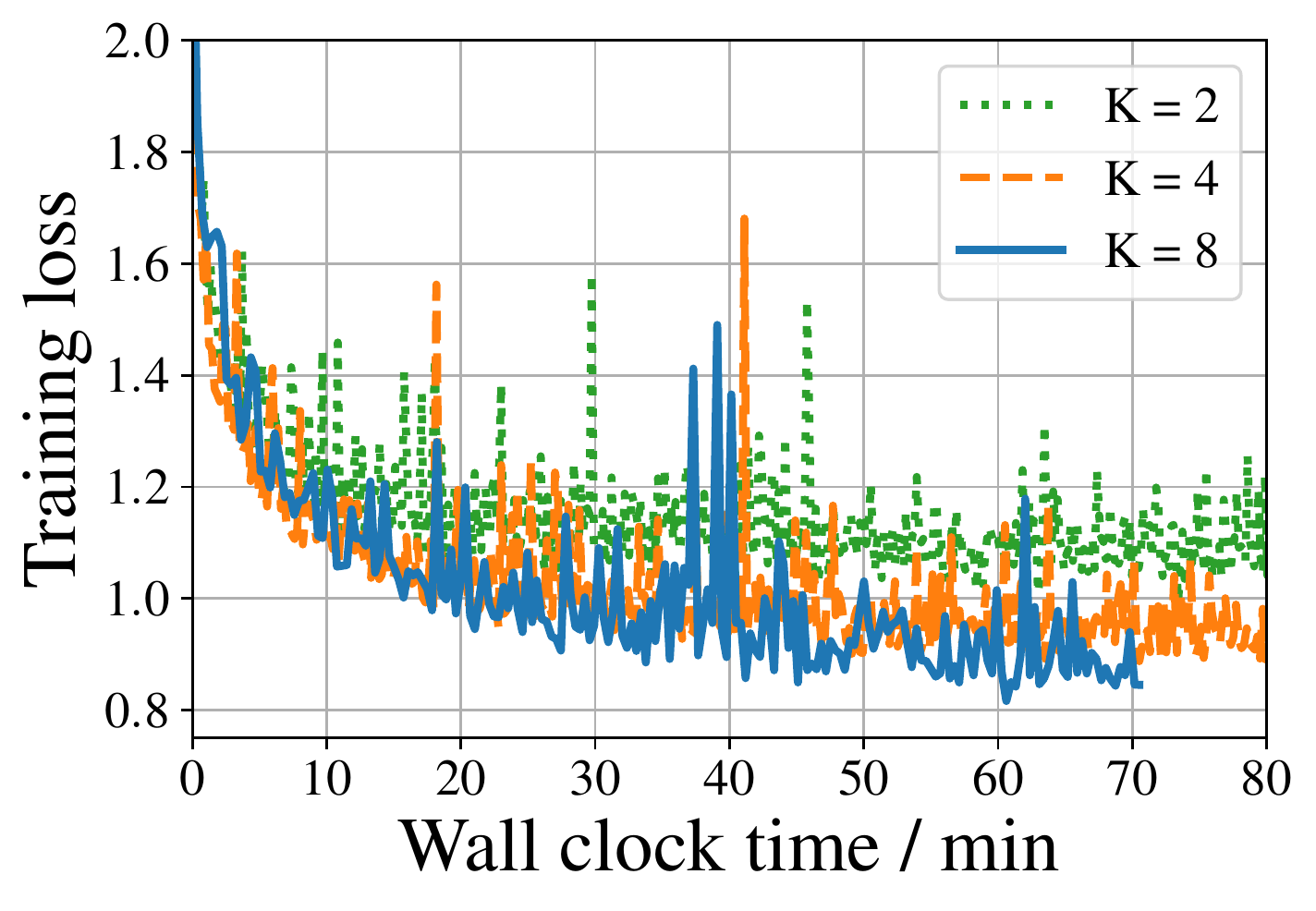}
    \caption{Delay with mean 0.02s.}
    \label{fig:cifar_sync_loss_2}
    \end{subfigure}%
    ~
    \begin{subfigure}{.33\textwidth}
    \centering
    \includegraphics[width=\textwidth]{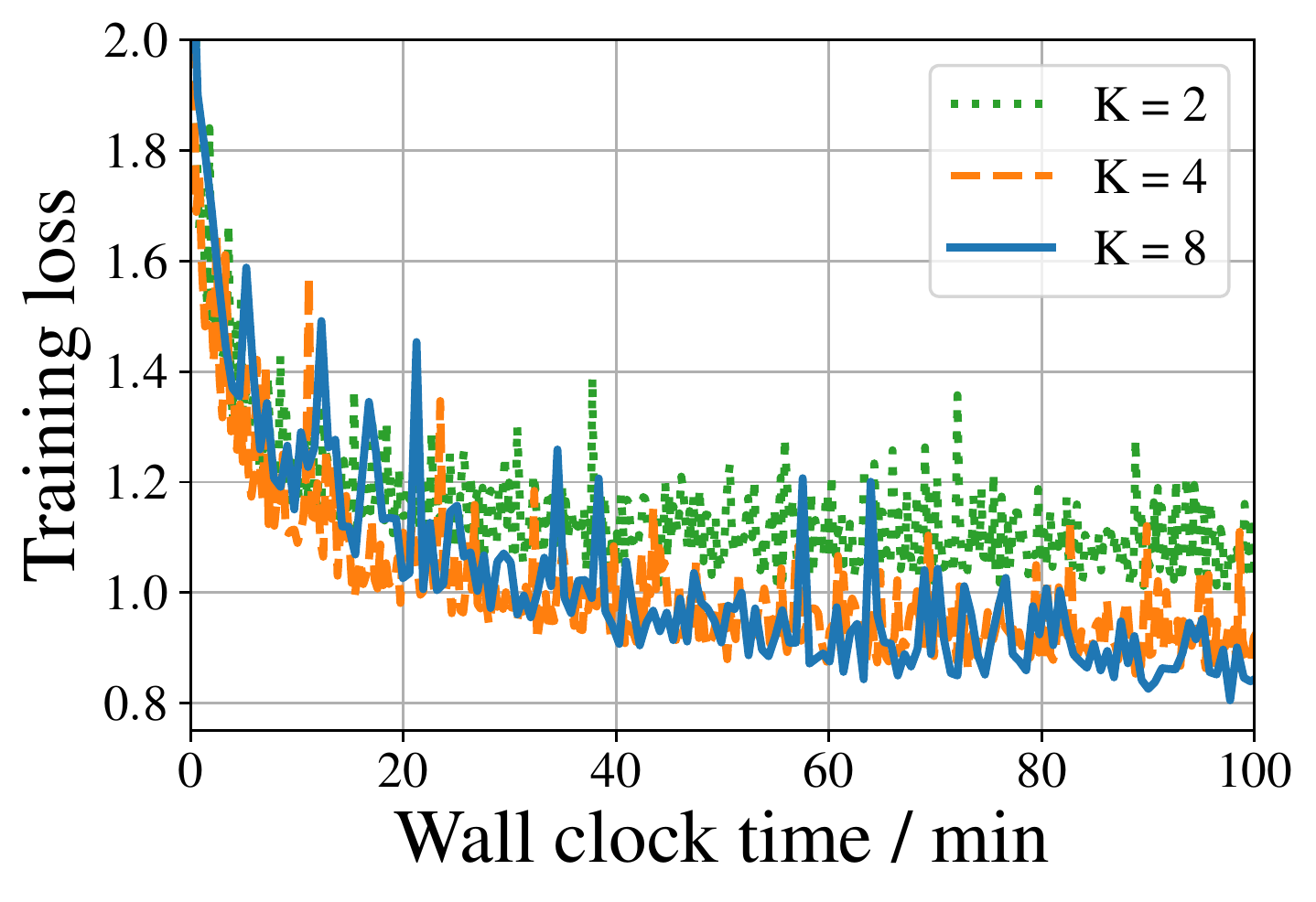}
    \caption{Delay with mean 0.05s.}
    \label{fig:cifar_sync_loss_5}
    \end{subfigure}%
    \caption{Training loss of \textbf{K-sync SGD} on CIFAR-10 with $8$ worker nodes. 
    \label{fig:cifar_ksync_loss}}
\end{figure}

\begin{figure}[!htbp]
    \centering
    \begin{subfigure}{.33\textwidth}
    \centering
    \includegraphics[width=\textwidth]{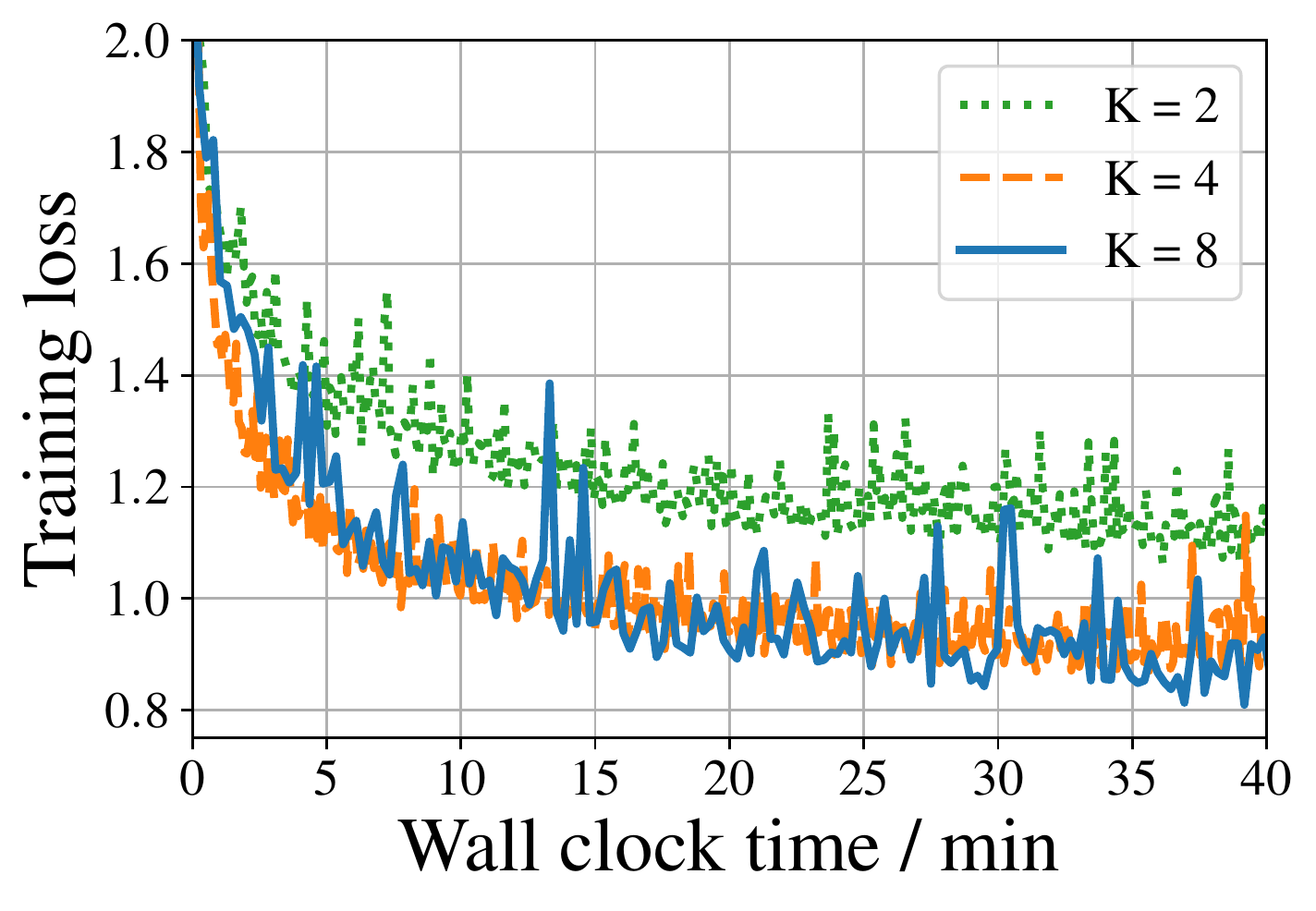}
    \caption{No artificial delay.}
    \label{fig:cifar_async_loss_0}
    \end{subfigure}%
    ~
    \begin{subfigure}{.33\textwidth}
    \centering
    \includegraphics[width=\textwidth]{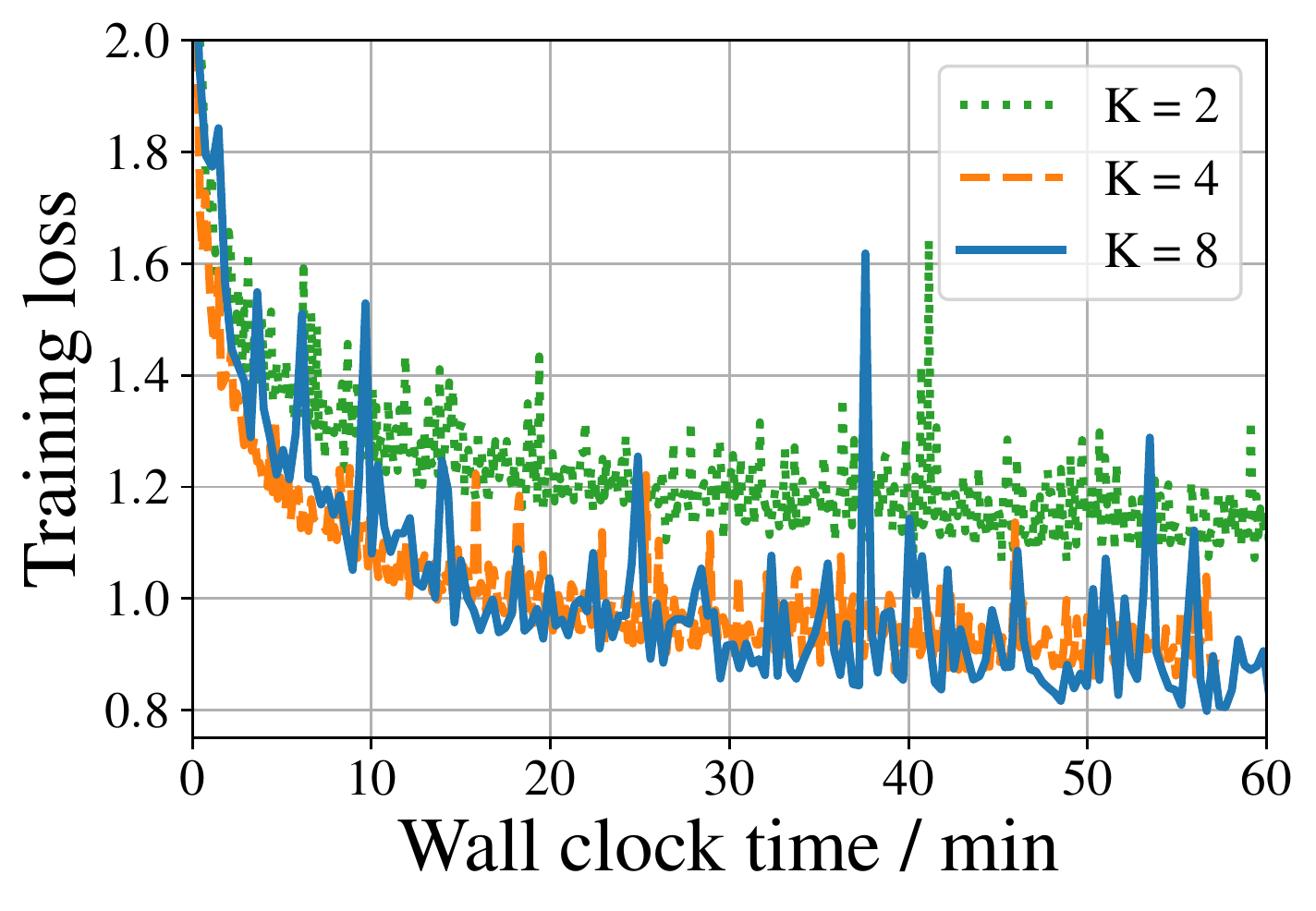}
    \caption{Delay with mean 0.02s.}
    \label{fig:cifar_async_loss_2}
    \end{subfigure}%
    ~
    \begin{subfigure}{.33\textwidth}
    \centering
    \includegraphics[width=\textwidth]{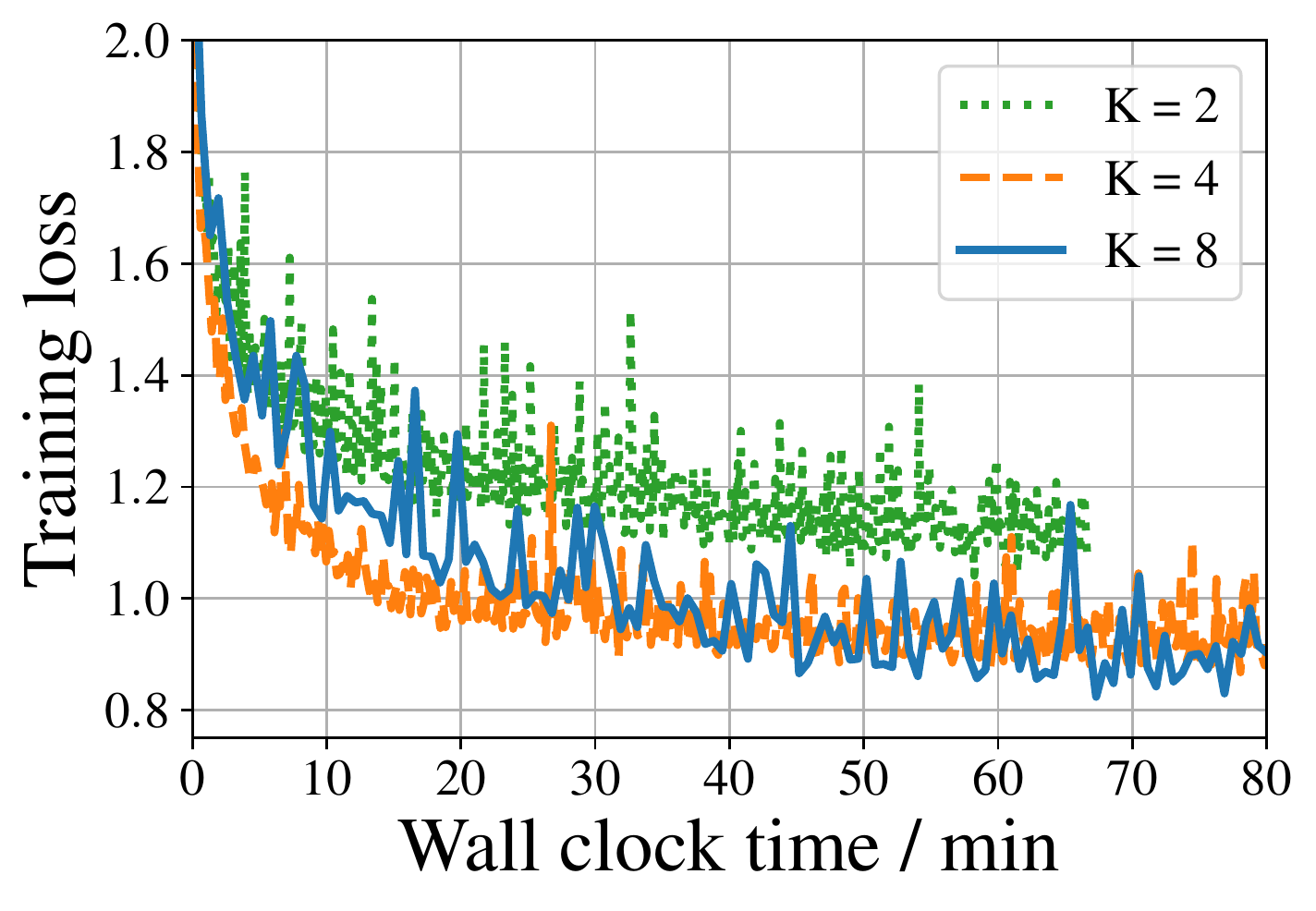}
    \caption{Delay with mean 0.05s.}
    \label{fig:cifar_async_loss_5}
    \end{subfigure}%
    \caption{Training loss of \textbf{K-async SGD} on CIFAR-10 with $8$ worker nodes. 
    \label{fig:cifar_kasync_loss}}
\end{figure}

\end{document}